\documentclass[10pt,twocolumn,letterpaper]{article}

\usepackage{iccv}
\usepackage{times}
\usepackage{epsfig}
\usepackage{graphicx}
\usepackage{amsmath}
\usepackage{amssymb}


\usepackage[pagebackref=true,breaklinks=true,letterpaper=true,colorlinks,bookmarks=false]{hyperref}

\iccvfinalcopy 


\ificcvfinal\pagestyle{empty}\fi

\usepackage[utf8]{inputenc} 
\usepackage[T1]{fontenc}    
\usepackage{url}            
\usepackage{booktabs}       
\usepackage{amsfonts}       
\usepackage{nicefrac}       
\usepackage{microtype}      
\usepackage{xcolor}         
\usepackage{graphicx}
\usepackage{algorithm}
\usepackage{amsmath}
\usepackage{amsthm}
\usepackage{amssymb}
\usepackage{bm}
\usepackage{mathtools}
\usepackage{listings}
\usepackage{color}
\usepackage{caption}

\newcommand{\argmin}{\mathop{\rm arg~min}\limits}

\theoremstyle{definition}
\newtheorem{definition}{Definition}[section]
\newtheorem{theorem}{Theorem}[section]

\newtheorem{lemma}[theorem]{Lemma}

\definecolor{OliveGreen}{rgb}{0.0,0.6,0.0}
\definecolor{Orenge}{rgb}{0.89,0.55,0}
\definecolor{SkyBlue}{rgb}{0.28, 0.28, 0.95}
\lstset{
  language={Python}, 
  basicstyle={\ttfamily},
  identifierstyle={\small},
  commentstyle={\smallitshape},
  keywordstyle={\small\bfseries},
  ndkeywordstyle={\small},
  stringstyle={\small\ttfamily},
  frame={tb},
  breaklines=true,
  columns=[l]{fullflexible},
  numbers=left,
  xrightmargin=1pt,
  xleftmargin=3pt,
  numberstyle={\scriptsize},
  stepnumber=1,
  numbersep=2pt,
  keywordstyle={\color{SkyBlue}},
  commentstyle={\color{OliveGreen}},
  stringstyle=\color{Orenge}
}

\makeatletter
\newcommand{\figcaption}[1]{\def\@captype{figure}\caption{#1}}
\newcommand{\tblcaption}[1]{\def\@captype{table}\caption{#1}}
\makeatother

%

\usepackage[capitalize]{cleveref}
\crefname{section}{Sec.}{Secs.}
\Crefname{section}{Section}{Sections}
\Crefname{table}{Table}{Tables}
\crefname{table}{Tab.}{Tabs.}


\begin{document}

\title{SHIFT15M: Fashion-specific dataset for set-to-set matching with several distribution shifts}

\author{Masanari Kimura\\
ZOZO Research\\
masanari.kimura@zozo.com\\
\and
Takuma Nakamura\\
ZOZO Research\\
takuma.nakamura@zozo.com\\
\and
Yuki Saito\\
ZOZO Research\\
yuki.saito@zozo.com
}
\maketitle

\begin{center}
\centering
    \captionsetup{type=figure}
    \includegraphics[width=0.95\linewidth]{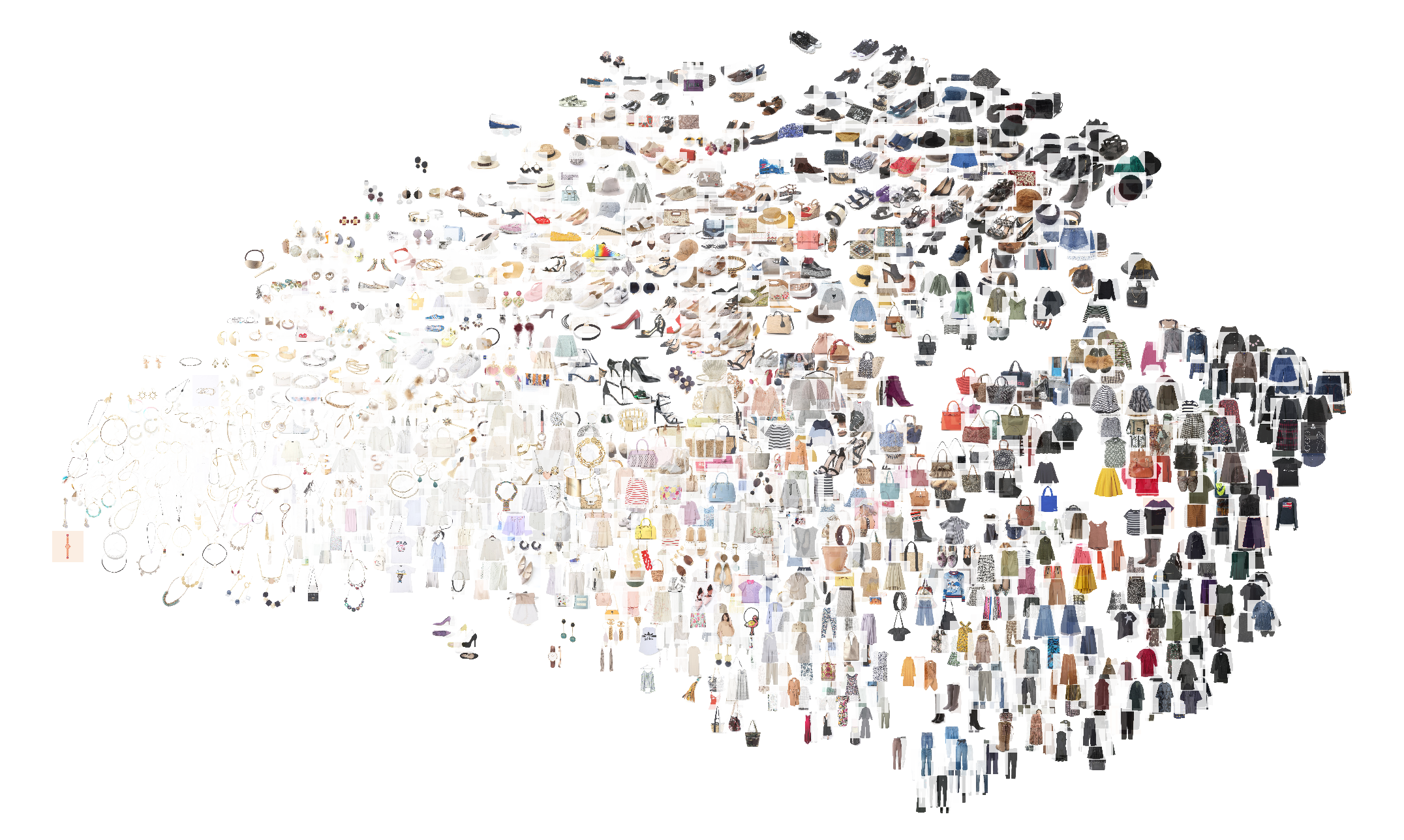}
    \captionof{figure}{t-SNE~\cite{van2008visualizing} visualization for the SHIFT15M.}
    \label{fig:shift15m_tsne}
\end{center}

\begin{abstract}
This paper addresses the problem of set-to-set matching, which involves matching two different sets of items based on some criteria, especially in the case of high-dimensional items like images. Although neural networks have been applied to solve this problem, most machine learning-based approaches assume that the training and test data follow the same distribution, which is not always true in real-world scenarios. To address this limitation, we introduce SHIFT15M, a dataset that can be used to evaluate set-to-set matching models when the distribution of data changes between training and testing. We conduct benchmark experiments that demonstrate the performance drop of naive methods due to distribution shift. Additionally, we provide software to handle the SHIFT15M dataset in a simple manner, with the URL for the software to be made available after publication of this manuscript. We believe proposed SHIFT15M dataset provide a valuable resource for evaluating set-to-set matching models under the distribution shift.
\end{abstract}

\section{Introduction}
One of the key problems for fashion data analysis is set-to-set matching~\cite{saito2020exchangeable,bai2018convolutional,arandjelovic2014discriminative}.
For example, we can consider a task that measures the degree of completion of an outfit by matching sets of clothing items (i.e., for two sets $A = \{\text{hat}, \text{shirt}, \text{skirt}\}$ and $B = \{\text{jacket}, \text{shoes} \}$, the matching score of $A$ and $B$ corresponds to the goodness of the outfit $A\cup B$).
To solve this, we need to investigate neural networks that handle sets~\cite{soelch2019deep,lee2019set,zaheer2017deep,khan2022transformers,tay2022efficient,wagstaff2019limitations,zhang2019deep,wagstaff2022universal}.
We summarize neural networks that deal with sets in Section~\ref{sec:related_works_and_conclusion}.

\begin{figure}
    \centering
    \includegraphics[width=0.95\linewidth]{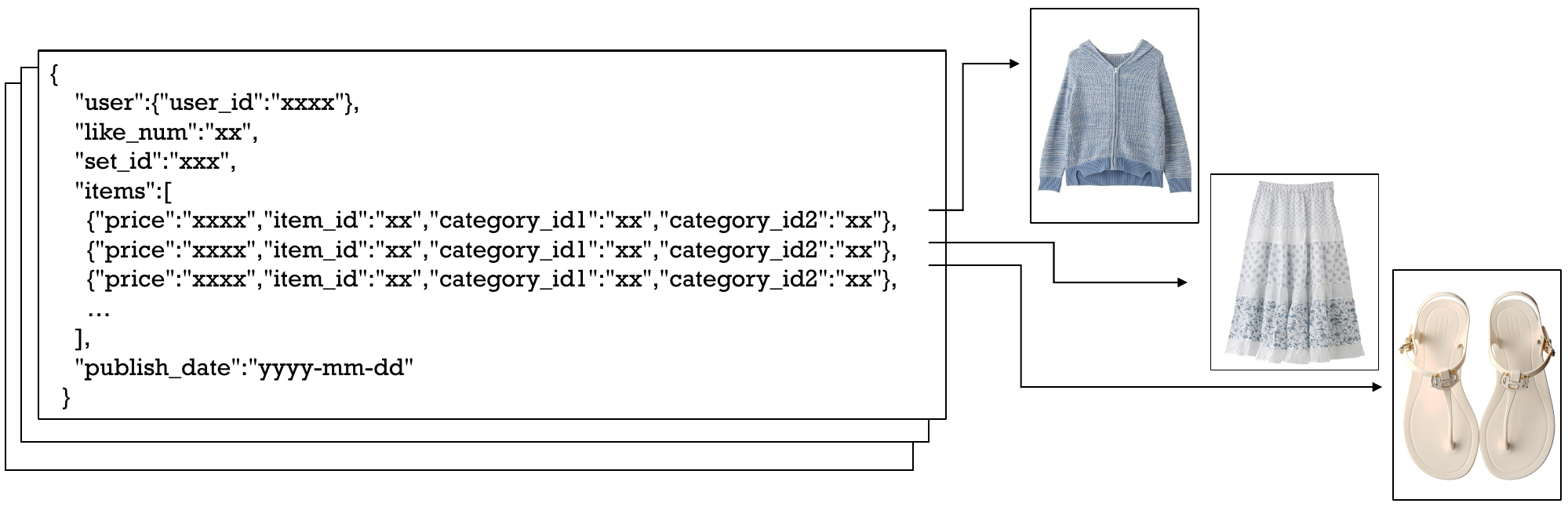}
    \caption{Overview of SHIFT15M dataset.}
    \label{fig:shift15m_overview}
\end{figure}
\begin{figure}
    \centering
    \includegraphics[width=0.98\linewidth]{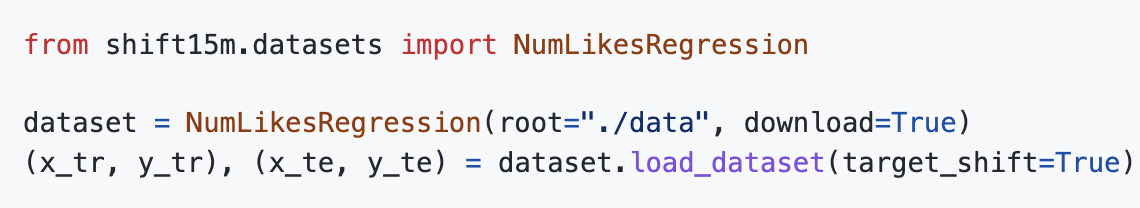}
    \caption{Minimum sample code using SHIFT15M data loader.}
    \label{fig:shift15m_software}
\end{figure}

Another common phenomenon in the domain of fashion is a trend change.
These phenomena are observed at various scales, ranging from annual trend changes such as fashionable colors to seasonal trend changes such as summer to winter clothing.
In the field of machine learning, such an assumption can be defined as a distribution shift (or dataset shift)~\cite{quinonero2008dataset,moreno2012unifying,subbaswamy2021evaluating,shen2021towards,hendrycks2021many,shen2021towards,yang2021generalized,miller2021accuracy}.
We assume that training examples $\{(\bm{x}^{tr}_i, y^{tr}_i)\}^{n_{tr}}_{i=1}$ are independently and identically distributed (i.i.d.) according to some fixed but unknown distribution $p_{tr}(\bm{x}, y)$, which can be decomposed into the marginal distribution and the conditional probability distribution, i.e., $p_{tr}(\bm{x},y)=p_{tr}(\bm{x})p_{tr}(y|\bm{x})$.
We also denote the test examples by $\{(\bm{x}^{te}_i,y^{te}_i)\}^{n_{te}}_{i=1}$ drawn from a test distribution $p_{te}(\bm{x}, y) = p_{te}(\bm{x})p_{te}(y|\bm{x})$.

\begin{figure*}[t]
    \centering
    \includegraphics[width=0.99\linewidth]{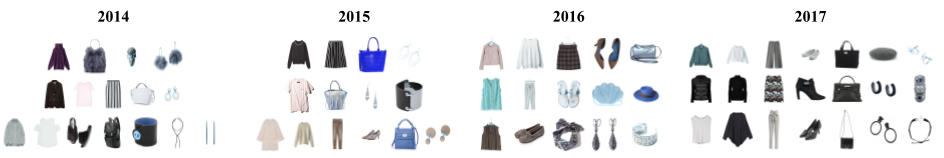}
    \caption{Several sample images from SHIFT15M dataset. See Appendix~\ref{apd:sample_items} for more sample items.}
    \label{fig:shift15m_sample_images}
\end{figure*}

\begin{table*}[t]
\centering
\caption{Statistics on the SHIFT15M dataset.}
\label{tab:number_of_instances}
\scalebox{0.8}{
\begin{tabular}{lcccccccccccc}
\toprule
Property               & Total      & 2010  & 2011   & 2012    & 2013      & 2014      & 2015       & 2016      & 2017    & 2018    & 2019    & 2020 \\
\midrule
\#sets         & 2,555,147  & 1,423 & 4,813  & 131,611 & 466,583   & 730,443   & 617,844    & 299,502   & 137,510 & 92,944  & 59,412  & 13,062 \\
\#items        & 15,218,721 & 8,327 & 29,140 & 756,532 & 2,644,564 & 4,305,802 & 3,731,864  & 1,853,647 & 855,036 & 576,022 & 373,549 & 84,238 \\
mean set size          & 6.03       & 5.85  & 6.05   & 5.74    & 5.66      & 5.89      & 6.04       & 6.18      & 6.21    & 6.19    & 6.28    & 6.44   \\
median set size        & 6.00       & 6.00  & 6.00   & 5.00    & 5.00      & 6.00      & 6.00       & 6.00      & 6.00    & 6.00    & 6.00    & 6.00   \\
mean \#likes   & 26.98      & 0.94  & 2.00   & 15.74   & 16.84     & 23.24     & 37.37      & 35.67     & 32.41   & 24.89   & 21.34   & 16.01 \\
median \#likes & 9.00       & 0.00  & 1.00   & 8.00    & 6.00      & 6.00      & 13.00      & 18.00     & 23.00   & 19.00   & 17.00   & 12.00 \\
\#unique users & 193,574    & 289   & 571    & 16,922  & 52,283    & 80,290    & 49,441     & 18,854    & 7,511   & 4,442   & 2,739   & 853   \\
\bottomrule
\end{tabular}%
}
\end{table*}

\begin{definition}{(Covariate shift~\cite{shimodaira2000improving})}
\label{def:covariate_shift_adaptation}
We consider that the two distributions $p_{tr}(\bm{x}, y)$ and $p_{te}(\bm{x},y)$ satisfy the covariate shift assumption if the following conditions hold:
\begin{align*}
    p_{tr}(\bm{x}) \neq p_{te}(\bm{x}), \quad p(y|\bm{x}) = p_{tr}(y|\bm{x}) = p_{te}(y|\bm{x}).
\end{align*}
\end{definition}

\begin{definition}{(Target shift~\cite{zhang2013domain})}
\label{def:target_shift_assumption}
We consider that the two distributions $p_{tr}(\bm{x}, y)$ and $p_{te}(\bm{x},y)$ satisfy the target shift assumption if the following conditions hold:
\begin{align*}
    p_{tr}(y) \neq p_{te}(y), \quad p(\bm{x}|y) = p_{tr}(\bm{x}|y) = p_{te}(\bm{x}|y).
\end{align*}
\end{definition}

\begin{definition}{(General distribution shift~\cite{subbaswamy2021evaluating})}
Let $\mathcal{Z}\subset\{\mathcal{Z}, \mathcal{Y}\}$ ve a set of immutable variables whose marginal distribution should remain fixed, $\mathcal{W}\subset\{\mathcal{X}, \mathcal{Y}\}\setminus \mathcal{Z}$ be a set of mutable variables whose distribution can be shifted, and $\mathcal{V}=\{\mathcal{X}, \mathcal{Y}\}\setminus\{\mathcal{W}\cup\mathcal{Z}\}$ be the remaining dependent variables.
This partition of the variables defines a factorization of $p_{tr}$ into
\begin{equation}
    p_{tr}(\bm{v} | \bm{w}, \bm{z})p_{tr}(\bm{w}|\bm{z})p_{tr}(\bm{z}),
\end{equation}
where $\bm{z}\in\mathcal{Z}$, $\bm{w}\in\mathcal{W}$ and $\bm{v}\in\mathcal{V}$.
We consider that the two distributions $p_{tr}(\bm{x}, y)$ and $p_{te}(\bm{x}, y)$ satisfy the general dataset shift assumption if the following hold:
\begin{equation}
    p_{tr}(\bm{w}|\bm{z}) \neq p_{te}(\bm{w}|\bm{z}).
\end{equation}
\end{definition}
Notably, this formulation generalizes other dataset shifts.
For example, if we let $\mathcal{Z}=\emptyset$ and $\mathcal{W}=\mathcal{X}$, then this corresponds to a covariate shift.

To address these problems, we provide SHIFT15M, a real-world dataset that can handle the above two problem settings, that is, the set-to-set matching dataset with distribution shift.
Our SHIFT15M dataset is built on data accumulated over the past 10 years in our fashion SNS (see Figure~\ref{fig:shift15m_tsne}).
In this SNS, users could post combinations of their clothing items and other users could bookmark them as favorites.
The data accumulated by this service, which has been in operation for a decade from 2010 to 2020, is very useful for dealing with distribution shifts in the fashion sector.
Figure~\ref{fig:shift15m_overview} shows an overview of the SHIFT15M dataset.
Each column is a set of posted fashion items, with information such as the user who posted, the date of publication, and the price of each item.
We hope that our SHIFT dataset will encourage research on set-to-set matching tasks under the distribution shift.

\subsection{Contribution}
Our contributions are summarized as follows:
\begin{itemize}
    \item We propose SHIFT15M, a fashion-specific dataset that can properly evaluate models for set-to-set matching under the distribution shift assumptions.
    SHIFT15M also enables the performance evaluation of the model under various magnitudes of dataset shifts by switching the magnitude. Figure~\ref{fig:shift15m_sample_images} shows several sample images from the SHIFT15M dataset.
    In Section~\ref{sec:statistics_shift15m}, we introduce overall statistics on the SHIFT15M dataset.
    \item We provide open-source software to handle the SHIFT15M dataset in a very simple way. Figure~\ref{fig:shift15m_software} shows the minimum sample code of our software;
    \item We propose first-step benchmark methods for set-to-set matching under distribution shift, numerical experiments show the usefulness of these methods.
    Section~\ref{sec:benchmarks} presents the proposed benchmark methods and the results of comparative experiments.
\end{itemize}

\begin{figure}
    \centering
    \includegraphics[width=0.95\linewidth]{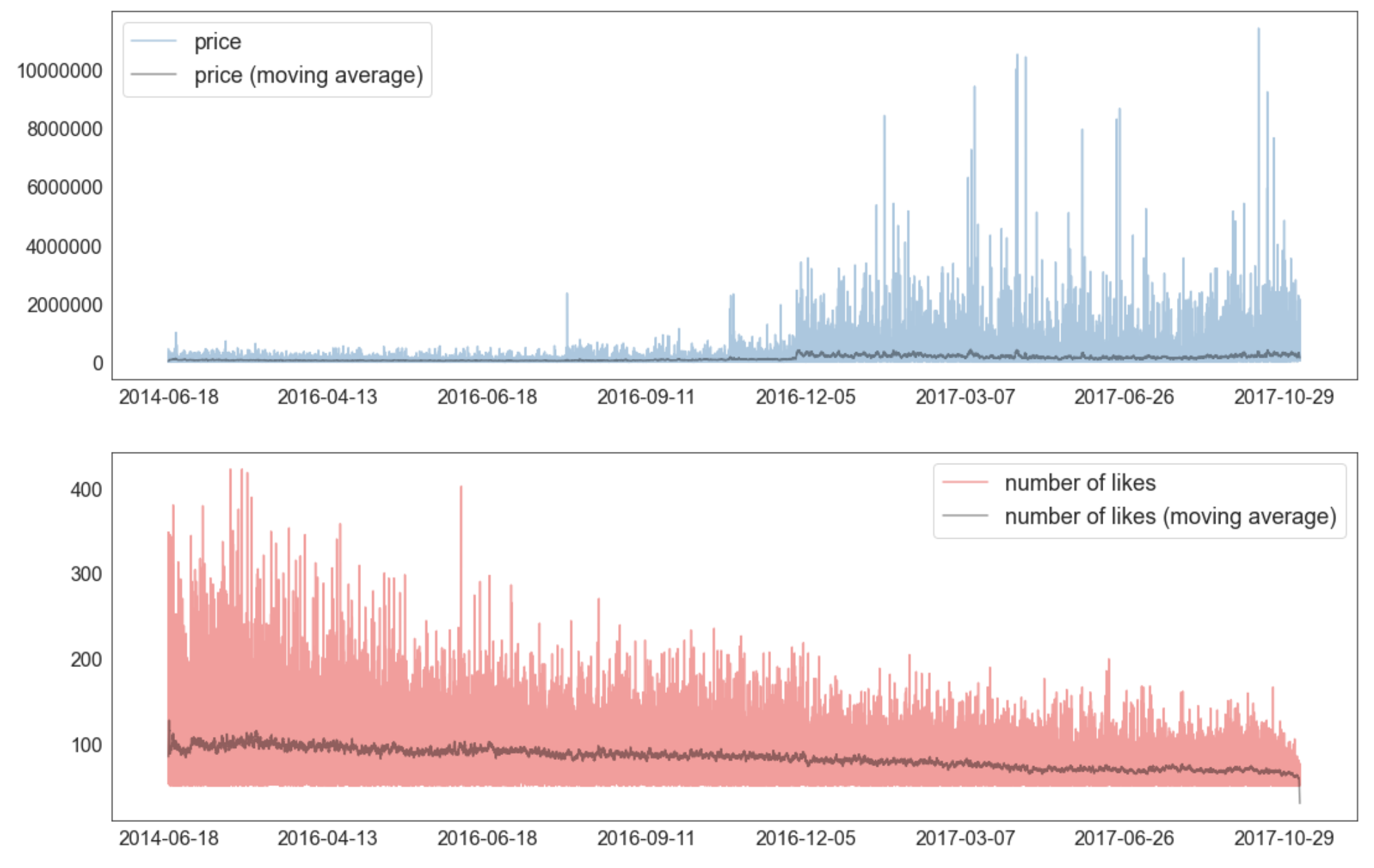}
    \caption{Top panel: the trend of price for items included in SHIFT15M. Bottom panel: the trend of the number of likes for posted sets.}
    \label{fig:shift15m_transition}
\end{figure}

\section{Statistics on the SHIFT15M dataset}
\label{sec:statistics_shift15m}
In this section, we present some statistics for our SHIFT15M dataset.
First, Table~\ref{tab:number_of_instances} shows the overview of statistics on the SHIFT15M dataset.
Since our fashion SNS was launched in 2010, the number of users and posts gradually increased from 2010, reaching a peak around 2014$\sim$2015, and slowly decreasing until 2020, the year the service was terminated.
Also, the number of items in a set tends to increase over the years, indicating that users tend to construct outfits with more and more items.

The top panel of Figure~\ref{fig:shift15m_transition} shows the trend of price for items included in the SHIFT15M dataset. It can be seen that the fashion items posted by users are becoming more expensive every year.
The bottom panel of Figure~\ref{fig:shift15m_transition} shows the trend of the number of likes for posted sets.

Figure~\ref{fig:shift15m_monthly_sets} plots the trend of the number of posted sets by year.
This figure shows that our fashion SNS, the source of the SHIFT15M dataset, was most active around 2014$\sim$2015.

Also, each item from the SHIFT15M dataset has two categories specifying the types of the item.
Figure~\ref{fig:shift15m_categories_years} show the distributions of the number of items belonging to each category.
The figures show that there are categories in which the number of items changes from year to year and categories in which the number of items does not change much throughout the entire period.

\begin{figure}
    \centering
    \includegraphics[width=0.98\linewidth]{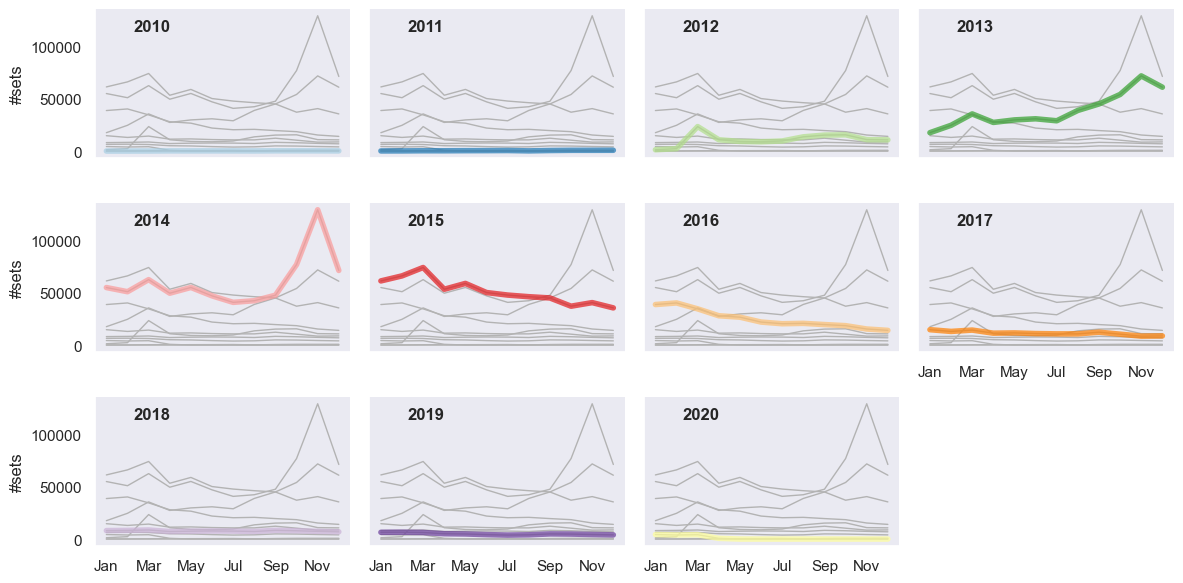}
    \caption{Trends of the number of posted sets by year.}
    \label{fig:shift15m_monthly_sets}
\end{figure}

\begin{figure}[t]
    \centering
    \includegraphics[width=0.98\linewidth]{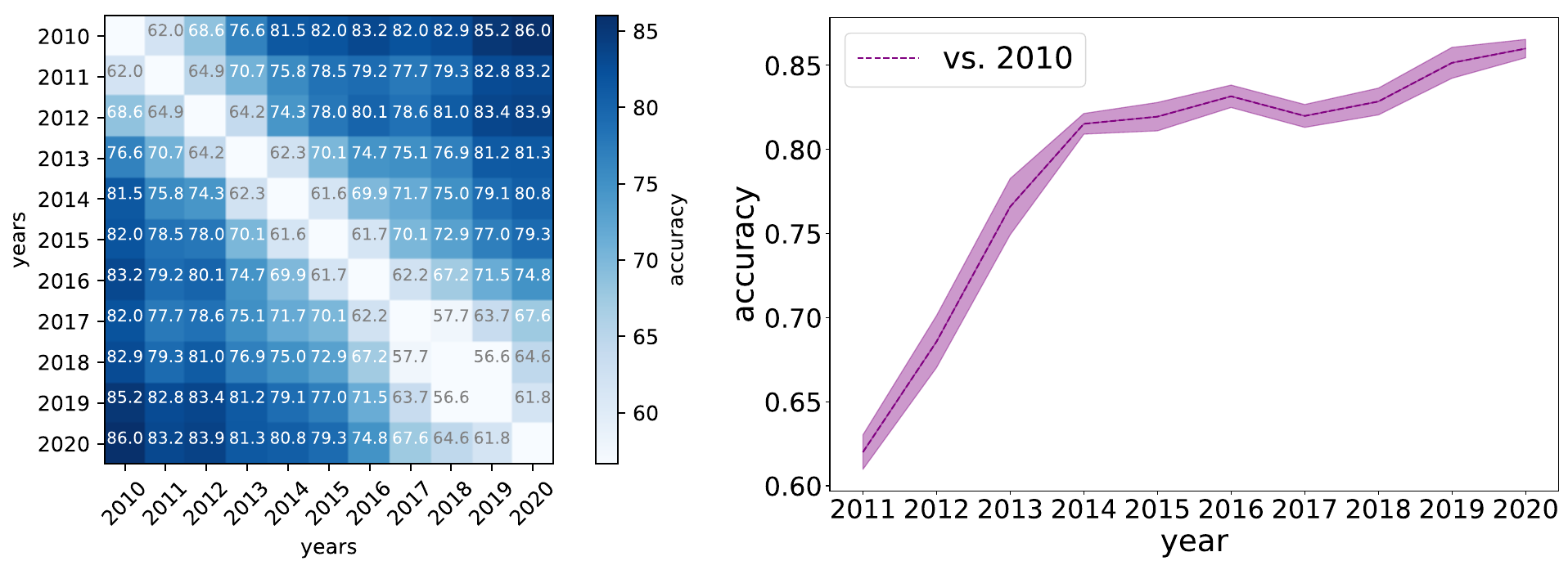}
    \caption{Covariate shift of image features.}
    \label{fig:shift15m_covariate_shift}
\end{figure}

\begin{figure}
    \centering
    \includegraphics[width=0.98\linewidth]{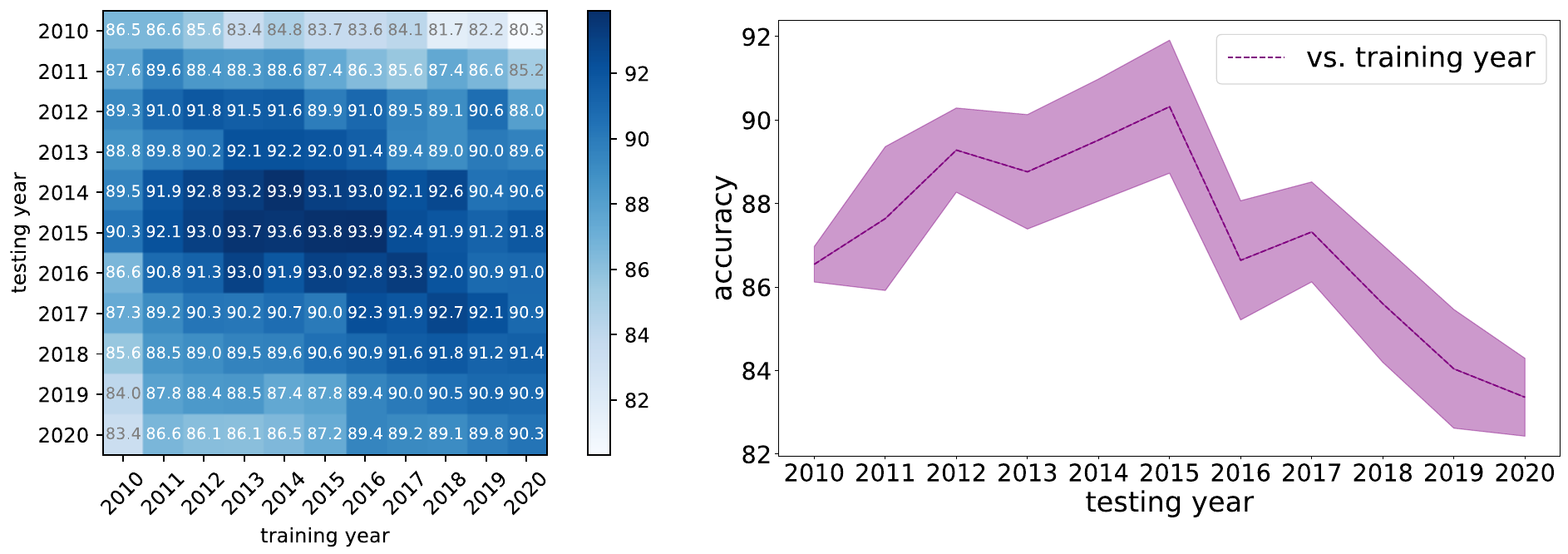}
    \caption{Category classification results under the covariate shift.}
    \label{fig:category_classification}
\end{figure}

Finally, we confirm the covariate shift of the image features included in SHIFT15M.
If covariate shift assumption~\ref{def:covariate_shift_adaptation} holds, we should be able to construct a classifier $f: \bm{x} \mapsto y = \{0, 1\}$, where $\bm{x}$ is the image feature of the item and $y$ is the binary classification output for two years.
Figure~\ref{fig:shift15m_covariate_shift} shows the experimental results.
The results show that classification between distant years (e.g., acc. of 2010 vs. 2020 is $0.85$) is easier, while classification between close years (e.g., acc. of 2010 vs. 2011 is $0.62$) is more difficult, indicating a gradual shift in image features.
Figure~\ref{fig:category_classification} also shows the experimental results of item categorization when the training and test data were generated from different years.
This figure shows that the closer the years of the training and test data are, the higher the classification accuracy.

\begin{figure*}
    \centering
    \includegraphics[width=0.6\linewidth]{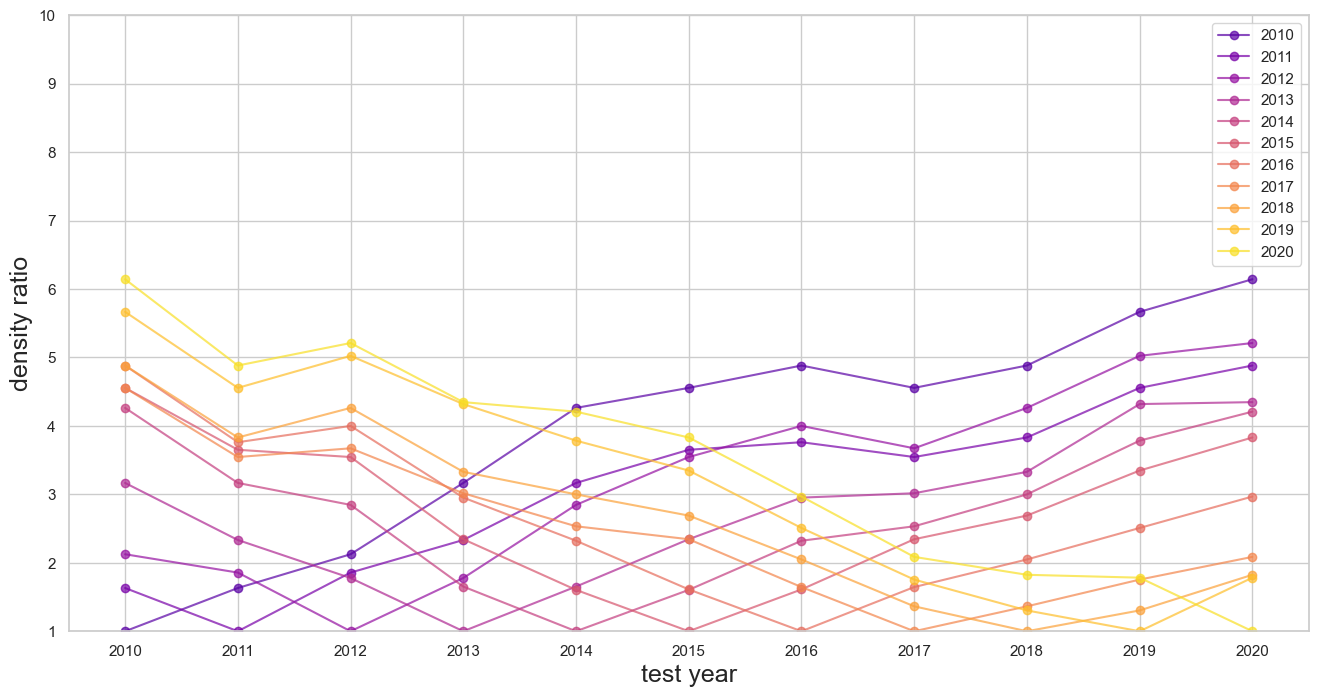}
    \caption{Density ratio estimation by using binary classifiers.}
    \label{fig:shift15m_density_ratio}
\end{figure*}

As we will see in the following sections, density ratio $p_{te}(\bm{x}) / p_{tr}(\bm{x})$ is essential in distribution shift adaptation.
Let $p(\text{train})$ and $p(\text{test})$ be the probability that some data are generated from train and test distributions, respectively.
If we assume that $p(\text{train}) = p(\text{test}) = 0.5$, we can estimate density ratio as $p(\text{test}|\bm{x}) / p(\text{train} | \bm{x})$ via Bayes' theorem:
\begin{align*}
    \frac{p_{te}(\bm{x})}{p_{tr}(\bm{x})} = \frac{p(\bm{x}|\text{test})}{p(\bm{x} | \text{train})} = \frac{p(\text{train})}{p(\text{test})}\frac{p(\text{test} | \bm{x})}{p(\text{train} | \bm{x})} = \frac{p(\text{test} | \bm{x})}{p(\text{train} | \bm{x})}.
\end{align*}
Conversely, the Bayes optimal classifier $g^*(\bm{x})$ can be written as a function of the density ratio $r(\bm{x}) = \frac{p_{te}(\bm{x})}{p_{tr}(\bm{x})}$,
\begin{align*}
    r(\bm{x}) = \frac{p_{te}(\bm{x})}{p_{tr}(\bm{x})} = \frac{g^*(\bm{x})}{1 - g^*(\bm{x})},\ g^*(\bm{x}) = \frac{p_{te}(\bm{x})}{p_{tr}(\bm{x}) + p_{tr}(\bm{x})}.
\end{align*}
Figure~\ref{fig:shift15m_density_ratio} shows the density ratio estimation by using the above binary classifiers.
These density ratios induce the importance weighted set-to-set matching algorithm in the following section.
See Appendix~\ref{apd:density_ratio} for more details.

\section{Benchmarks}
\label{sec:benchmarks}
In this section, we introduce several numerical experiments on the SHIFT15M dataset.

\subsection{Importance weighted set-to-set matching}
As the benchmark strategy for the distribution shift adaptation on the set-to-set matching, we propose importance weighted set-to-set matching which is based on IWERM.
\begin{definition}{(Importance weighted ERM~\cite{shimodaira2000improving})}
\label{def:iwerm}
Importance Weighted Empirical Risk Minimization (IWERM) uses the density ratio $p_{te}(\bm{x})/p_{tr}(\bm{x})$ as the weighting function:
\begin{equation}
    \hat{h} = \argmin_{h\in\mathcal{H}}\frac{1}{n_{tr}}\sum^{n_{tr}}_{i=1}\frac{p_{te}(\bm{x}^{tr}_i)}{p_{tr}(\bm{x}^{tr}_i)}\ell(h(\bm{x}^{tr}_i), y^{tr}_i). \label{eq:iwerm}
\end{equation}
\end{definition}

\begin{figure}
    \centering
    \includegraphics[width=0.95\linewidth]{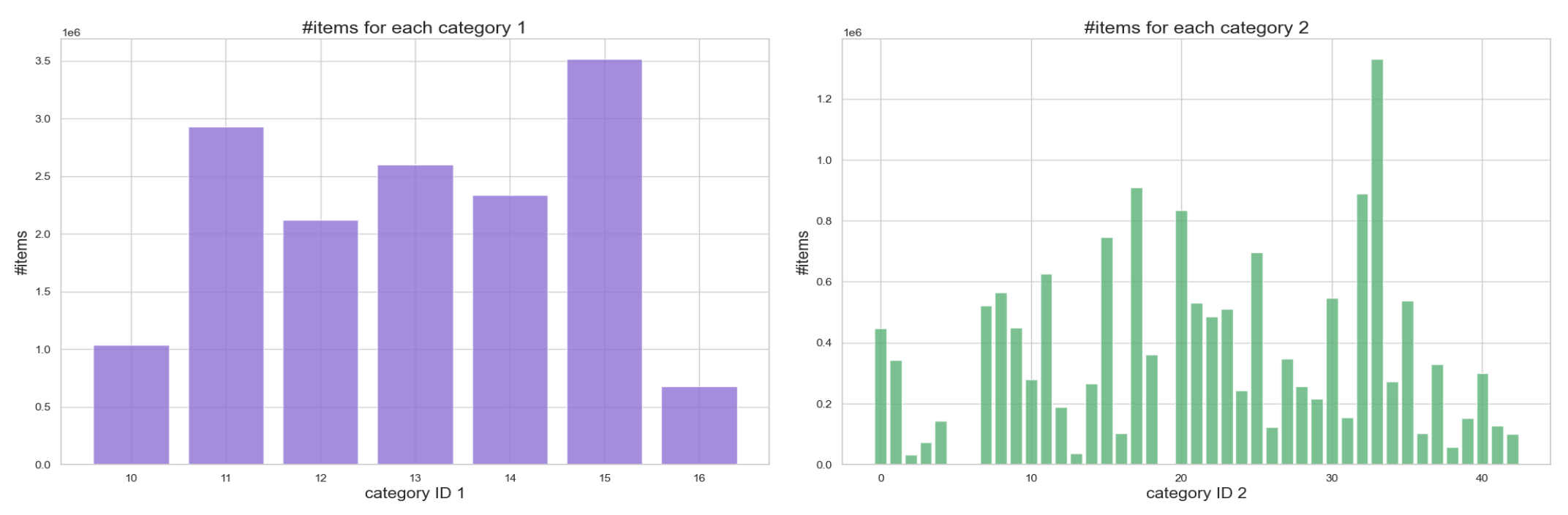}
    \caption{Overall distribution of the number of items in each category.}
    \label{fig:shift15m_categories}
\end{figure}

\begin{figure}
    \centering
    \includegraphics[width=0.95\linewidth]{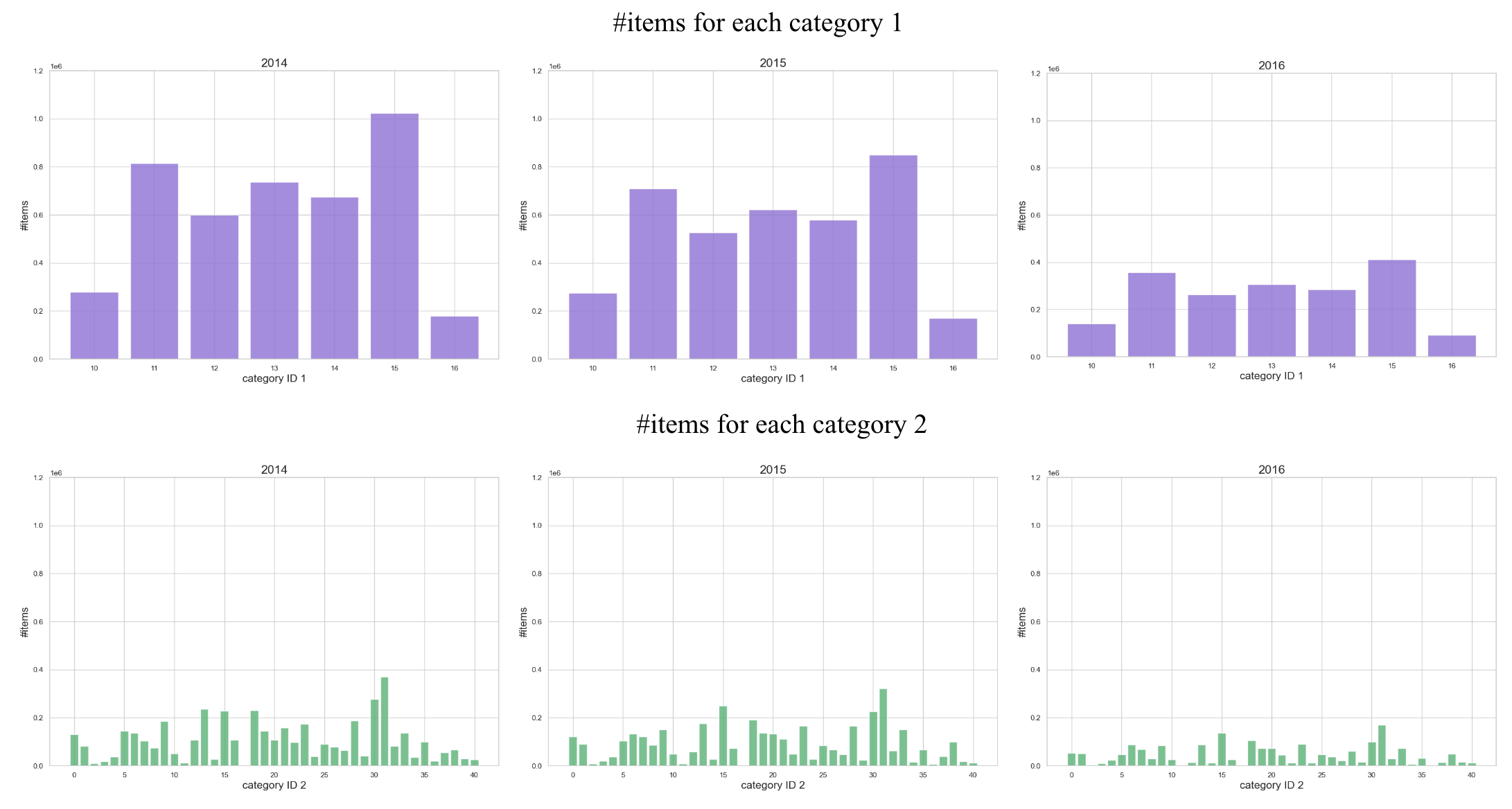}
    \caption{Yearly distribution of the number of items in each category.}
    \label{fig:shift15m_categories_years}
\end{figure}

\begin{figure*}
    \centering
    \includegraphics[width=0.9\linewidth]{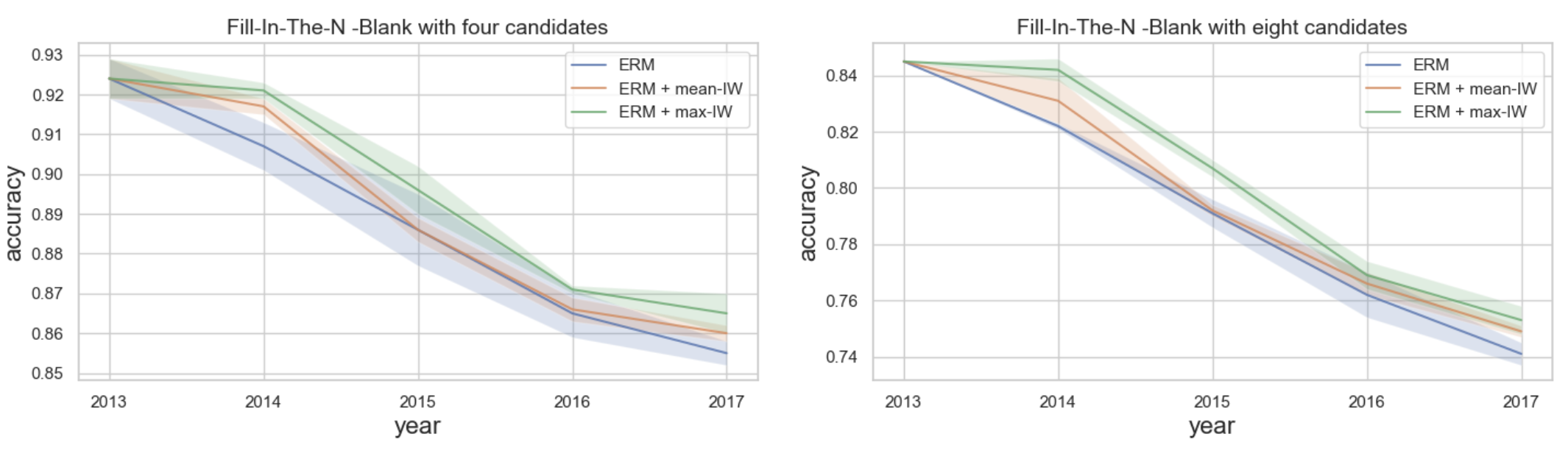}
    \caption{Plots for Fill-In-The-N-Blank experiments.}
    \label{fig:experiments_fill_in_the_n_blank}
\end{figure*}
\begin{table*}[t]
\centering
\caption{Experimental results of the Fill-In-The-$N$-Blank with four candidates. Evaluation metrics are the accuracy [$\%$]}
\label{tab:benchmark_set2set_matching_4}
\scalebox{0.95}{
\begin{tabular}{lccccc}
\toprule
Models        & 2013                & 2014                     & 2015                     & 2016                     & 2017 \\
\midrule
ERM~\cite{saito2020exchangeable}   & $0.924 (\pm 0.005)$ & $0.907(\pm 0.006)$       & $0.886(\pm 0.009)$       & $0.865(\pm 0.006)$       & $0.855(\pm 0.003)$ \\
ERM + mean-IW & $0.924 (\pm 0.005)$ & $0.917(\pm 0.002)$       & $0.886(\pm 0.003)$       & $0.866(\pm 0.003)$       & $0.860(\pm 0.002)$ \\
ERM + max-IW  & $0.924 (\pm 0.005)$ & ${\bf 0.921(\pm 0.002)}$ & ${\bf 0.896(\pm 0.006)}$ & ${\bf 0.871(\pm 0.001)}$ & ${\bf 0.865(\pm 0.005)}$ \\
\bottomrule
\end{tabular}
}
\end{table*}
\begin{table*}[]
\centering
\caption{Experimental results of the Fill-In-The-$N$-Blank with eight candidates.  Evaluation metrics are the accuracy [$\%$]}
\label{tab:benchmark_set2set_matching_8}
\scalebox{0.95}{
\begin{tabular}{lccccc}
\toprule
Models        & 2013               & 2014                     & 2015                     & 2016                     & 2017 \\
\midrule
ERM~\cite{saito2020exchangeable} & $0.845(\pm 0.000)$ & $0.822(\pm 0.001)$       & $0.791(\pm 0.005)$       & $0.762(\pm 0.008)$       & $0.741(\pm 0.004)$ \\
ERM + mean-IW & $0.845(\pm 0.000)$ & $0.831(\pm 0.008)$       & $0.792(\pm 0.002)$       & $0.766(\pm 0.004)$       & $0.749(\pm 0.002)$ \\
ERM + max-IW  & $0.845(\pm 0.000)$ & ${\bf 0.842(\pm 0.004)}$ & ${\bf 0.807(\pm 0.003)}$ & ${\bf 0.769(\pm 0.005)}$ & ${\bf 0.753(\pm 0.005)}$ \\
\bottomrule
\end{tabular}
}
\end{table*}

Adopting the density ratio as the weighting function, as in Definition~\ref{def:iwerm}, leads to the following statistically important property.
\begin{theorem}{(Consistency of IWERM\cite{shimodaira2000improving})}
If we set $w(\bm{x})=p_{te}(\bm{x})/p_{tr}(\bm{x})$ as the weighting function, the empirical error computed by the weighted ERM is a consistent estimator of the expected error in the test distribution.
\end{theorem}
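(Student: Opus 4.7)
The plan is to prove consistency in the standard two-step fashion: first show that the weighted empirical risk is an \emph{unbiased} estimator of the test-distribution expected risk, and then invoke the law of large numbers to upgrade unbiasedness to consistency under i.i.d. sampling.

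For the bias calculation, I would start from the definition
\begin{equation*}
\mathbb{E}_{p_{tr}}\!\left[\frac{1}{n_{tr}}\sum_{i=1}^{n_{tr}} \frac{p_{te}(\bm{x}^{tr}_i)}{p_{tr}(\bm{x}^{tr}_i)}\,\ell(h(\bm{x}^{tr}_i), y^{tr}_i)\right]
= \mathbb{E}_{(\bm{x},y)\sim p_{tr}}\!\left[\frac{p_{te}(\bm{x})}{p_{tr}(\bm{x})}\,\ell(h(\bm{x}), y)\right],
\end{equation*}
using linearity and the fact that the training points are identically distributed. Next I would expand the joint density as $p_{tr}(\bm{x},y)=p_{tr}(\bm{x})\,p_{tr}(y|\bm{x})$, cancel $p_{tr}(\bm{x})$ against the denominator of the density ratio, and invoke the covariate-shift assumption $p_{tr}(y|\bm{x})=p_{te}(y|\bm{x})$ (implicit since we are in the setting of Definition~\ref{def:covariate_shift_adaptation}) to rewrite the integrand as $p_{te}(\bm{x})\,p_{te}(y|\bm{x})\,\ell(h(\bm{x}),y)$. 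Integrating then yields exactly $\mathcal{R}(h)$ as defined in Eq.~\eqref{eq:expected_loss}, so the weighted empirical risk is unbiased for the test expected loss.

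Given unbiasedness, consistency follows from the (weak or strong) law of large numbers: the summands $w(\bm{x}^{tr}_i)\ell(h(\bm{x}^{tr}_i),y^{tr}_i)$ are i.i.d.\ with common mean $\mathcal{R}(h)$, so their sample average converges in probability (in fact almost surely) to $\mathcal{R}(h)$ as $n_{tr}\to\infty$. This matches the Monte Carlo identity sketched at Eq.~\eqref{eq:erm} in the ERM case, simply with the density-ratio change of measure absorbed into each term.

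The main obstacle is not the chain of equalities itself but making precise the regularity conditions that make every step legitimate. Specifically, one needs absolute continuity $p_{te}\ll p_{tr}$ so that the density ratio is well-defined $p_{tr}$-almost everywhere, and one needs $\mathbb{E}_{p_{tr}}[w(\bm{x})\ell(h(\bm{x}),y)]<\infty$ (equivalently, the loss under $p_{te}$ is integrable) so that the law of large numbers applies; otherwise the weighted estimator can have infinite variance even when its mean is finite. I would therefore state these assumptions explicitly at the start of the proof, carry out the change-of-measure computation as above, and close by citing the LLN for i.i.d.\ sequences.
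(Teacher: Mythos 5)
Your proposal matches the paper's proof: the paper establishes the same change-of-measure identity $\mathbb{E}_{p_{tr}}[w(X)\ell(Y,h(X))]=\mathbb{E}_{p_{te}}[\ell(Y,h(X))]=\mathcal{R}(h)$ by cancelling $p_{tr}(\bm{x})$ against the density ratio under the shared conditional $p(y|\bm{x})$, leaving the law-of-large-numbers step implicit. Your version is simply more explicit about the LLN upgrade and the regularity conditions (absolute continuity and integrability of the weighted loss), which the paper omits.
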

Using the above ideas, we propose a novel covariate shift adaptation method for set-to-set matching.
Let $\mathcal{L}(\mathcal{V}, \mathcal{W}, f)$ be the $K$-pair-set loss~\cite{saito2020exchangeable} function for the set matching, which is defined as follows:
\begin{align}
    \mathcal{L}(\mathcal{V}, \mathcal{W}, f) = -\frac{1}{K} \sum_{i=1}^{K} \sum_{j=1}^{K} \delta_{ij} \log \frac{ \exp (f (\mathcal{V}_i, \mathcal{W}_j))}{\sum_{k=1}^{K} \exp (f(\mathcal{V}_i, \mathcal{W}_k))}, \nonumber
\label{eq:s2s}
\end{align}
where $\delta$ is Kronecker's delta, and we can modify $\mathcal{L}(\mathcal{V}, \mathcal{W}, f)$ as follows:
\begin{align}
    \mathcal{L}_w(\mathcal{V}, \mathcal{W}, f) = -\frac{1}{K} \sum_{i=1}^{K} \sum_{j=1}^{K} \delta_{ij} \Gamma^p_{i,j} \log \frac{\Gamma^f_{i,j}}{\sum_{k=1}^{K} \Gamma^f_{i,k}},
\end{align}
where $\Gamma^p_{i,j} = e^{p(test|\mathcal{V}_i \cup{\mathcal{W}_j})}$ and $\Gamma^f_{i,j} = e^{f(\mathcal{V}_i, \mathcal{W}_j)}$.
This modification can be regarded as a weighting based on the probability that the pair is included in the test set.

Here, we propose two weighting strategies:
\begin{align*}
  {\rm max\mathchar`-IW}&:\  p(test|\mathcal{V}_i \cup{\mathcal{W}_j}) = \max_{\bm{x} \in \mathcal{V}_i \cup{\mathcal{W}_j}} w(\bm{x}), \\
  {\rm mean\mathchar`-IW}&:\ p(test|\mathcal{V}_i \cup{\mathcal{W}_j}) = \frac{1}{|\mathcal{V}_i\cup\mathcal{W}_j|}\sum_{\bm{x} \in \mathcal{V}_i \cup{\mathcal{W}_j}}w(\bm{x}),
\end{align*}
where $w(\bm{x})$ is the weighting function.
Next, we approximate $w(\bm{x})$ by using unlabeled data from both $p_{tr}$ and $p_{te}$.
In IWERM, the squared error can be decomposed as follows:
\begin{align}
    \mathbb{E}_{p_{te}}\Big[\Delta^2\Big] &= \mathbb{E}_{p_{tr}}\Big[w(\bm{x})\Delta^2\Big] \nonumber \\
    &= \mathbb{E}_{p_{tr}}\Big[\hat{w}(\bm{x})\Delta^2\Big] + \mathbb{E}_{p_{tr}}\Big[(w(\bm{x}) - \hat{w}(\bm{x}))\Delta^2\Big], \nonumber
\end{align}
where $\Delta^2 = \|f(\bm{x}) - y\|^2$ and $\hat{w}(\bm{x})$ is the approximator of the weighting function $w(\bm{x})$.
The second term is bounded as
\begin{align}
    & \mathbb{E}_{p_{tr}}\Big[(w(\bm{x}) - \hat{w}(\bm{x}))\Delta^2\Big] \nonumber \\
    &\leq \frac{1}{2}\Big(\mathbb{E}_{p_{tr}}\Big[\Delta^2\Big] + \mathbb{E}_{p_{tr}}\Big[(w(\bm{x}) - \hat{w}(\bm{x}))^2\Big]\Big).
\end{align}

Let $s$ be the indicator of the distributions, where $s=1$ corresponds to the train distribution and $s=0$ corresponds to the test distribution, and we assume that $p(s) = 0.5$.
Then, we also assume that
\begin{align}
    p(\bm{x} | s) = \begin{cases}
    p_{tr}(\bm{x}) & (s=1), \\
    p_{te}(\bm{x}) & (s=0).
    \end{cases}
\end{align}

Then, we have $w(\bm{x}) = \frac{p(\bm{x} | s=0)}{p(\bm{x} | s=1)}$.
Let $g(\bm{x})$ be the optimal source discriminator which identifies whether $\bm{x}$ is generated $p_{tr}$ or $p_{te}$.
Then, we can write as $g(\bm{x}) = p(s=1|\bm{x}) = \frac{1}{1+w(\bm{x})}$.
Suppose that the density ratio $p_{te}(\bm{x})/p_{tr}(\bm{x})$ is bounded by $\beta>0$, we have $\frac{1}{1+\beta}\leq g(\bm{x})\leq 1$ for all $\bm{x}$.
From the unlabeled data generated from $p_{tr}$ and $p_{te}$, we can learn the estimator $\hat{g}$ of $g$.
Then, we can write the weight estimation term as
\begin{align}
    &\mathbb{E}_{p_{tr}}\Big[(w(\bm{x}) - \hat{w}(\bm{x}))^2\Big] = \mathbb{E}_{p_{tr}}\Biggl[\Biggl(\frac{g(\bm{x}) - \hat{g}(\bm{x})}{g(\bm{x})\hat{g}(\bm{x})}\Biggr)^2\Biggr] \nonumber \\
    &\leq (1+\beta)^4\mathbb{E}_{p_{tr}}\Big[(g(\bm{x}) - \hat{g}(\bm{x}))^2\Big] \nonumber \\
    &= (1+\beta)^4\mathbb{E}_{p_{te}}\Biggl[(g(\bm{x}) - \hat{g}(\bm{x}))^2\frac{p_{tr}(\bm{x})}{p_{te}(\bm{x})}\Biggr] \nonumber \\
    &\leq 2(1+\beta)^4\mathbb{E}_{p_{te}}\Big[(g(\bm{x}) - \hat{g}(\bm{x}))^2\Big] \nonumber \\
    &= 2(1+\beta)^4\Biggl\{\mathbb{E}_{p_{te}}\Big[(s - g(\bm{x}))^2\Big] - \mathbb{E}_{p_{te}}\Big[(g(\bm{x}) - \hat{g}(\bm{x}))\Big]\Biggr\}. \nonumber
\end{align}
This indicates that the weighting function is approximated by the function $g(\bm{x})$.

\subsection{Experimental results on set-to-set matching problem under the covariate shift assumption}

We introduce benchmark results for a set-to-set matching under the covariate shift.
The model architecture is the same as the previous work~\cite{saito2020exchangeable}, which is based on the architecture of Transformer~\cite{vaswani2017attention,lee2019set,newell2016stacked}.
Our task can be considered an extended version of a standard task, Fill-In-The-Blank~\cite{cucurull2019context}, which requires us to select an item that best extends an outfit from among four candidates. Because selecting a set corresponds to filling multiple blanks, we consider the set matching problem as Fill-In-The-$N$-Blank~\cite{saito2020exchangeable}.
To construct the correct pair of sets to be matched, we
randomly halve the given outfit $\mathcal{O}$ into two non-empty proper subsets $\mathcal{V}$ and $\mathcal{W}$, as
follows: $\mathcal{O}\to\{\mathcal{V}, \mathcal{W}\}$, where $\mathcal{V}\cap\mathcal{W}=\emptyset$.
Tables~\ref{tab:benchmark_set2set_matching_4}, \ref{tab:benchmark_set2set_matching_8} and Figure~\ref{fig:experiments_fill_in_the_n_blank} show the experimental results of the Fill-In-The-$N$-Blank with four and eight candidates.
In these experiments,  data from 2013 are used as training data, and data from 2013$\sim$2017 are used as test data.
ERM refers to empirical risk minimization~\cite{vapnik2013nature,vapnik1999overview,bousquet2003introduction}, which assumes that $p_{tr}(\bm{x}) = p_{te}(\bm{x})$.
From these results, we can see that the covariate shift adaptive set-to-set matching methods can achieve better performances than the ordinal ERM.
This means that for set-to-set matching on the SHIFT15M dataset, we need to apply some distribution shift adaptation methods.

Table~\ref{tab:benchmark_set2set_matching_4_several_models} and \ref{tab:benchmark_set2set_matching_8_several_models} show the experimental results for the various models.
The models used in the experiments are the same as \cite{saito2020exchangeable}.
To quote, 
\begin{itemize}
    \item Cross Attention and Cross Affinity~\cite{saito2020exchangeable}: Set-to-Set matching models which proposed by \cite{saito2020exchangeable}, with the attention-based and affinity-based functions, respectively.
    \item Set Transformer~\cite{lee2019set}: Set Transformer which introduced by applying a self-attention based Transformer to a set of data. Set Transformer is trained through supervised or unsupervised learning and transforms a set of data into a vector representation to recognize set features. By using Set Transformer $f_{ST}$, we perform the extension by calculating the matching score between the two sets $\mathcal{V}$ and $\mathcal{W}$ via the inner product $f_{ST}(\mathcal{V})^\top f_{ST}(\mathcal{W})$, sharing the weights between the two $f_{ST}$.
    \item We consider a union of two sets as a set-input for the extension of BERT~\cite{devlin2018bert} and omit the individual token embedding.
    We use the segment embedding to designate items of X and Y. We use three variants: $\text{BERT}_{\text{BASE}}$ is the same model as described in \cite{devlin2018bert}; $\text{BERT}_{\text{BASE-AP}}$ uses the average pooling in the last layer; and $\text{BERT}_{\text{SMALL}}$ is a four-layered version of $\text{BERT}_{\text{BASE}}$ with eight heads, and the hidden size is 512.
    \item GNN~\cite{cucurull2019context}: We combine two sets as one input for the extension of GNN~\cite{cucurull2019context}. Because this model is not presented to train in an end-to-end with the feature extractor, we do not finetune the CNN in fashion set matching, where pre-trained CNNs are used, but train it in an end-to-end manner for the group re-id task. Note that we omit the context provided from the external graphs in the evaluation stage to apply this model in the same scenarios of our tasks. We set the training epoch to 256 in the group re-id to enhance the training results of the GNN.
    \item HAP2S~\cite{yu2018hard}: A conventional CNN trained by Hard-Aware Point-to-Set loss.
\end{itemize}
See Appendix~\ref{apd:additional_figures} for the additional figures.

\begin{table*}[t]
\centering
\caption{Experimental results of the Fill-In-The-$N$-Blank with four candidates with several models. Evaluation metrics are the accuracy [$\%$].}
\label{tab:benchmark_set2set_matching_4_several_models}
\scalebox{0.87}{
\begin{tabular}{lccccc}
\toprule
Models                                               & 2013                & 2014                     & 2015                     & 2016                     & 2017              \\
\midrule
Set Transformer~\cite{lee2019set}                    & $0.791 (\pm 0.036)$ & $0.743 (\pm 0.041)$      & $0.710 (\pm 0.047)$      & $0.698 (\pm 0.050)$      & $0.675 (\pm 0.051)$     \\
Set Transformer + mean-IW                            & $0.791 (\pm 0.036)$ & $0.755 (\pm 0.037)$      & ${\bf 0.732(\pm 0.041)}$ & $0.710 (\pm 0.048)$      & $0.696 (\pm 0.049)$     \\
Set Transformer + max-IW                             & $0.791 (\pm 0.036)$ & ${\bf 0.760(\pm 0.037)}$ & $0.731 (\pm 0.038)$      & ${\bf 0.714(\pm 0.045)}$ & ${\bf 0.700(\pm 0.045)}$\\ \hline
$\text{BERT}_{\text{SMALL}}$~\cite{devlin2018bert}   & $0.898 (\pm 0.008)$ & $0.882 (\pm 0.005)$      & $0.860 (\pm 0.005)$      & $0.842 (\pm 0.007)$      & $0.830 (\pm 0.011)$     \\
$\text{BERT}_{\text{SMALL}}$ + mean-IW               & $0.898 (\pm 0.008)$ & $0.893 (\pm 0.003)$      & $0.866 (\pm 0.005)$      & $0.844 (\pm 0.007)$      & $0.839 (\pm 0.008)$     \\
$\text{BERT}_{\text{SMALL}}$ + max-IW                & $0.898 (\pm 0.008)$ & ${\bf 0.895(\pm 0.002)}$ & ${\bf 0.878(\pm 0.003)}$ & ${\bf 0.859(\pm 0.002)}$ & ${\bf 0.851(\pm 0.004)}$\\
$\text{BERT}_{\text{BASE}}$~\cite{devlin2018bert}    & $0.880 (\pm 0.011)$ & $0.875 (\pm 0.009)$      & $0.844 (\pm 0.004)$      & $0.827 (\pm 0.007)$      & $0.817 (\pm 0.010)$     \\
$\text{BERT}_{\text{BASE}}$ + mean-IW                & $0.880 (\pm 0.011)$ & ${\bf 0.878(\pm 0.010)}$ & $0.853 (\pm 0.007)$      & $0.840 (\pm 0.013)$      & $0.822 (\pm 0.004)$     \\
$\text{BERT}_{\text{BASE}}$ + max-IW                 & $0.880 (\pm 0.011)$ & $0.877(\pm 0.003)$       & ${\bf 0.870(\pm 0.012)}$ & ${\bf 0.852(\pm 0.003)}$ & ${\bf 0.830(\pm 0.003)}$\\
$\text{BERT}_{\text{BASE-AP}}$~\cite{devlin2018bert} & $0.869 (\pm 0.008)$ & $0.860 (\pm 0.007)$      & $0.825 (\pm 0.013)$      & $0.802 (\pm 0.010)$      & $0.785 (\pm 0.009)$     \\
$\text{BERT}_{\text{BASE-AP}}$ + mean-IW             & $0.869 (\pm 0.008)$ & $0.864 (\pm 0.009)$      & $0.837 (\pm 0.009)$      & $0.829 (\pm 0.010)$      & $0.797 (\pm 0.012)$     \\
$\text{BERT}_{\text{BASE-AP}}$ + max-IW              & $0.869 (\pm 0.008)$ & ${\bf 0.866(\pm 0.005)}$ & ${\bf 0.844(\pm 0.004)}$ & ${\bf 0.833(\pm 0.008)}$ & ${\bf 0.810(\pm 0.002)}$\\ \hline
GNN~\cite{cucurull2019context}                       & $0.370 (\pm 0.032)$ & $0.361 (\pm 0.035)$      & $0.355 (\pm 0.035)$      & $0.351 (\pm 0.035)$      & $0.349 (\pm 0.033)$     \\
GNN + mean-IW                                        & $0.370 (\pm 0.032)$ & $0.366 (\pm 0.033)$      & $0.360 (\pm 0.032)$      & $0.357 (\pm 0.033)$      & $0.354 (\pm 0.033)$     \\
GNN + max-IW                                         & $0.370 (\pm 0.032)$ & ${\bf 0.367(\pm 0.032)}$ & ${\bf 0.362(\pm 0.030)}$ & ${\bf 0.362(\pm 0.031)}$ & ${\bf 0.359(\pm 0.030)}$\\ \hline
HAP2S~\cite{yu2018hard}                              & $0.433 (\pm 0.041)$ & $0.420 (\pm 0.044)$      & $0.409 (\pm 0.053)$      & $0.385 (\pm 0.053)$      & $0.370 (\pm 0.062)$     \\
HAP2S + mean-IW                                      & $0.433 (\pm 0.041)$ & ${\bf 0.428(\pm 0.042)}$ & $0.415 (\pm 0.045)$      & $0.408 (\pm 0.050)$      & $0.395 (\pm 0.057)$     \\
HAP2S + max-IW                                       & $0.433 (\pm 0.041)$ & $0.427 (\pm 0.042)$      & ${\bf 0.418(\pm 0.042)}$ & ${\bf 0.413(\pm 0.047)}$ & ${\bf 0.400(\pm 0.055)}$\\ \hline
Cross Attention~\cite{saito2020exchangeable}         & $0.920 (\pm 0.007)$ & $0.892(\pm 0.009)$       & $0.867(\pm 0.012)$       & $0.844(\pm 0.012)$       & $0.836(\pm 0.014)$      \\
Cross Attention + mean-IW                            & $0.920 (\pm 0.007)$ & $0.901 (\pm 0.008)$      & $0.873(\pm 0.012)$       & ${\bf 0.850(\pm 0.013)}$ & ${\bf 0.843(\pm 0.014)}$\\
Cross Attention + max-IW                             & $0.920 (\pm 0.007)$ & ${\bf 0.913(\pm 0.009)}$ & ${\bf 0.877(\pm 0.010)}$ & $0.849(\pm 0.011)$       & $0.841(\pm 0.013)$      \\ \hline
Cross Affinity~\cite{saito2020exchangeable}          & $0.924 (\pm 0.005)$ & $0.907(\pm 0.006)$       & $0.886(\pm 0.009)$       & $0.865(\pm 0.006)$       & $0.855(\pm 0.003)$      \\
Cross Affinity + mean-IW                             & $0.924 (\pm 0.005)$ & $0.917(\pm 0.002)$       & $0.886(\pm 0.003)$       & $0.866(\pm 0.003)$       & $0.860(\pm 0.002)$      \\
Cross Affinity + max-IW                              & $0.924 (\pm 0.005)$ & ${\bf 0.921(\pm 0.002)}$ & ${\bf 0.896(\pm 0.006)}$ & ${\bf 0.871(\pm 0.001)}$ & ${\bf 0.865(\pm 0.005)}$\\
\bottomrule
\end{tabular}
}
\end{table*}

\begin{table*}[t]
\centering
\caption{Experimental results of the Fill-In-The-$N$-Blank with eight candidates with several models.  Evaluation metrics are the accuracy [$\%$]}
\label{tab:benchmark_set2set_matching_8_several_models}
\scalebox{0.87}{
\begin{tabular}{lccccc}
\toprule
Models                                               & 2013                & 2014                     & 2015                     & 2016                     & 2017                     \\
\midrule
Set Transformer~\cite{lee2019set}                    & $0.711(\pm 0.042)$  & $0.690(\pm 0.040)$       & $0.663(\pm 0.037)$       & $0.637(\pm 0.044)$       & $0.605(\pm 0.041)$       \\
Set Transformer + mean-IW                            & $0.711(\pm 0.042)$  & $0.702(\pm 0.038)$       & $0.685(\pm 0.043)$       & $0.662(\pm 0.040)$       & $0.630(\pm 0.040)$       \\
Set Transformer + max-IW                             & $0.711(\pm 0.042)$  & ${\bf 0.708(\pm 0.037)}$ & ${\bf 0.699(\pm 0.044)}$ & ${\bf 0.675(\pm 0.040)}$ & ${\bf 0.641(\pm 0.045)}$ \\ \hline
$\text{BERT}_{\text{SMALL}}$~\cite{devlin2018bert}   & $0.824(\pm 0.015)$  & $0.799(\pm 0.023)$       & $0.772(\pm 0.025)$       & $0.740(\pm 0.039)$       & $0.716(\pm 0.039)$       \\
$\text{BERT}_{\text{SMALL}}$ + mean-IW               & $0.824(\pm 0.015)$  & ${\bf 0.815(\pm 0.016)}$ & $0.787(\pm 0.023)$       & $0.759(\pm 0.031)$       & $0.720(\pm 0.030)$       \\
$\text{BERT}_{\text{SMALL}}$ + max-IW                & $0.824(\pm 0.015)$  & $0.813(\pm 0.015)$       & ${\bf 0.805(\pm 0.020)}$ & ${\bf 0.782(\pm 0.035)}$ & ${\bf 0.732(\pm 0.030)}$ \\
$\text{BERT}_{\text{BASE}}$~\cite{devlin2018bert}    & $0.810(\pm 0.017)$  & $0.780(\pm 0.027)$       & $0.764(\pm 0.035)$       & $0.733(\pm 0.043)$       & $0.700(\pm 0.044)$       \\
$\text{BERT}_{\text{BASE}}$ + mean-IW                & $0.810(\pm 0.017)$  & $0.799(\pm 0.020)$       & $0.778(\pm 0.028)$       & $0.750(\pm 0.022)$       & $0.714(\pm 0.033)$       \\
$\text{BERT}_{\text{BASE}}$ + max-IW                 & $0.810(\pm 0.017)$  & ${\bf 0.805(\pm 0.023)}$ & ${\bf 0.794(\pm 0.021)}$ & ${\bf 0.764(\pm 0.020)}$ & ${\bf 0.738(\pm 0.029)}$ \\
$\text{BERT}_{\text{BASE-AP}}$~\cite{devlin2018bert} & $0.801(\pm 0.012)$  & $0.765(\pm 0.028)$       & $0.741(\pm 0.030)$       & $0.719(\pm 0.042)$       & $0.694(\pm 0.048)$       \\
$\text{BERT}_{\text{BASE-AP}}$ + mean-IW             & $0.801(\pm 0.012)$  & $0.788(\pm 0.025)$       & $0.763(\pm 0.026)$       & $0.734(\pm 0.026)$       & $0.709(\pm 0.030)$       \\
$\text{BERT}_{\text{BASE-AP}}$ + max-IW              & $0.801(\pm 0.012)$  & ${\bf 0.795(\pm 0.025)}$ & ${\bf 0.777(\pm 0.020)}$ & ${\bf 0.748(\pm 0.028)}$ & ${\bf 0.722(\pm 0.021)}$ \\ \hline
GNN~\cite{cucurull2019context}                       & $0.346(\pm 0.030)$  & $0.329(\pm 0.033)$       & $0.319(\pm 0.036)$       & $0.301(\pm 0.045)$       & $0.287(\pm 0.050)$       \\
GNN + mean-IW                                        & $0.346(\pm 0.030)$  & $0.337(\pm 0.031)$       & $0.325(\pm 0.040)$       & $0.310(\pm 0.040)$       & $0.299(\pm 0.044)$       \\
GNN + max-IW                                         & $0.346(\pm 0.030)$  & ${\bf 0.342(\pm 0.035)}$ & ${\bf 0.338(\pm 0.036)}$ & ${\bf 0.317(\pm 0.038)}$ & ${\bf 0.306(\pm 0.038)}$ \\ \hline
HAP2S~\cite{yu2018hard}                              & $0.382(\pm 0.044)$  & $0.367(\pm 0.046)$       & $0.340(\pm 0.048)$       & $0.325(\pm 0.045)$       & $0.310(\pm 0.053)$       \\
HAP2S + mean-IW                                      & $0.382(\pm 0.044)$  & $0.371(\pm 0.040)$       & $0.343(\pm 0.048)$       & $0.331(\pm 0.046)$       & $0.312(\pm 0.051)$       \\
HAP2S + max-IW                                       & $0.382(\pm 0.044)$  & ${\bf 0.380(\pm 0.043)}$ & ${\bf 0.359(\pm 0.046)}$ & ${\bf 0.342(\pm 0.046)}$ & ${\bf 0.324(\pm 0.048)}$ \\ \hline
Cross Attention~\cite{saito2020exchangeable}         & $0.839(\pm 0.002)$  & $0.808(\pm 0.005)$       & $0.772(\pm 0.005)$       & $0.740(\pm 0.009)$       & $0.717(\pm 0.010)$       \\
Cross Attention + mean-IW                            & $0.839(\pm 0.002)$  & $0.810(\pm 0.009)$       & $0.788(\pm 0.007)$       & $0.752(\pm 0.006)$       & $0.720(\pm 0.008)$       \\
Cross Attention + max-IW                             & $0.839(\pm 0.002)$  & ${\bf 0.836(\pm 0.003)}$ & ${\bf 0.800(\pm 0.005)}$ & ${\bf 0.783(\pm 0.007)}$ & ${\bf 0.743(\pm 0.003)}$ \\ \hline
Cross Affinity~\cite{saito2020exchangeable}          & $0.845(\pm 0.000)$  & $0.822(\pm 0.001)$       & $0.791(\pm 0.005)$       & $0.762(\pm 0.008)$       & $0.741(\pm 0.004)$       \\
Cross Affinity + mean-IW                             & $0.845(\pm 0.000)$  & $0.831(\pm 0.008)$       & $0.792(\pm 0.002)$       & $0.766(\pm 0.004)$       & $0.749(\pm 0.002)$       \\
Cross Affinity+ max-IW                               & $0.845(\pm 0.000)$  & ${\bf 0.842(\pm 0.004)}$ & ${\bf 0.807(\pm 0.003)}$ & ${\bf 0.769(\pm 0.005)}$ & ${\bf 0.753(\pm 0.005)}$ \\
\bottomrule
\end{tabular}
}
\end{table*}

\subsection{Additional experimental results on regression problem under the target shift assumption}
Additionally, we present benchmark results for a regression problem with the target shift.
Although SHIFT15M is a dataset for set-to-set matching, it can also be used for simple regression problems by setting the input to image features and the output to attributes associated with the items.

The target variable is the number of likes that each instance possesses, and the input variables are the user ID and the prices of the items.
In this experiment, we evaluate the robustness of the model for different shift magnitudes, and the magnitudes of the shift are measured by the Wasserstein distance.
We use the simple linear regression as the ordinal ERM, and we compare this with two other covariate shift adaptation methods, IWERM and AIWERM.
\begin{definition}{(Adaptive importance weighted ERM~\cite{shimodaira2000improving})}
AIWERM uses $(p_{te}(\bm{x})/p_{tr}(\bm{x}))^\alpha$ for $\alpha\in[0,1]$ as the weighting function:
\begin{equation}
    \hat{h} = \argmin_{h\in\mathcal{H}}\frac{1}{n_{tr}}\sum^{n_{tr}}_{i=1}\Big(\frac{p_{te}(\bm{x}^{tr}_i)}{p_{tr}(\bm{x}^{tr}_i)}\Big)^\alpha \ell(h(\bm{x}^{tr}_i), y^{tr}_i). \label{eq:aiwerm}
\end{equation}
\end{definition}

\begin{table*}
\centering
\caption{Experimental results for the regression problem with distribution shift adaptation. Evaluation metrics are the MSE.}
\label{tab:benchmark_num_likes_regression}
\scalebox{0.85}{
\begin{tabular}{lcccccc}
\toprule
Models                 & $W=0$            & $W=10$                 & $W=20$                 & $W=30$                 & $W=40$            & $W=50$ \\
\midrule
ERM                    & $9.36(\pm 0.02)$ & $10.44(\pm 0.04)$      & $17.10(\pm 0.06)$      & $28.80(\pm 0.05)$      & $39.56(\pm 0.05)$ & $48.84(\pm 0.05)$ \\
IWERM (optimal)        & $9.36(\pm 0.02)$ & $25.67(\pm 0.12)$      & $32.58(\pm 0.12)$      & $26.83(\pm 0.11)$      & $20.19(\pm 0.10)$ & $14.52(\pm 0.10)$ \\
RIWERM ($\alpha=0.25$) & $9.36(\pm 0.02)$ & ${\bf 9.34(\pm 0.04)}$ & ${\bf 9.53(\pm 0.03)}$ & $11.37(\pm 0.04)$      & $14.89(\pm 0.09)$ & $17.00(\pm 0.14)$ \\
RIWERM ($\alpha=0.50$) & $9.36(\pm 0.02)$ & $9.73(\pm 0.04)$       & $9.57(\pm 0.03)$       & $9.69(\pm 0.04)$       & $12.37(\pm 0.10)$ & $14.68(\pm 0.15)$ \\
RIWERM ($\alpha=0.75$) & $9.36(\pm 0.02)$ & $11.59(\pm 0.05)$      & $11.35(\pm 0.05)$      & ${\bf 9.39(\pm 0.03)}$ & ${\bf 10.46(\pm 0.09)}$ & ${\bf 12.60(\pm 0.14)}$ \\
\bottomrule
\end{tabular}
}
\end{table*}

\begin{definition}{(Relative importance weighted ERM~\cite{yamada2013relative})}
RIWERM uses $p_{te}(\bm{x})/((1-\alpha)p_{tr}(\bm{x}) + \alpha p_{te}(\bm{x}))$ for $\alpha\in[0,1]$ as the weighting function:
\begin{equation}
    \hat{h} = \argmin_{h\in\mathcal{H}}\frac{1}{n_{tr}}\sum^{n_{tr}}_{i=1} \frac{p_{te}(\bm{x}^{tr}_i)}{m_\alpha(\bm{x}^{tr}_i)} \ell(h(\bm{x}^{tr}_i), y^{tr}_i), \label{eq:riwerm}
\end{equation}
where $m_\alpha(\bm{x}^{tr}_i) = (1-\alpha)p_{tr}(\bm{x}^{tr}_i) + \alpha p_{te}(\bm{x}^{tr}_i)$.
\end{definition}
Note that there are other variants of IWERM, as in \cite{kimura2022information}.

\begin{figure}
    \centering
    \includegraphics[width=0.95\linewidth]{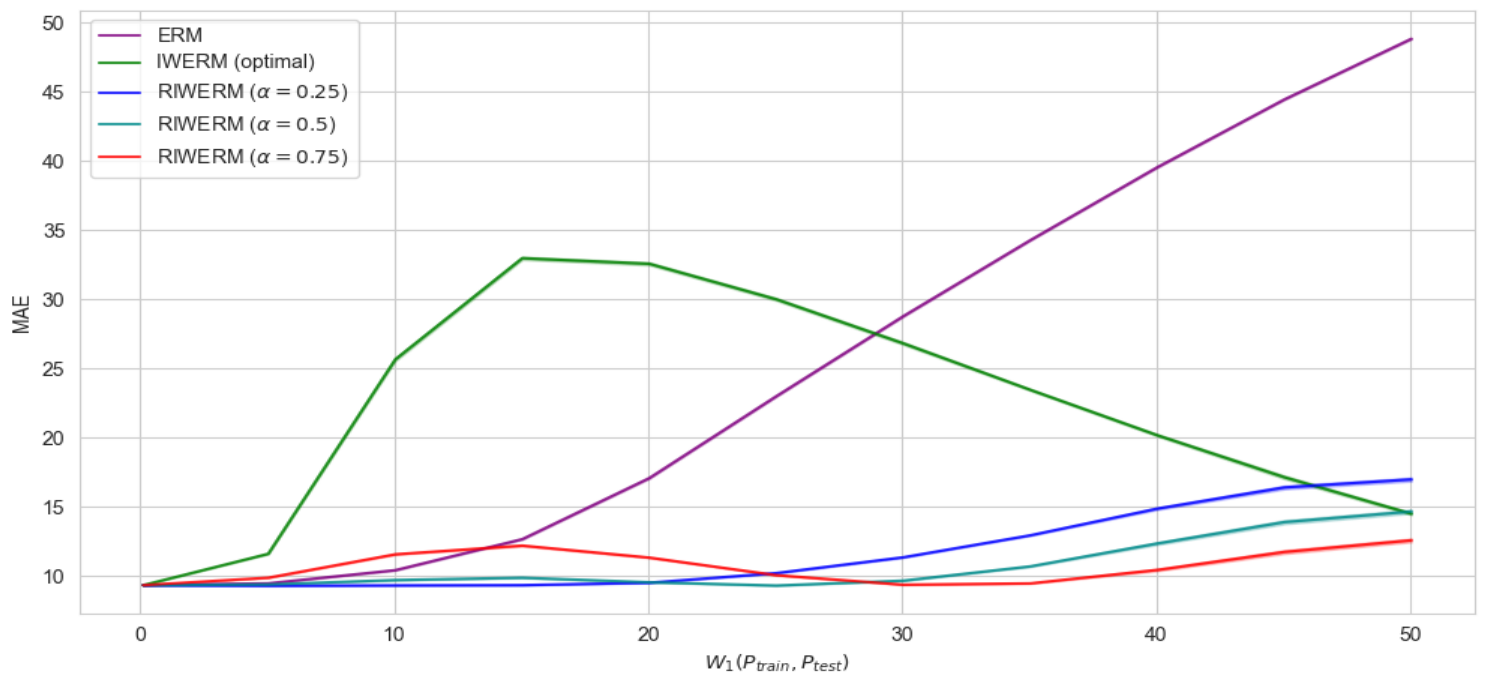}
    \caption{Experimental results of the regression problem for the number of likes}
    \label{fig:num_likes_regression}
\end{figure}

Figure~\ref{fig:num_likes_regression} and Table~\ref{tab:benchmark_num_likes_regression} show the experimental results of the regression task.
These results show that the performance of ERM decreases with increasing shift magnitude, while IWERM and its variants allow the robustness to target shifts.

\section{Related works and conclusion}
\label{sec:related_works_and_conclusion}
There are many studies for handling set data with neural networks.
DeepSets~\cite{zaheer2017deep} proposes a model that satisfies the key concepts of permutation invariant and permutation equivariant for approximating functions that deal with sets.
\begin{definition}[Permutation invariant]
    A set function $f$ is said to be permutation invariant if $f(\mathcal{V}, \mathcal{W}) = f(\pi_w\mathcal{W} , \pi_v\mathcal{V})$ for permutations $\pi_w$ and $\pi_v$.
\end{definition}
\begin{definition}[Permutation equivariance]
    A set function $f$ is said to be permutation equivariant if $f(\pi_w\mathcal{W}, \pi_v\mathcal{V}) = \pi_w f(\mathcal{W}, \mathcal{V})$ for permutations $\pi_w$ and $\pi_v$. Note that $f$ is permutation invariant for permutations within $\mathcal{V}$.
\end{definition}
\begin{theorem}[Sum-decomposable\cite{zaheer2017deep}]
    \label{def:sum_decomposable}
     A function $f$ on set $\mathcal{X}$ from countable particle space is invariant if and only if there exists a decomposition,
     \begin{align*}
         f(\mathcal{X}) = \rho\left(\sum_{\bm{x}\in\mathcal{X}}\phi(\bm{x})\right), \quad f = \rho \circ \sum \circ \phi, \label{eq:sum_decomposable}
     \end{align*}
     with appropriate functions $\phi$ and $\rho$.
\end{theorem}
It is pointed out that the necessary and sufficient of sum-decomposable is guaranteed only for countable sets~\cite{wagstaff2019limitations}.
\begin{theorem}[\cite{wagstaff2019limitations}]
A continuous function $f$ on finite sets $\mathcal{X}$, $|\mathcal{X}| < p$, is invariant if and only if it is sum-decomposable via $\mathbb{R}^p$.
\end{theorem}
That is, for an arbitrary continuous function $f$, the image space of $\phi$ has to have at least dimension $p$, which is both necessary and sufficient.
For more details, see Appendix~\ref{apd:details_sum_decomposable}.

SetTransformer~\cite{lee2019set} is an attention-based neural network module that allows us to handle sets as inputs.
SetVAE~\cite{kim2021setvae}, an extension of VAE to set data, has also been proposed.

Several distribution shift datasets exist for general classification and regression tasks where the input is a vector.
WILDS~\cite{koh2021wilds} is the collection of benchmark datasets~\cite{beery2020iwildcam,bandi2018detection,taylor2019rxrx1,hu2020open,david2020global,borkan2019nuanced,christie2018functional,yeh2020using,ni2019justifying,lu2021codexglue} under the distribution shift, including histopathological images, satellite images or sequence of source code tokens.
PACS~\cite{li2017deeper} and Office-Home~\cite{venkateswara2017deep} adopt the image style to differentiate distributions, and VLCS~\cite{fang2013unbiased} takes data collected independently from four sources as environments.
Also, DomainNet~\cite{zhao2019multi} extends PACS to a far larger scale.

There are also a number of studies that evaluate robustness to distribution shifts by introducing artificial distribution shifts, such as noise corruptions~\cite{geirhos2018generalisation,hendrycks2019benchmarking,mu2019mnist,rusak2020simple,xu2020robust}, spatial
transformations~\cite{engstrom2019exploring,fawzi2015manitest}, ImageNet~\cite{deng2009imagenet} variants (e.g.
ImageNet-A~\cite{hendrycks2021natural}, ImageNet-C~\cite{hendrycks2019benchmarking}, ImageNet-R~\cite{hendrycks2021many}), and adversarial examples~\cite{biggio2013evasion,szegedy2013intriguing,kurakin2018adversarial,goodfellow2014explaining,xiao2018generating,carlini2019evaluating}.
However, a recent study~\cite{taori2020measuring} has indicated that there is no correlation between the robustness of such artificial distribution shifts and the robustness of natural distribution shifts.

We believe that our SHIFT15M is a very useful dataset for evaluating the still underdeveloped task of set-to-set matching under natural distribution shifts.
See Appendix~\ref{apd:known_generalization_bounds} and \ref{apd:related_works} for more related literature including domain adaptation, out-of-domain generalization, concept drift adaptation, or other fashion datasets.
We also provide the datasheet for the SHIFT15M in Appendix~\ref{apd:datasheet}, which is based on \cite{gebru2021datasheets}.

\subsection{Future works}
\begin{itemize}
    \item More ablation studies: more model architectures and parameter influences need to be investigated. 
    \item Experiments on model calibration: model calibration and distribution shift are known to be closely related~\cite{wald2021calibration,ovadia2019can,kumar2022calibrated}. We expect that observing the metrics for evaluating model calibration in experiments on the SHIFT15M will provide meaningful insights.
    \item Additional API development: currently, our API is based on PyTorch~\cite{NEURIPS2019_9015}. In the future, we would like to expand the APIs for other machine learning libraries (e.g., TensorFlow~\cite{abadi2016tensorflow} or Keras~\cite{chollet2015keras}).
\end{itemize}

{\small

}

\clearpage
\appendix
\section{Sample items from the SHIFT15M}
\label{apd:sample_items}

Figure~\ref{fig:shift15m_additional_sample_images_2014},\ref{fig:shift15m_additional_sample_images_2015},\ref{fig:shift15m_additional_sample_images_2016},\ref{fig:shift15m_additional_sample_images_2017} show the additional sample items from the SHIFT15M dataset.

In addition, we provide the year-wise visualization for the SHIFT15M dataset with t-SNE in Figure~\ref{fig:shift15m_tsne_2015},\ref{fig:shift15m_tsne_2016},\ref{fig:shift15m_tsne_2017} like as Figure~\ref{fig:shift15m_tsne}.
t-SNE is a dimensionality reduction technique that is often used for visualizing high-dimensional data in a lower-dimensional space. In this case, the data being visualized is likely a collection of fashion-related features, such as color, texture, and style, that have been extracted from the dataset.
These figures are likely to show the distribution of fashion-related features across different years or time periods, which can help to reveal trends and patterns in the fashion industry over time.

\section{Details and additional figures for the numerical experiments}
Here, we describe the details for numerical experiments and introduce additional figures.
\subsection{Details for numerical experiments}
\paragraph{Training Settings}
We use a stochastic gradient descent method~\cite{robbins1951stochastic,kiefer1952stochastic} with a learning rate of $0.005$, a momentum of $0.5$, and a weight decay of $0.00004$.
We train both the CNN and set-matching model in an end-to-end manner. In each iteration, we randomly swap pairs of sets and items in each set, and randomly flip images horizontally, to learn all the methods stably.

\paragraph{Preparing Set Pairs}
To construct the correct pair of sets to be matched, we randomly halve the given outfit $\mathcal{O}$ into two non-empty proper subsets $\mathcal{X}$ and $\mathcal{Y}$ as follows: $\mathcal{O} \to \{\mathcal{X}, \mathcal{Y}\}$, where $\mathcal{X}\cap\mathcal{Y} = \emptyset$.
Here, we extend this setting to include more general situations.
We select $Q$ outfits ${\mathcal{O}^{(1)},\dots, \mathcal{O}^{(Q)}}$ randomly and split the respective outfits in half $\mathcal{O}^{(q)} \to \{\mathcal{X}^{(q)}, \mathcal{Y}^{(q)}\}$, where $q \in \{1,\dots,Q\}$.
We regard the two sets $\{\mathcal{X}^{(1)},\dots, \mathcal{X}^{(Q)}\}$ and $\{\mathcal{Y}^{(1)},\dots, \mathcal{Y}^{(Q)}\}$as the correct pair, which consists of $Q$ fashion styles. In the training phase, we set $Q = 4$.
Figure~\ref{fig:set_matching} shows the overview of set-to-set matching problem.

\begin{figure}
    \centering
    \includegraphics[width=0.9\linewidth]{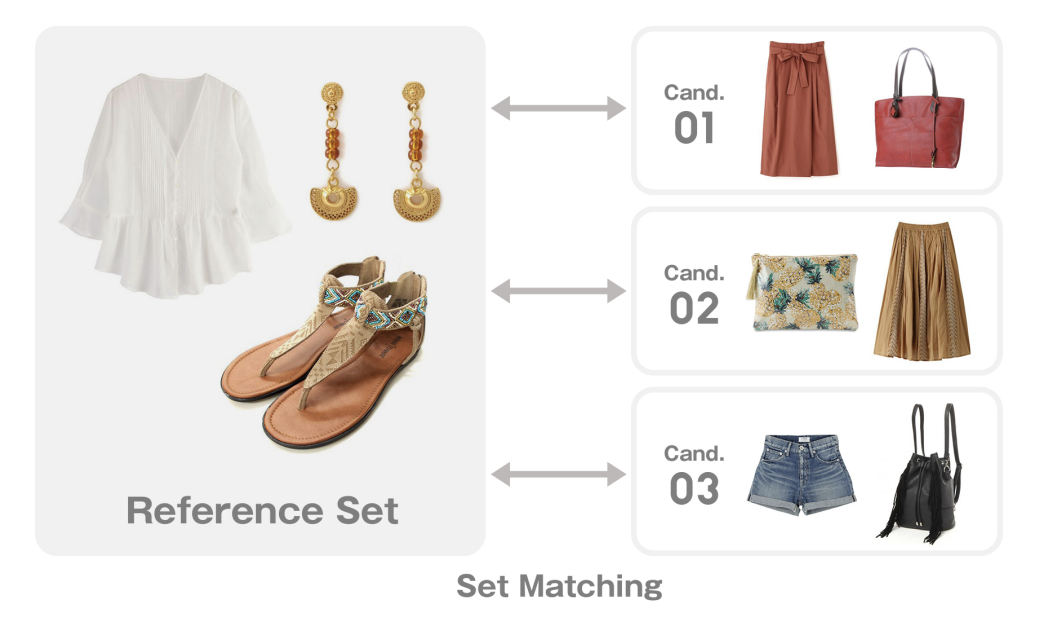}
    \caption{This figure is a citation of Figure 1 from Y.Saito et al.~\cite{saito2020exchangeable}.
    Set-to-set matching aims to answer a fundamental question: which candidate set is more compatible with the reference set than others? In this process, we match the reference set with each candidate set and select the best pair based on some criteria.}
    \label{fig:set_matching}
\end{figure}

\subsection{Additional figures for the numerical experiments}
\label{apd:additional_figures}
Figure~\ref{fig:fill_in_the_n_blank_cross_attention},\ref{fig:fill_in_the_n_blank_set_transformer},\ref{fig:fill_in_the_n_blank_bert},\ref{fig:fill_in_the_n_blank_gnn},\ref{fig:fill_in_the_n_blank_hap2s} show the additional figures for the Fill-In-The-N Blank experiments.
These figures are based on Table~\ref{tab:benchmark_set2set_matching_4_several_models} and \ref{tab:benchmark_set2set_matching_8_several_models}.

Figure~\ref{fig:pca_2d_2015},\ref{fig:pca_2d_2016} and \ref{fig:pca_2d_2017} shows the PCA~\cite{doi:10.1080/14786440109462720} dimensionality reduction for items in each year.
The color of each point corresponds to the item category.

\section{Details for the sum-decomposable set function}
\label{apd:details_sum_decomposable}
From Definition~\ref{def:sum_decomposable}, we say that $(\rho, \phi)$ is a sum-decomposition of $f$.
Given sum-decomposition $(\rho, \phi)$, we write $\Phi(\mathcal{X})=\sum_{\bm{x}\in\mathcal{X}} \phi(\bm{x})$ and $f = \rho \circ \Phi$.
We may also refer to the function $\rho\circ\Phi$ as a sum-decomposition.
\begin{definition}
    Let $(\rho, \phi)$ be a sum-decomposition.
    Write $Z$ for the domain of $\rho$ and the codomain of $\phi$.
    We refer to $Z$ as the latent space of the sum-decomposition $(\rho, \phi)$.
\end{definition}
\begin{definition}
    Given a space $Z$, we say that $f$ is sum-decomposable via $Z$ if $f$ has a sum-decomposition whose latent space is $Z$.
\end{definition}
\begin{definition}
    We say that $f$ is continuously sum-decomposable when there exists a sum-decomposition $(\rho, \phi)$ of $f$ such that both $\rho$ and $\phi$ are continuous.
    $(\rho, \phi)$ is then a continuous sum-decomposition of $f$.
\end{definition}
We give a brief reproduction of the statements and proof of two key theorems from \cite{zaheer2017deep}.
\begin{theorem}
\label{thm:guarantee_sum_decomposable_countable}
    Let $f: 2^{\mathcal{U}} \to \mathbb{R}$ where $\mathcal{U}$ is countable.
    Then $f$ is sum-decomposable via $\mathbb{R}$.
\end{theorem}
\begin{proof}
Since $\mathcal{U}$ is countable, each $\bm{x}\in\mathcal{U}$ can be mapped to a unique element in $\mathbb{N}$ by a bijective function $c:\mathcal{U}\to\mathbb{N}$.
If we can choose $\phi$ so that $\Phi$ is invertible, then we can set $\rho = f\circ\Phi^{-1}$, giving
\begin{align}
    f = \rho \circ \Phi,
\end{align}
so $f$ is sum-decomposable via $\mathbb{R}$.

Considering the mapping $\phi(\bm{x}) = 4^{-c(\bm{x})}$, each set $\mathcal{X}\subset\mathcal{U}$ corresponds to a unique real number $r\coloneqq \Phi(\mathcal{X})$.
The number $r$ can be decoded to the set $\mathcal{X}$ and the element $c^{-1}(n)\in\mathcal{U}$ belongs to $\mathcal{X}$ if and only if the $n$-th digit of $r$ is $1$.
This decoding procedure shows that $\Phi$ is invertible, and the conclusion follows.
\end{proof}
\begin{theorem}
    Let $M\in\mathbb{N}$, and let $f:[0,1]^M\to\mathbb{R}$ be a continuous permutation-invariant function.
    Then, $f$ is continuously sum-decomposable via $\mathbb{R}^{M+1}$.
\end{theorem}
\begin{theorem}
    Deep Sets can represent any continuous permutation-invariant function function of $M$ elements if the dimesion of the latent space of the model is at least $M+1$.
\end{theorem}

Recent work~\cite{wagstaff2022universal} argue that the guarantee of sum-decomposability via $\mathbb{R}$ given by Theorem~\ref{thm:guarantee_sum_decomposable_countable} cannot hold in practice, and prove that the guarantee of sum-decomposability via $\mathbb{R}^{M+1}$ is essentially the bestpossible.
\begin{theorem}
    Let $M,N\in\mathbb{N}$, with $M>N$.
    Then there exist continuous permutation-invariant functions $f:\mathbb{R}^M\to\mathbb{R}$ which are not continuously sum-decomposable via $\mathbb{R}^N$.
\end{theorem}
\begin{theorem}
    Let $M\in\mathbb{N}$, and let $f:\mathbb{R}^M\to\mathbb{R}$ be a continuous permutation-invariant function.
    Then, $f$ is continuously sum-decomposable via $\mathbb{R}^M$.
\end{theorem}
\begin{theorem}
    Denote the set of subsets $[0,1]$ containing at most $M$ elements by $[0,1]^{\leq M}$.
    Let $f:[0,1]^{\leq M}\to\mathbb{R}$ be continuous and permutation-invariant.
    Then, $f$ is continuously sum-decomposable via $\mathbb{R}^M$.
\end{theorem}
Let $M$ be a positive integer, $U\subset\mathbb{R}^M$ be compact, and $f:U\to\mathbb{R}$.
\begin{definition}
    Let $\epsilon > 0$.
    $(\phi, \rho)$ is a within-$\epsilon$ sum-decomposition of $f$ if $|f(\mathcal{U}) - \rho(\Phi(\mathcal{U}))| < \epsilon$ for every $\mathcal{U} \in U$.
\end{definition}
\begin{definition}
    A sequence $(\phi, \rho)_k \coloneqq \{(\phi_k, \rho_k) : k\in\mathbb{N}\}$ is an approximate sum-decomposition of $f$ if, for any $\epsilon > 0$, there is some $K\in\mathbb{N}$ such that $(\phi_K, \rho_K)$ is a within-$\epsilon$ sum-decomposition of $f$.
    We also require that $(\phi_k, \rho_k)$ is a sequence of ever-closer approximations to $f$.
    The existence of an approximate sum-decomposition of $f$ guarantees that $f$ can be approximated arbitrarily closely by sum-decomposition.
\end{definition}
\begin{theorem}
    Let $M, N\in\mathbb{N}$ with $M > N$, and recall that $I_M = [-1,1]^M\subset\mathbb{R}^M$.
    Then there exists a continuous permutation-invariant function $f:I_M\to\mathbb{R}$ which has no continuous approximate sum-decomposition via $\mathbb{R}^N$.
\end{theorem}

\section{Generalization bounds under the distribution shift}
\label{apd:known_generalization_bounds}
The SHIFT15M dataset is a valuable resource for evaluating the performance of machine learning models in settings where the underlying distribution of the data may shift over time. One important aspect of this dataset is that it allows researchers to calculate the distance between the distributions in the train and test splits. This distance measure can provide valuable insight into how much the distribution has shifted between the two sets of data, which in turn can help researchers to better understand the behavior of their models under distributional shifts.

Furthermore, researchers have proposed several generalization bounds that are specific to distribution shifts and that depend on the distance between distributions. These bounds provide a way to quantify the relationship between the performance of a model and the extent of the distributional shift, which is a crucial factor in evaluating the robustness of a model. By considering these generalization bounds when analyzing experimental results on the SHIFT15M dataset, researchers can gain a more precise understanding of how their models perform under distribution shifts and how to improve them. Overall, the SHIFT15M dataset and the associated generalization bounds represent important tools for evaluating and improving the robustness of machine learning models.
Here, we introduce several known generalization bounds under the distribution shift~\cite{redko2019advances}.
\begin{definition}[Total variation distance]
    Denote by $\mathcal{B}$ be the set of measurable subsets under two probability distributions $p_1$ and $p_2$.
    Then, the total variation distance between $p_1$ and $p_2$ is defined as
    \begin{align}
        d_{TV}(p_1, p_2) = 2\sup_{B\in\mathcal{B}}\left|p_1(B) - p_2(B)\right|.
    \end{align}
\end{definition}
Let
\begin{align}
    \mathcal{R}^{\ell}_{\mathfrak{D}}(h) = \mathbb{E}_{(\bm{x},y)\sim\mathfrak{D}}\left[\ell(\bm{x}, y)\right].
\end{align}
\begin{theorem}[\cite{ben2006analysis}]
    Given two domains $\mathfrak{S}$ and $\mathfrak{T}$, and a hypothesis class $\mathcal{H}$ over $\mathcal{X}\times\mathcal{Y}$, the following holds for any $h\in\mathcal{H}$.
    \small
    \begin{align}
        &\mathcal{R}^{\ell_{01}}_{\mathfrak{T}}(h) \leq \mathcal{R}^{\ell_{01}}_{\mathfrak{S}}(h) + d_{TV}(\mathfrak{S}_{\mathcal{X}}, \mathfrak{T}_{\mathcal{X}}) \nonumber \\
        &+ \min\left\{\mathbb{E}_{\bm{x}\sim\mathfrak{S}_{\mathcal{X}}}\left[J(\bm{x}; \mathfrak{S}_{\mathcal{X}}, \mathfrak{T}_{\mathcal{X}})\right], \mathbb{E}_{\bm{x}\sim\mathfrak{T}_{\mathcal{X}}}\left[J(\bm{x}; \mathfrak{S}_{\mathcal{X}}, \mathfrak{T}_{\mathcal{X}})\right]\right\}, \nonumber
    \end{align}
    \normalsize
    where $J(\bm{x}; \mathfrak{S}_{\mathcal{X}}, \mathfrak{T}_{\mathcal{X}}) = |f_{\mathfrak{S}}(\bm{x}) - f_{\mathfrak{T}}(\bm{x})|$, and $f_{\mathfrak{S}}(\bm{x})$ and $f_{\mathfrak{T}}(\bm{x})$ are source and target true labeling functions associated with $\mathfrak{S}$ and $\mathfrak{T}$, respectively.
\end{theorem}
\begin{definition}
    Given marginal distributions of two domains $\mathfrak{S}_\mathcal{X}$ and $\mathfrak{T}_\mathcal{X}$ over the input space $\mathcal{X}$, let $\mathcal{H}$ be a hypothesis class, and denote $\mathcal{H}\Delta\mathcal{H}$ the symmetric difference hypothesis space defined as
    \begin{align}
        h\in\mathcal{H}\Delta\mathcal{H} \Longleftrightarrow h(\bm{x}) = g(\bm{x})\oplus g'(\bm{x})
    \end{align}
    for some $(g, g') \in \mathcal{H}^2$, where $\oplus$ stands for XOR operation.
    Let $I(h)$ denote the set for which $h\in\mathcal{H}\Delta\mathcal{H}$ is the characteristic function, that is $\bm{x}\in I(h) \Leftrightarrow g(\bm{x}) = 1$.
    The $\mathcal{H}\Delta\mathcal{H}$-divergence between $\mathfrak{S}_\mathcal{X}$ and $\mathfrak{T}_\mathcal{X}$ is defined as:
    \small
    \begin{align}
        d_{\mathcal{H}\Delta\mathcal{H}}(\mathfrak{S}_\mathcal{X}, \mathfrak{T}_\mathcal{X}) = 2\sup_{h\in\mathcal{H}\Delta\mathcal{H}}\left|\Pr_{\mathfrak{S}_\mathcal{S}}(I(h)) - \Pr_{\mathfrak{T}_\mathcal{X}}(I(h))\right|.
    \end{align}
    \normalsize
\end{definition}
\begin{lemma}
Let $\mathcal{H}$ be a hypothesis space of VC dimension $VC(\mathcal{H})$.
If $S_u$, $T_u$ are unlabeled samples of size $m$ each, drawn independently from $\mathfrak{S}_\mathcal{X}$ and $\mathfrak{T}_\mathcal{X}$ respectively, then for any $\delta \in (0,1)$ with probability at least $1-\delta$ over the random choice of the samples we have
\footnotesize
\begin{align}
    d_{\mathcal{H}\Delta\mathcal{H}}(\mathfrak{S}_\mathcal{X}, \mathfrak{T}_\mathcal{X}) \leq \hat{d}_{\mathcal{H}\Delta\mathcal{H}}(\mathfrak{S}_u,\mathfrak{T}_u) + 4\sqrt{\frac{2VC(\mathcal{H})\ln 2m + \ln\frac{2}{\delta}}{m}},  \nonumber
\end{align}
\normalsize
where $\hat{d}_{\mathcal{H}\Delta\mathcal{H}}(\mathfrak{S}_u,\mathfrak{T}_u)$ is the empirical $\mathcal{H}\Delta\mathcal{H}$-divergence estimated on $\mathfrak{S}_u$ and $\mathfrak{T}_u$.
\end{lemma}
\begin{lemma}[\cite{ben2010theory}]
    Let $\mathcal{H}$ be a hypothesis space.
    Then, for two unlabeled samples $S_u$, $T_u$ of size $m$ we have
    \begin{align}
        & \hat{d}_{\mathcal{H}\Delta\mathcal{H}}(S_u, T_u) \nonumber \\
        &= 2\left\{1 - \min_{h\in\mathcal{H}\Delta\mathcal{H}}\left(\frac{1}{m}\sum_{\bm{x}\in H_0} 1_{\bm{x} \in S_u} + \frac{1}{m}\sum_{\bm{x}\in H_1}1_{\bm{x} \in T_u}\right)\right\}, \nonumber
    \end{align}
    where $H_0 = \{\bm{x} : h(\bm{x}) = 0\}$ and $H_1 = \{\bm{x} : h(\bm{x}) = 1\}$.
\end{lemma}
\begin{lemma}[\cite{ben2010theory}]
    Let $\mathfrak{S}$ and $\mathfrak{T}$ be two domains on $\mathcal{X}\times\mathcal{Y}$.
    For any pair of hypothesis $(h, h')\in\mathcal{H}\Delta\mathcal{H}^2$, we have
    \begin{align}
        \left|\mathcal{R}^{\ell_{01}}_\mathfrak{T}(h, h') - \mathcal{R}^{\ell_{01}}_\mathfrak{S}(h, h')\right| \leq \frac{1}{2}d_{\mathcal{H}\Delta\mathcal{H}}(\mathfrak{S}_\mathcal{X}, \mathfrak{T}_\mathcal{X}),
    \end{align}
    where
    \begin{align}
        \mathcal{R}^{\ell}_\mathfrak{D}(h, h') = \mathbb{E}_{\bm{x}\sim\mathfrak{D}_\mathcal{X}}\left[\ell(h(\bm{x}), h'(\bm{x}))\right].
    \end{align}
\end{lemma}
\begin{theorem}[\cite{ben2010theory}]
    Let $\mathcal{H}$ be a hypothesis space of VC dimension $VC(\mathcal{H})$.
    If $S_u$ and $T_u$ are unlabeled samples of size $m'$ each, drawn independently from $\mathfrak{S}_\mathcal{X}$ and $\mathfrak{T}_\mathcal{X}$, respectively, then for any $\delta \in (0,1)$ with probability at least $1-\delta$ over the random choice of the samples, we have that for all $h\in\mathcal{H}$
    \begin{align}
        \mathcal{R}^{\ell_{01}}_\mathfrak{T}(h) \leq \mathcal{R}^{\ell_{01}}_\mathfrak{S}(h) &+ \frac{1}{2}\hat{d}_{\mathcal{H}\Delta\mathcal{H}}(S_u, T_u) \nonumber \\
        &+ 4\sqrt{\frac{2VC(\mathcal{H})\ln 2m' + \ln\frac{2}{\delta}}{m'}} + \lambda, \nonumber
    \end{align}
    where $\lambda$ is the combined error of the ideal hypothesis $h^*$.
\end{theorem}
The trade-off between source risk, divergence and the capability to adapt for distribution shift is a crucial phenomenon that plays a significant role in experiments related to distribution shift adaptation using SHIFT15M. This trade-off refers to the balance that needs to be maintained between the risk of the source, the extent of divergence between the source and target domains, and the ability of a model to adapt to distribution shift.
Therefore, it is essential to evaluate the performance of models while keeping this trade-off in mind. By doing so, researchers can ensure that the models they develop are not only accurate but also robust and adaptable to changes in the underlying distribution of the data. Ultimately, this can lead to better real-world performance of machine learning models and improve their utility in various applications.

Since our numerical experiments use the Wasserstein distance $W^p_p$ as the distance between distributions, we also introduce generalization bounds based on this distance:
\begin{align}
    W^p_p(\mathfrak{S}_\mathcal{X}, \mathfrak{T}_\mathcal{X}) \coloneqq \inf_{\gamma\in\Pi(\mathfrak{S}_\mathcal{X}, \mathfrak{T}_\mathcal{X})}\int_{\mathcal{X}\times\mathcal{X}}c(\bm{x}, \bm{x}')^p d\gamma(\bm{x},\bm{x}'), \nonumber
\end{align}
where $c:\mathcal{X}\times\mathcal{X}\to\mathbb{R}_+$ is a cost function for transporting one unit of mass $\bm{x}$ to $\bm{x}'$, and $p\in [1,+\infty]$.
\begin{lemma}[\cite{redko2017theoretical}]
    \label{lem:redko2017}
    Let $\mathfrak{S}_\mathcal{X},\mathfrak{T}_\mathcal{X} \in \mathcal{P}(\mathcal{X})$ be two probability measures on $\mathbb{R}^d$.
    Assume that the cost function $c(\bm{x},\bm{x}') = \|\phi(\bm{x}) - \phi(\bm{x}')\|_{\mathfrak{H}_{k_\ell}}$, where $\mathfrak{H}$ is an RKHS equipped with kernel $k_\ell : \mathcal{X}\times\mathcal{X}\to\mathbb{R}$ induced by $\phi:\mathcal{X}\to\mathfrak{H}_{k_\ell}$ and $k_\ell(\bm{x}, \bm{x}')=\langle\phi(\bm{x}),\phi(\bm{x}')\rangle_{\mathfrak{H}_{k_\ell}}$.
    Assume further that the loss function $\ell_{h,f}:\bm{x}\mapsto\ell(h(\bm{x}), f(\bm{x}))$ is convex, symmetric and bounded and obeys the triangular equality and has the parametric form $|h(\bm{x}) - f(\bm{x})|^q$ for some $q > 0$.
    Assume also that kernel $k_\ell$ in the RKHS $\mathfrak{H}_{k_\ell}$ is square-root integrable with respect to both $\mathfrak{S}_\mathcal{X}$, $\mathfrak{T}_\mathcal{X}$ for all $\mathfrak{S}_\mathcal{X},\mathfrak{T}_\mathcal{X}\in\mathcal{P}(\mathcal{X})$ where $\mathcal{X}$ is separable and $0 \leq k_\ell(\bm{x}, \bm{x}') \leq K, \forall \bm{x},\bm{x}'\in\mathcal{X}$ if $\|\ell\|_{\mathfrak{H}_{k_\ell}} \leq 1$, then the following holds:
    \begin{align}
        \mathcal{R}^{\ell_q}_\mathfrak{T}(h, h') \leq \mathcal{R}^{\ell_q}_\mathfrak{S}(h, h') + W_1(\mathfrak{S}_\mathcal{X}, \mathfrak{T}_\mathcal{X}),
    \end{align}
    for all $(h,h')\in\mathfrak{H}^2_{k_\ell}$.
\end{lemma}
This lemma allows us to relate the source and target errors by using Wasserstein distance.
Also, we have
\begin{align*}
    \|\phi(\bm{x}) - \phi(\bm{x}')\|_{\mathfrak{H}} &= \sqrt{\langle\phi(\bm{x}) - \phi(\bm{x}'), \phi(\bm{x}) - \phi(\bm{x}')\rangle_{\mathfrak{H}}} \\
    &= \sqrt{k(\bm{x},\bm{x}) - 2k(\bm{x}, \bm{x}') + k(\bm{x}', \bm{x}')}.
\end{align*}
\begin{theorem}[\cite{bolley2007quantitative}]
    Let $\mu$ be a probability measure in $\mathbb{R}^d$ so that for some $\alpha > 0$, $\in_{\mathbb{R}^d}\exp\{\alpha\|\bm{x}\|^2\}d\mu < \infty$, and $\hat{\mu} = \frac{1}{N}\sum^N_{i=1}\delta_{\bm{x}_i}$ be its associated empirical measure defined on a sample of independent variables $\{\bm{x}_i\}^N_{i=1}$ drawn from $\mu$.
    Then for any $d' > d$ and $\zeta' < \sqrt{2}$ there exists some constant $N_0$ depending on $d'$ and some square exponential moment of $\mu$ such that, for any $\epsilon>0$ and $N\geq N_0\max(\epsilon^{-(d'+2)}, 1)$,
    \begin{align}
        \Pr\left\{W_1(\mu, \mu') > \epsilon\right\} \leq \exp\left\{-\frac{\zeta'}{2}N\epsilon^2\right\}
    \end{align}
    where $d'$ and $\zeta'$ can be calculated explicitly.
\end{theorem}
\begin{theorem}
    Under the assumption of Lemma~\ref{lem:redko2017}, let $S_u$ and $T_u$ be two samples of size $N_S$ and $N_T$ drawn i.i.d. from $\mathfrak{S}_\mathcal{X}$ and $\mathfrak{T}_\mathcal{X}$, respectively.
    Let $\hat{\mathfrak{S}}_\mathcal{X}=\frac{1}{N_S}\sum^{N_S}_{i=1}\delta_{\bm{x}^S_i}$ and $\hat{\mathfrak{T}}_\mathcal{X}=\frac{1}{N_T}\sum^{N_T}_{i=1}\delta_{\bm{x}^T_i}$ be the associated empirical measures.
    Then for any $d' > d$ and $\zeta' < \sqrt{2}$, there exists some constant $N_0$ depending on $d'$ such that for any $\delta > 0$ and $\min(N_S, N_T) \geq N_0\max(\delta^{-(d' + 2)}, 1)$ with probability at least $1-\delta$ for all $h$, we have
    \begin{align}
        \mathcal{R}^{\ell_q}_\mathfrak{T}(h) \leq \mathcal{R}^{\ell_q}_\mathfrak{S}(h) &+ W_1(\hat{\mathfrak{S}}_\mathcal{X}, \hat{\mathfrak{T}}_\mathcal{X}) \nonumber\\
        &+ \sqrt{\frac{2\ln\frac{1}{\delta}}{\zeta'}}\left(\sqrt{\frac{1}{N_S}} + \sqrt{\frac{1}{N_T}}\right) + \lambda, \nonumber
    \end{align}
    where $\lambda$ is the combined error of the ideal hypothesis $h^*$ that minimizes the combined error of $\mathcal{R}^{\ell_q}_\mathfrak{S}(h) + \mathcal{R}^{\ell_q}_\mathfrak{T}(h)$.
\end{theorem}

\section{Density ratio estimation for the importance weighted set-to-set matching}
\label{apd:density_ratio}
In the experiments, we proposed the importance-weighted set-to-set matching as the baseline method.
Recall that, under the covariate shift assumption, we have
\small
\begin{align*}
    \mathbb{E}_{tr}\left[\frac{p_{te}(\bm{x})}{p_{tr}(\bm{x})}\ell(h(\bm{x}), y)\right] &= \int\frac{p_{te}(\bm{x})}{p_{tr}(\bm{x})}\ell(h(\bm{x}), y) p_{tr}(\bm{x},y)d\bm{x}dy \\
    &= \int \ell(h(\bm{x}), y) p_{te}(\bm{x},y)d\bm{x}dy \\
    &= \mathbb{E}_{te}\left[\ell(h(\bm{x}), y)\right].
\end{align*}
\normalsize
Density ratio estimation is a technique used in machine learning to quantify the difference between the distributions of two datasets. The goal of density ratio estimation is to estimate the density ratio $r(\bm{x})$, which is defined as the ratio of the probability density functions of the test distribution $p_{te}(\bm{x})$ and the training distribution $p_{tr}(\bm{x})$ as $r(\bm{x}) = p_{te}(\bm{x}) / p_{tr}(\bm{x})$.

The density ratio is a powerful tool because it allows us to compare the distributions of the two datasets and measure the extent of the distribution shift. In particular, if we can estimate the density ratio accurately, we can obtain a consistent estimator under the distribution shift. This means that we can use this estimator to accurately predict the performance of our model on the test set, even if the distribution of the test set is different from that of the training set.

To estimate the density ratio, we used a probabilistic classifier. However, there are several other strategies that exist for estimating density ratios.
Each of these methods has its own strengths and weaknesses, and the choice of method will depend on the specific application and the characteristics of the data.

The main idea of moment matching is, to match the moments of $\hat{p}_{te}(\bm{x}) = \hat{r}(\bm{x})p_{tr}(\bm{x})$ and $p_{te}(\bm{x})$.
For example, matching the mean is
\begin{align}
    \int\bm{x}\hat{r}(\bm{x})p_{tr}(\bm{x})d\bm{x} = \int\bm{x}p_{te}(\bm{x})d\bm{x}.
\end{align}
However, matching a finite number of moments does not necessarily yield the true density ratio even asymptotically.
Kernel mean matching~\cite{huang2006correcting,gretton2009covariate} allows that All moments are efficiently matched in Gaussian RKHS $\mathfrak{H}$:
\begin{align}
    \min_{\hat{r}\in\mathfrak{H}}\left\|\int K(\bm{x}, \cdot)\hat{r}(\bm{x})p_{tr}(\bm{x})d\bm{x} - \int K(\bm{x}, \cdot)p_{tr}(\bm{x})d\bm{x}\right\|^2_{\mathfrak{H}}. \nonumber
\end{align}

KLIEP~\cite{nguyen2010estimating,sugiyama2007direct} minimize KL-divergence from $p_{te}(\bm{x})$ to $\hat{p}_{te}(\bm{x}) = \hat{r}(\bm{x})p_{tr}(\bm{x})$ as
\begin{align}
    \min_{\hat{r}}D_{KL}[p_{te}(\bm{x})\|\hat{p}_{te}] = \min_{\hat{r}}\int p_{te}(\bm{x})\frac{p_{te}(\bm{x})}{\hat{r}(\bm{x})p_{tr}(\bm{x})}d\bm{x}. \nonumber
\end{align}

Least-Squares Importance Fitting (LSIF)~\cite{kanamori2009least} minimize squared loss:
\begin{align*}
    \min_{\hat{r}}\int \left(\hat{r}(\bm{x}) - r(\bm{x})\right)^2 p_{tr}(\bm{x})d\bm{x}.
\end{align*}

\section{Other related literature}
\label{apd:related_works}
Here we present the rest of the relevant literature.

\subsection{Concept drift}
In addition to covariate shift and target shift, the following concept drift~\cite{tsymbal2004problem,gama2014survey,lu2018learning} is also well known.
\begin{definition}[Concept drift]
    We consider that the two distributions $p_{tr}(\bm{x}, y)$ and $p_{te}(\bm{x}, y)$ satisfy the concept drift assumption if the following conditions hold:
    \begin{align*}
        p_{tr}(\bm{x} | y) \neq p_{te}(\bm{x} | y), \\
        p_{tr}(y | \bm{x}) \neq p_{tr}(y | \bm{x}).
    \end{align*}
\end{definition}
Unlike covariate shift and target shift, concept drift assumes that the conditional probabilities between the two distributions are different.
This means that a model that is well-specified in the training distribution will be miss-specified in the test distribution, making it the most difficult problem setup.
The strategies for addressing concept drift can be broadly categorized as follows
\begin{itemize}
    \item concept drift detection;
    \item concept drift understanding;
    \item concept drift adaptation.
\end{itemize}

Concept drift detection refers to the strategies that characterize and quantify concept drift via identifying change points or change time intervals~\cite{basseville1993detection}.
Many algorithms focus on tracking changes in the online error rate of base classifiers, such as Drift Detection Method (DDM)~\cite{gama2004learning}, LLDD~\cite{gama2006learning}, EDDM~\cite{baena2006early}, HDDM~\cite{frias2014online} or FW-DDM~\cite{liu2017fuzzy}.
Other strategies based on data distribution~\cite{lu2014concept,song2007statistical,lu2016concept} or multiple hypothesis test~\cite{alippi2008just,wang2015concept,zhang2017three}.

Concept drift understanding refers to retrieving concept drift information about "When" (the time at which the concept drift occurs and how long the drift lasts)~\cite{shao2014prototype,lu2014concept}, "How" (the severity/degree of concept drift)~\cite{nishida2007detecting}, and "Where" (the drift regions of concept drift)~\cite{liu2017regional}.

The main approaches for concept drift adaptation are training new models for global drift~\cite{bach2008paired,manly2000cumulative}, model ensemble for recurring drift~\cite{gomes2017adaptive,kolter2007dynamic,elwell2011incremental}, or adjusting existing models for regional drift~\cite{hulten2001mining,gama2003accurate,yang2012incrementally}.

\subsection{Domain adaptation}
Domain adaptation~\cite{daume2006domain,daum2007frustratingly,ganin2015unsupervised,saunders2022domain} is often used in a similar context to distribution shift adaptation.
It is often referred to as visual domain adaptation~\cite{fernando2013unsupervised,jamal2020rethinking,wang2018deep,patel2015visual}, especially in the field of computer vision.
This concept is often referred to indistinguishably from covariate shift.
Depending on the availability of the source and target domain data, the domain adaptation problem can be defined in many different ways.
\begin{itemize}
    \item {\bf supervised domain adaptation}: In supervised domain adaptation~\cite{motiian2017unified,tzeng2015simultaneous}, labeled data is available in both the source and target domains. The model is trained on the labeled data from the source domain and then adapted to the target domain using the labeled data from the target domain.
    \item {\bf semi-supervised domain adaptation}: In semi-supervised domain adaptation~\cite{kumar2010co,saito2019semi,donahue2013semi,yao2015semi}, a small amount of labeled data is available in the target domain in addition to the labeled data in the source domain. The model is trained on the labeled data from both domains and then adapted to the target domain using both the labeled and unlabeled data from the target domain.
    \item {\bf unsupervised domain adaptation}: In unsupervised domain adaptation~\cite{ganin2015unsupervised,kang2019contrastive,li2020model,sener2016learning,long2016unsupervised}, labeled data is only available in the source domain. The model is trained on the labeled data from the source domain and then adapted to the target domain using only the unlabeled data from the target domain.
    \item {\bf multi-source domain adaptation}: Unlike traditional domain adaptation, which involves adapting from a single source domain, multi-source domain adaptation~\cite{peng2019moment,yang2020curriculum,zhao2020multi} deals with situations where there are multiple source domains with different but related feature distributions.
\end{itemize}

\subsection{Out-of-distribution generalization}
Another similar concept is out-of-distribution generalization~\cite{arjovsky2020out,hendrycks2021many,li2022out,shen2021towards,ye2021towards}.
The main difference with distribution shift problem is that in the out-of-distribution generalization problem setting we do not have any access to the test data during training.
For example, under the covariate shift hypothesis, we often assume that we are allowed to access unlabeled test data.
Major approaches for out-of-distribution generalization are disentangled representation learning~\cite{bengio2013representation,higgins2017beta,kim2018disentangling}, causal representation learning~\cite{yang2021causalvae,scholkopf2021toward}, domain generalization~\cite{zhou2022domain,li2018domain,zhou2021domain,li2017deeper}, invariant learning~\cite{pfister2019invariant,rothenhausler2018anchor,heinze2018invariant}, stable learning~\cite{shen2020stable,zhang2021deep} and distributionally robust optimization~\cite{rahimian2019distributionally,delage2010distributionally,goh2010distributionally}.

\subsection{Other related topics}
\paragraph{Active learning}
Active learning~\cite{settles2009active,prince2004does,kimura2020batch} is a subfield of machine learning that focuses on developing algorithms that can automatically select the most informative data to be labeled by an expert or a human annotator.
The goal of active learning is to achieve high accuracy models with minimal labeled data.
Active learning can help mitigate the effects of distribution shift by actively selecting the most informative data points to be labeled by an expert or a human annotator.
By doing so, the active learning algorithm can ensure that the labeled data used to train the model is representative of the data that the model will encounter in the real world.

\paragraph{Continual learning}
Continual learning~\cite{parisi2019continual}, also known as lifelong learning or incremental learning, is a subfield of machine learning that deals with the problem of learning from a continuous stream of data over time, without forgetting previously learned knowledge.
One of the key challenges in continual learning is avoiding catastrophic forgetting~\cite{kirkpatrick2017overcoming,kemker2018measuring}, which occurs when a model overwrites previously learned knowledge with new information. 
Both catastrophic forgetting and distribution shift can lead to poor model performance and may require the model to be retrained or updated to address these issues.
Continual learning algorithms are designed to address these challenges by enabling models to learn from a continuous stream of data over time, without forgetting previously learned knowledge and without being affected by distribution shift.

\paragraph{Transfer learning}
Transfer learning~\cite{weiss2016survey,torrey2010transfer,zhuang2020comprehensive} is a machine learning technique that involves leveraging knowledge learned from one task to improve performance on a different, but related task. In transfer learning, a model is first trained on a source task, which provides a foundation of knowledge and skills.
Then, the model is fine-tuned on a target task, which is related to the source task but may have different characteristics.
Transfer learning is motivated by the fact that many tasks in machine learning share common features and patterns.
By leveraging knowledge from a related task, a model can learn to recognize patterns and features more effectively, even if the target task has different characteristics.

\subsection{Fashion datasets}
Several fashion datasets have been introduced in the research community to facilitate the development and evaluation of fashion-related machine learning models. These datasets contain various types of fashion-related data, including images, textual descriptions, and attribute labels.

\paragraph{Fashion MNIST~\cite{xiao2017/online}}
Fashion MNIST is a publicly available dataset of Zalando's article images that is widely used for research and development in computer vision and machine learning. The dataset consists of a training set of 60,000 examples and a test set of 10,000 examples, each of which is a 28$\times$28 grayscale image. The images in the dataset are associated with labels from 10 different classes.

\paragraph{DeepFashion~\cite{liuLQWTcvpr16DeepFashion}}
DeepFashion is a dataset containing around 800K diverse fashion images with their rich annotations (46 categories, 1,000 descriptive attributes, bounding boxes and landmark information) ranging from well-posed product images to real-world-like consumer photos.

\paragraph{Fashionpedia~\cite{jia2020fashionpedia}}
Fashionpedia consists of user uploaded 48K street-fashion photos collected from free license websites such as Unsplash, Kaboompics etc. These photos contain people wearing variety of clothes and accessories captured in different background, weather and camera conditions.

\paragraph{Polyvore~\cite{vasileva2018learning}}
Polyvore is a crowd-sourced dataset containing outfits or sets of fashion items that complement each other. It consists of manually labeled 68K outfits that are split into 53K, 10K and 5K into training, validation and testing sets, respectively

\paragraph{Polyvore-disjoint~\cite{vasileva2018learning}}
Polyvore-disjoint is a subset of the Polyvore dataset created by removing outfits that have common items between training, validation and testing sets. The dataset is challenging compared to Polyvore dataset and consists of 32K outfits. During inference, we use only product images patches and do not use the metadata associated with the products.

\paragraph{Fashion IQ~\cite{guo2019fashion}}
The Fashion IQ dataset is a collection of images and associated metadata designed to facilitate research on natural language-based interactive image retrieval in the fashion domain.
It is the fashion dataset that includes human-written relative captions that have been annotated for similar pairs of images, as well as real-world product descriptions and attribute labels as side information. The dataset was created to help researchers develop new methods for retrieving fashion images using natural language queries.

\section{Open questions}
\paragraph{Learning guarantees for set-to-set matching under the distribution shift}
As introduced in Appendix~\ref{apd:known_generalization_bounds}, there are a number of studies on generalized error analysis under distribution shifts for ordinal classification and regression problems.
However, theoretical analysis of set matching under distributional shifts remains unexplored.
Moreover, even under the i.i.d. assumption, there is little research on theoretical analysis of set matching~\cite{kimura2023generalization}.

\paragraph{Correlation of performance on SHIFT15M dataset with performance on other different datasets}
As introduced in Section~\ref{sec:related_works_and_conclusion}, there are many datasets for classification and regression under distribution shifts.
Since SHIFT15M provides data loaders for ordinary classification and regression as well as set matching, it is useful to examine the correlation between performance on these tasks and performance on other data sets.

\newcommand{\dssectionheader}[1]{%
   \noindent\framebox[\linewidth]{%
      {\fontfamily{phv}\selectfont \textbf{\textcolor{blue}{#1}}}
   }
}

\newcommand{\dsquestion}[1]{%
   {\noindent \scriptsize {\fontfamily{phv}\selectfont \textcolor{blue}{\textbf{#1}}}}
}

\newcommand{\dsquestionex}[2]{%
   {\noindent \scriptsize {\fontfamily{phv}\selectfont \textcolor{blue}{\textbf{#1} #2}}}
}

\newcommand{\dsanswer}[1]{%
   {\noindent \footnotesize {#1} \medskip}
}

\clearpage

\begin{figure*}[t]
    \centering
    \includegraphics[width=0.99\linewidth]{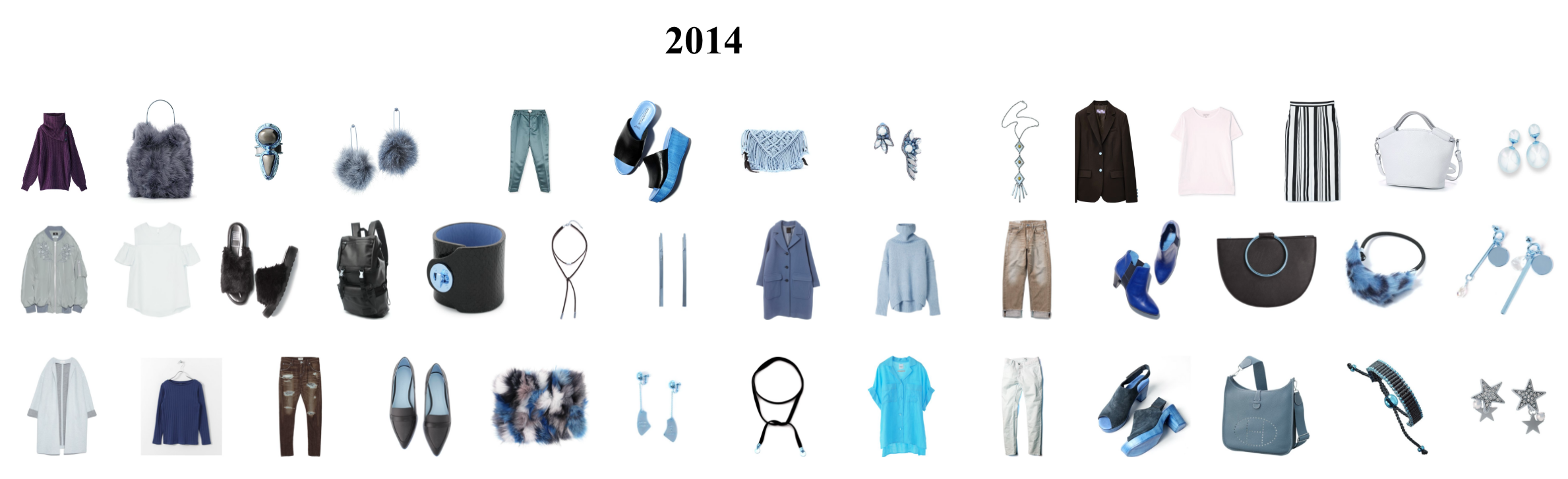}
    \caption{Additional sample items from 2014.}
    \label{fig:shift15m_additional_sample_images_2014}
\end{figure*}
\begin{figure*}[t]
    \centering
    \includegraphics[width=0.99\linewidth]{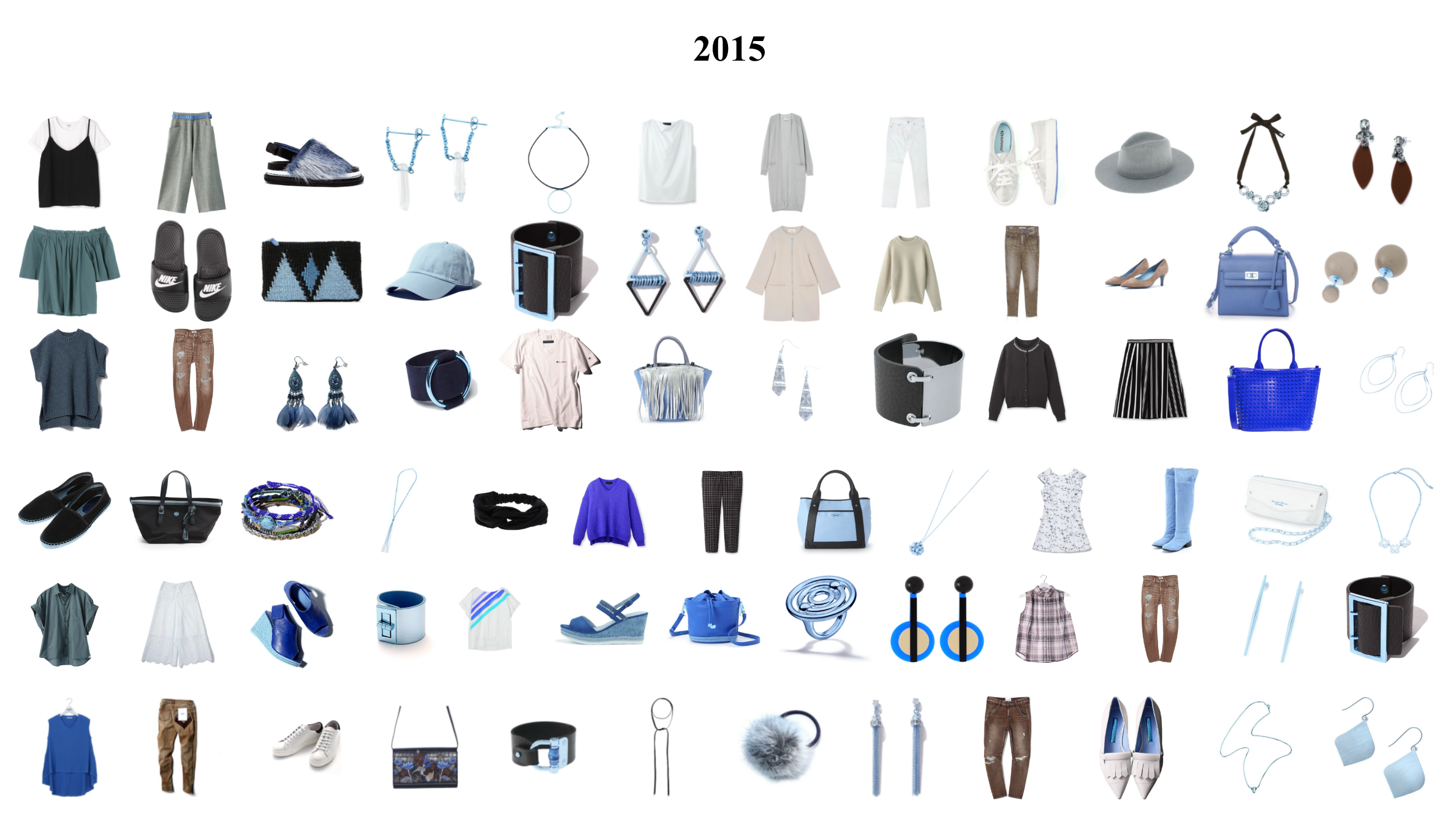}
    \caption{Additional sample items from 2015.}
    \label{fig:shift15m_additional_sample_images_2015}
\end{figure*}
\begin{figure*}[t]
    \centering
    \includegraphics[width=0.99\linewidth]{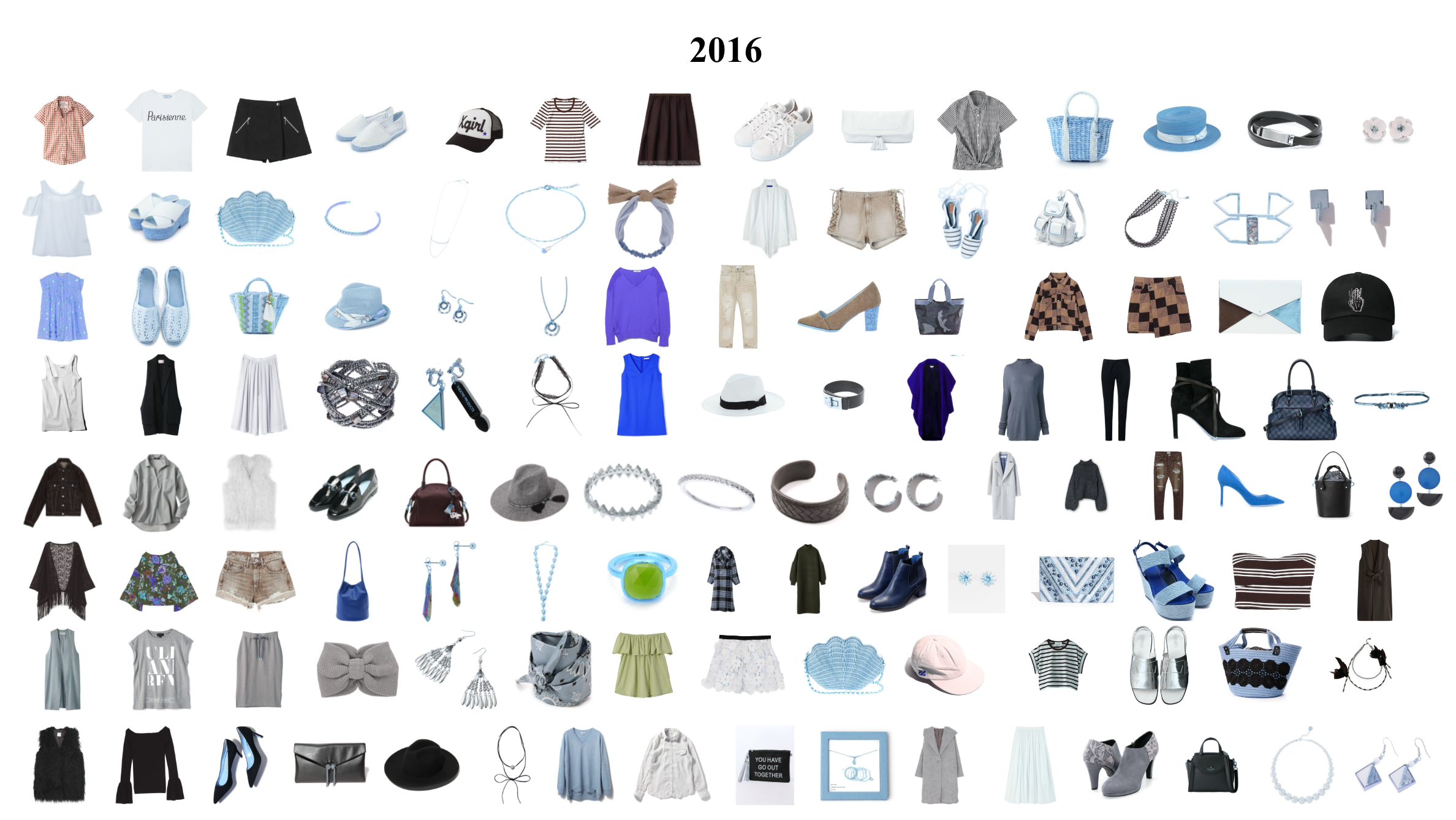}
    \caption{Additional sample items from 2016.}
    \label{fig:shift15m_additional_sample_images_2016}
\end{figure*}
\begin{figure*}[t]
    \centering
    \includegraphics[width=0.99\linewidth]{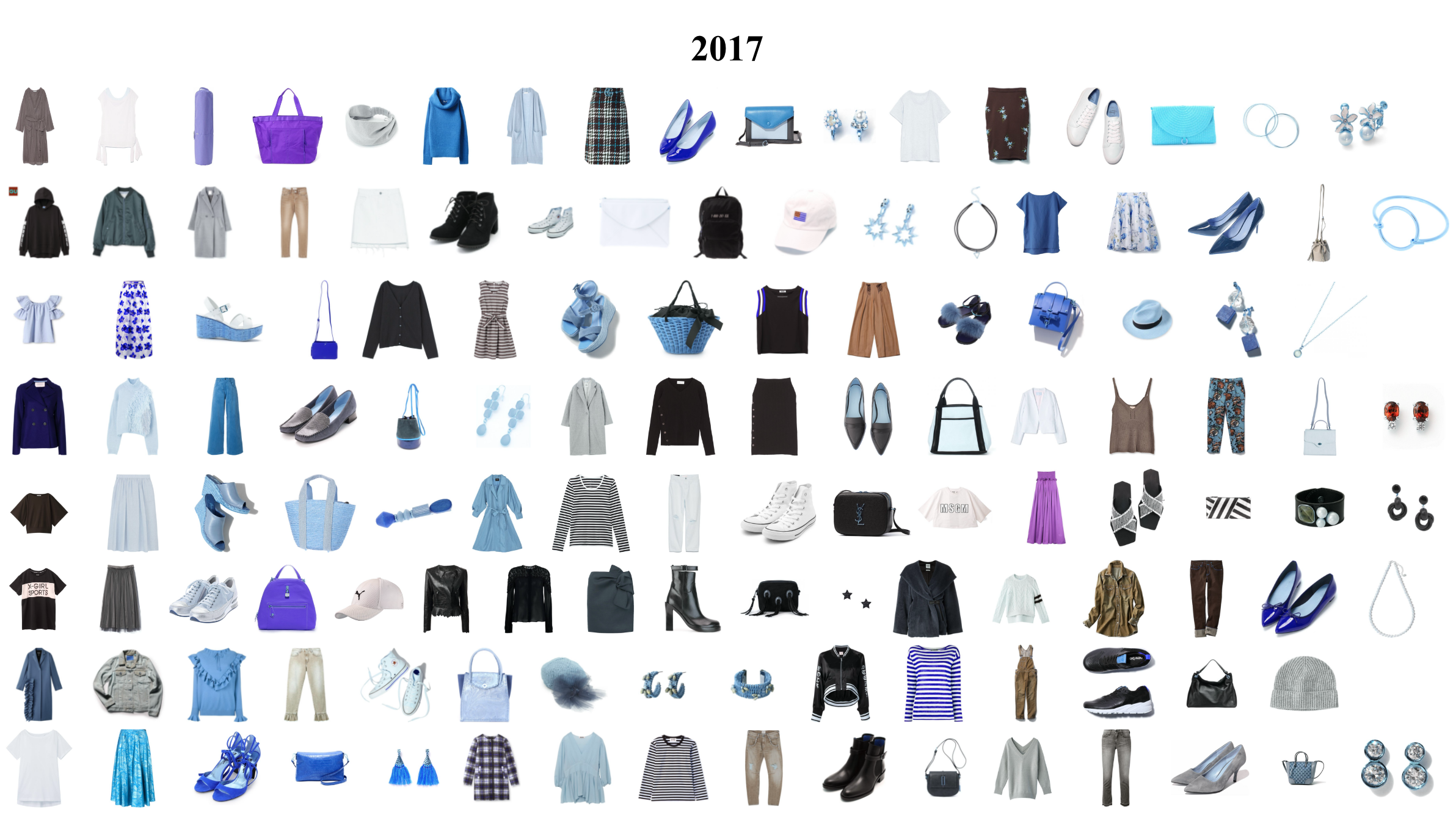}
    \caption{Additional sample items from 2017.}
    \label{fig:shift15m_additional_sample_images_2017}
\end{figure*}

\begin{figure*}
    \centering
    \includegraphics[width=0.95\linewidth]{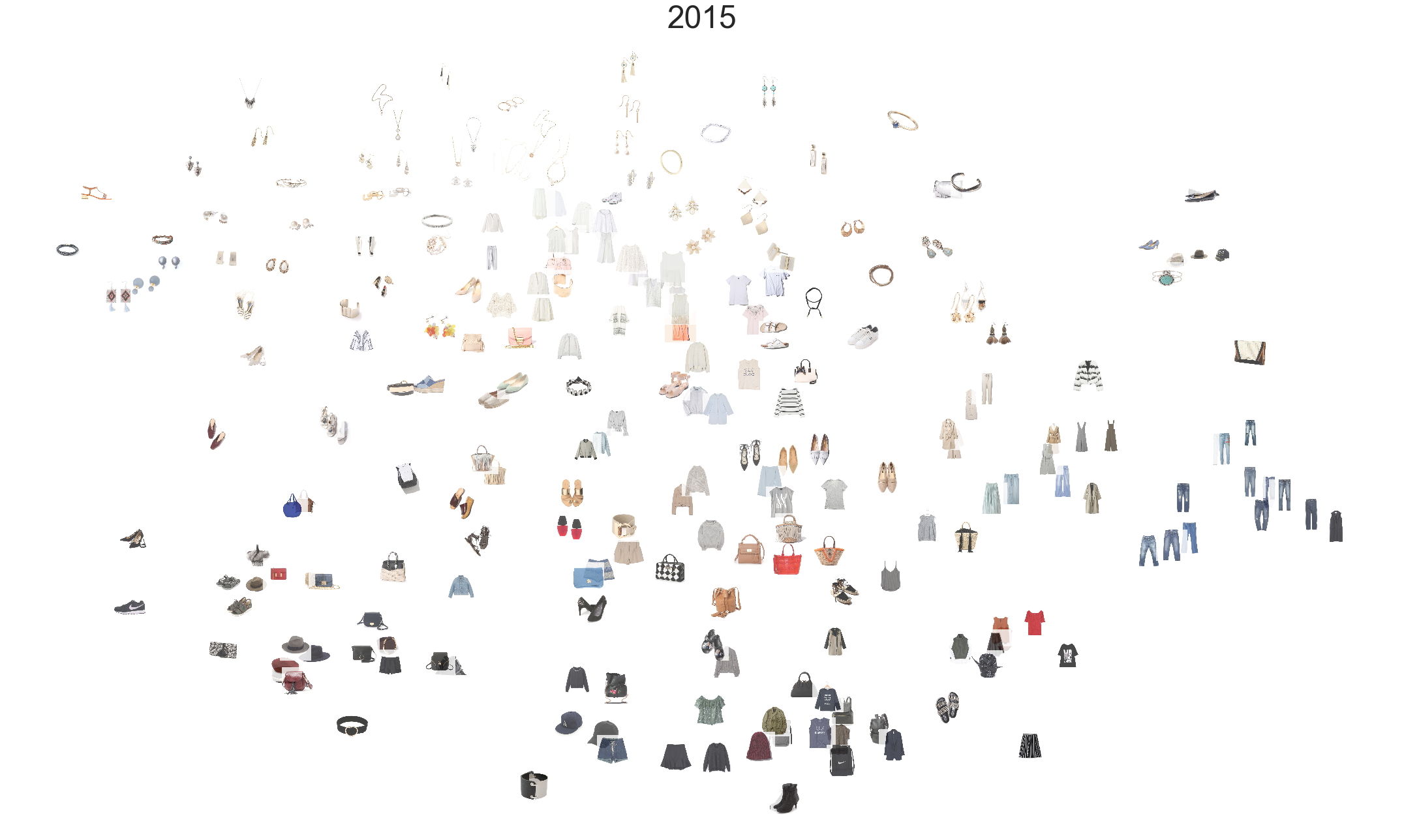}
    \caption{t-SNE visualization for the SHIFT15M (2015).}
    \label{fig:shift15m_tsne_2015}
\end{figure*}
\begin{figure*}
    \centering
    \includegraphics[width=0.95\linewidth]{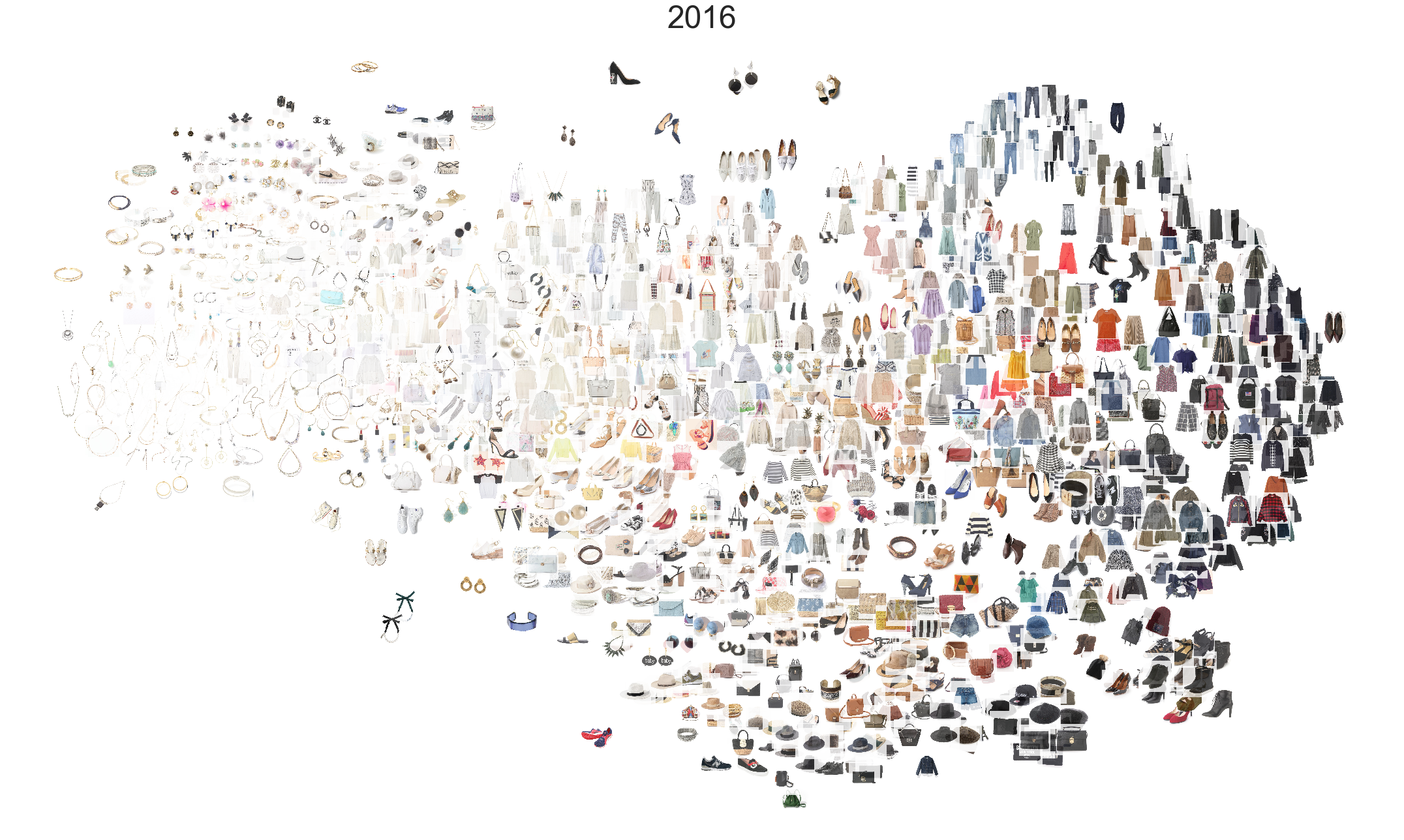}
    \caption{t-SNE visualization for the SHIFT15M (2016).}
    \label{fig:shift15m_tsne_2016}
\end{figure*}
\begin{figure*}
    \centering
    \includegraphics[width=0.95\linewidth]{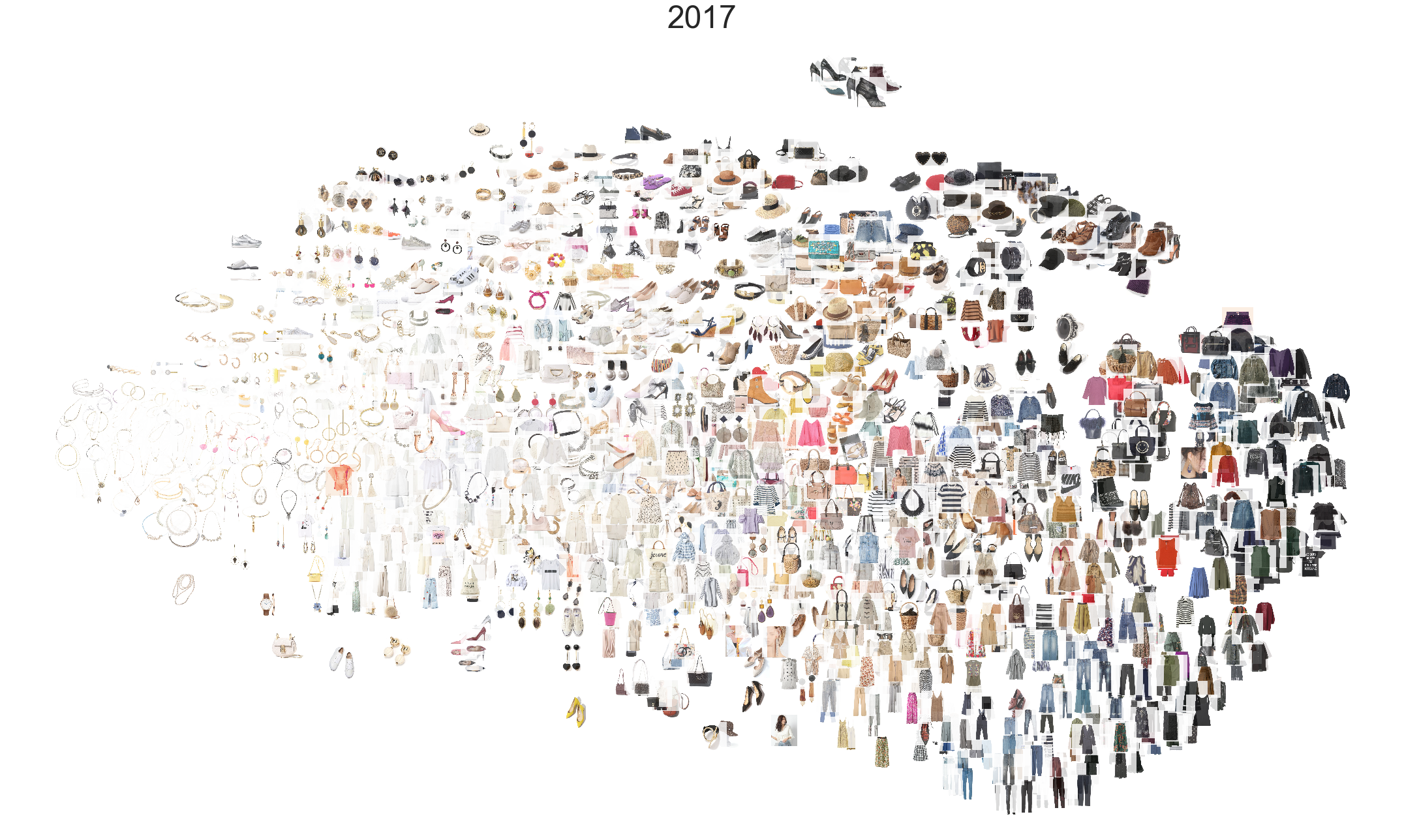}
    \caption{t-SNE visualization for the SHIFT15M (2017).}
    \label{fig:shift15m_tsne_2017}
\end{figure*}

\begin{figure*}
    \centering
    \includegraphics[width=0.95\linewidth]{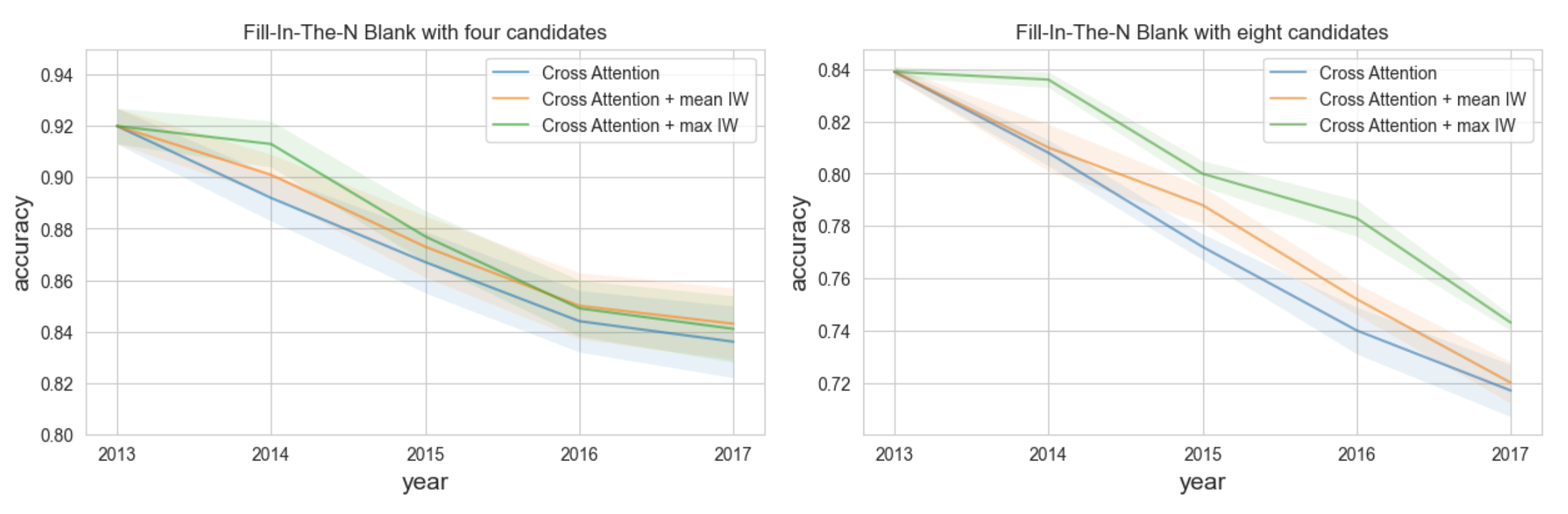}
    \caption{Plots for Fill-In-The-N-Blank experiments with Cross Attention.}
    \label{fig:fill_in_the_n_blank_cross_attention}
\end{figure*}
\begin{figure*}
    \centering
    \includegraphics[width=0.95\linewidth]{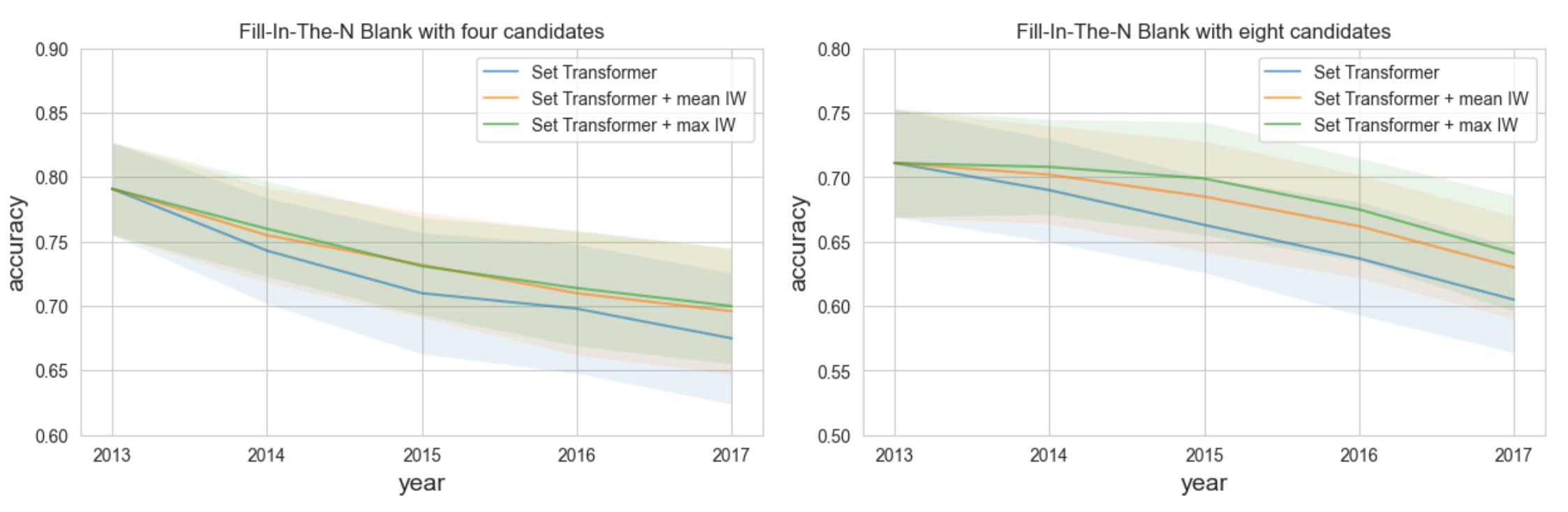}
    \caption{Plots for Fill-In-The-N-Blank experiments with Set Transformer.}
    \label{fig:fill_in_the_n_blank_set_transformer}
\end{figure*}
\begin{figure*}
    \centering
    \includegraphics[width=0.95\linewidth]{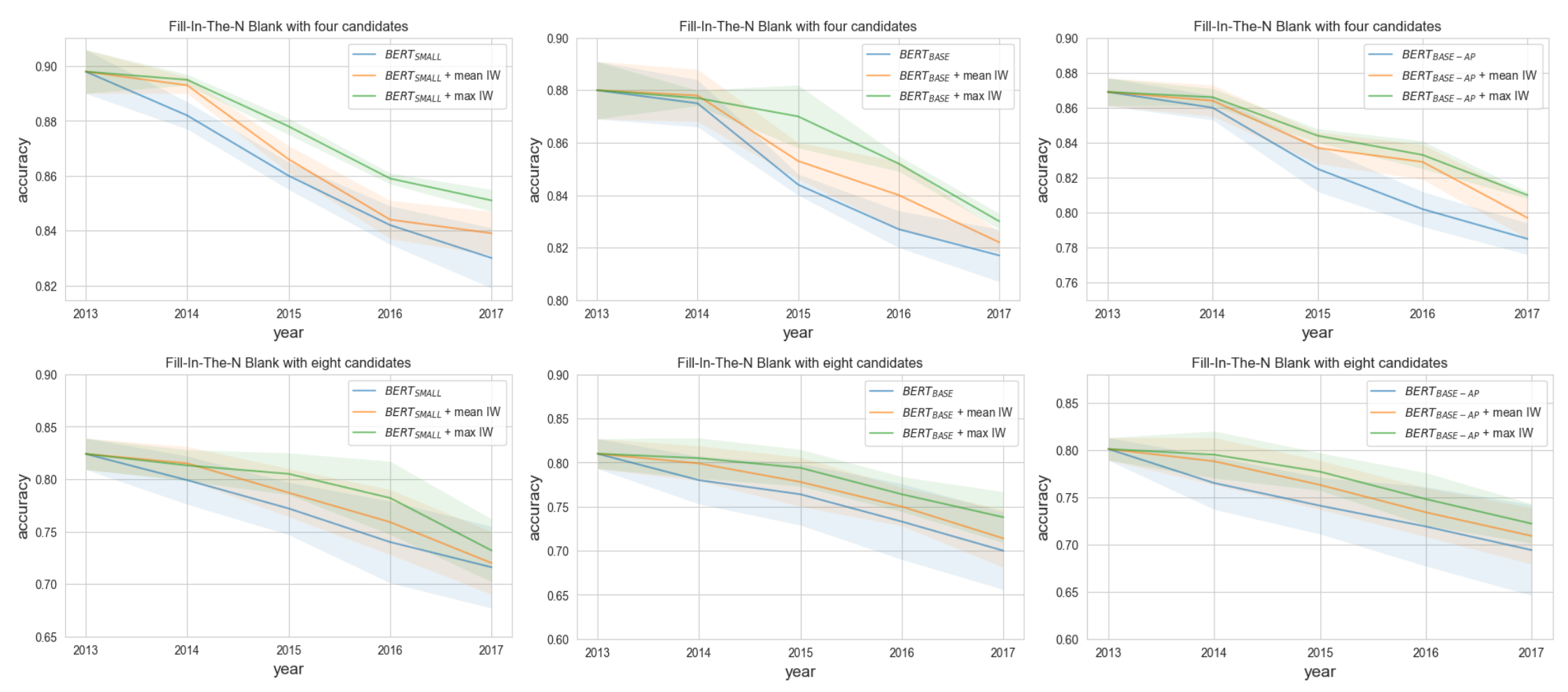}
    \caption{Plots for Fill-In-The-N-Blank experiments with BERT.}
    \label{fig:fill_in_the_n_blank_bert}
\end{figure*}
\begin{figure*}
    \centering
    \includegraphics[width=0.95\linewidth]{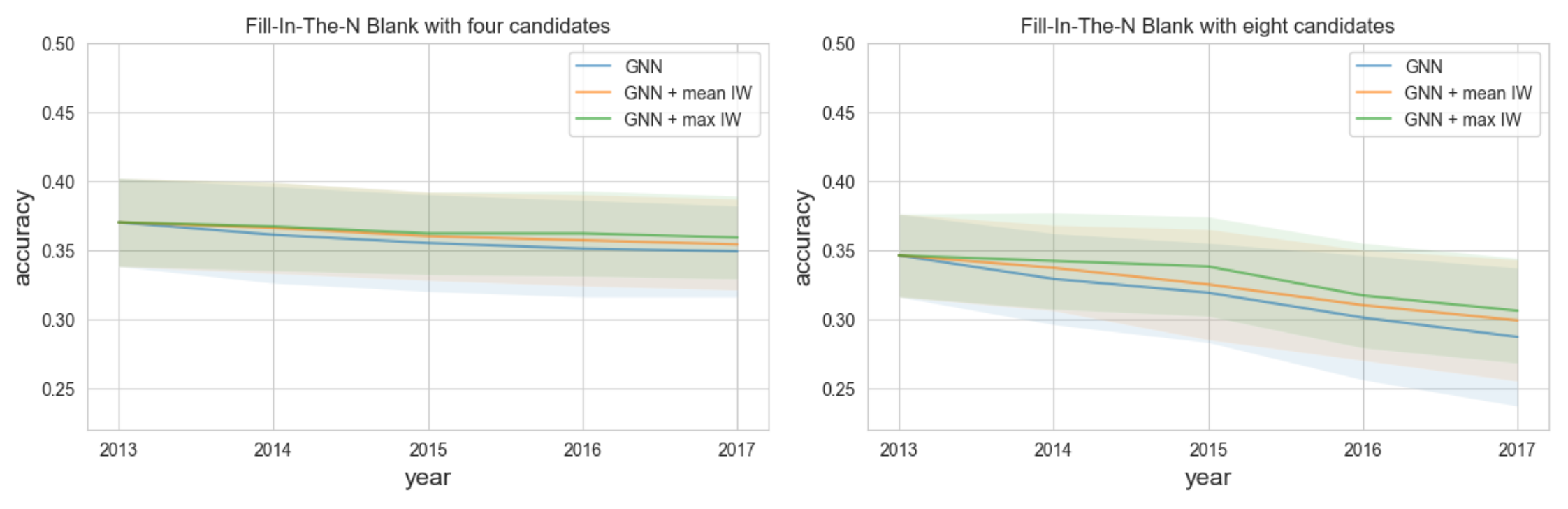}
    \caption{Plots for Fill-In-The-N-Blank experiments with GNN.}
    \label{fig:fill_in_the_n_blank_gnn}
\end{figure*}
\begin{figure*}
    \centering
    \includegraphics[width=0.95\linewidth]{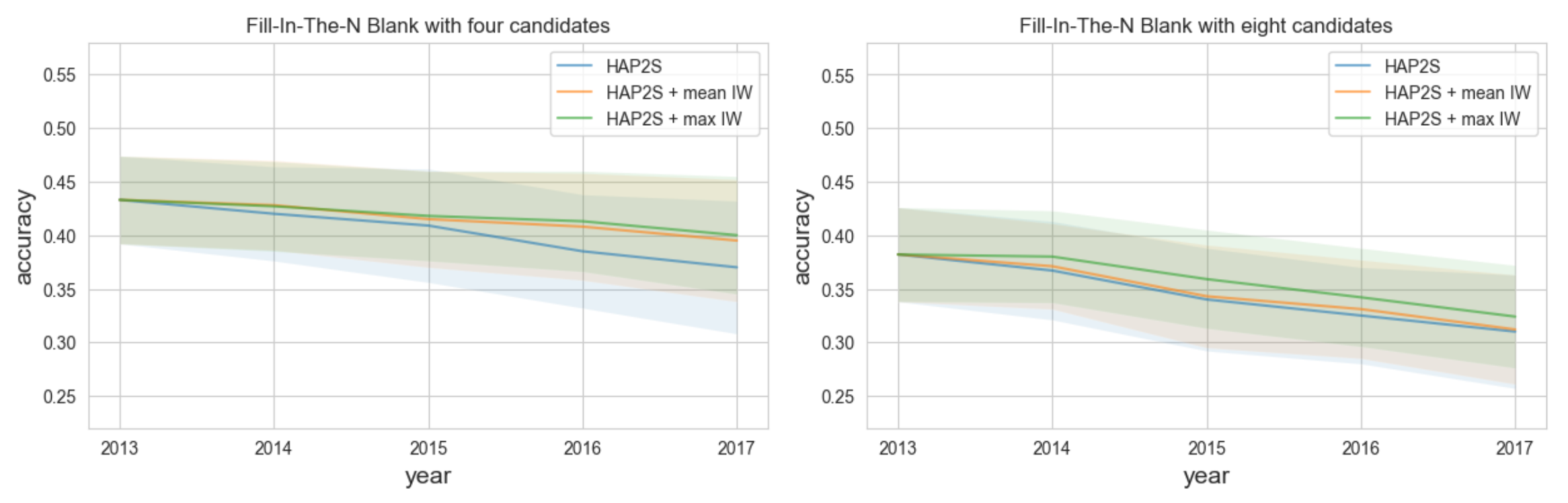}
    \caption{Plots for Fill-In-The-N-Blank experiments with HAP2S.}
    \label{fig:fill_in_the_n_blank_hap2s}
\end{figure*}
\begin{figure*}
    \centering
    \includegraphics[width=0.8\linewidth]{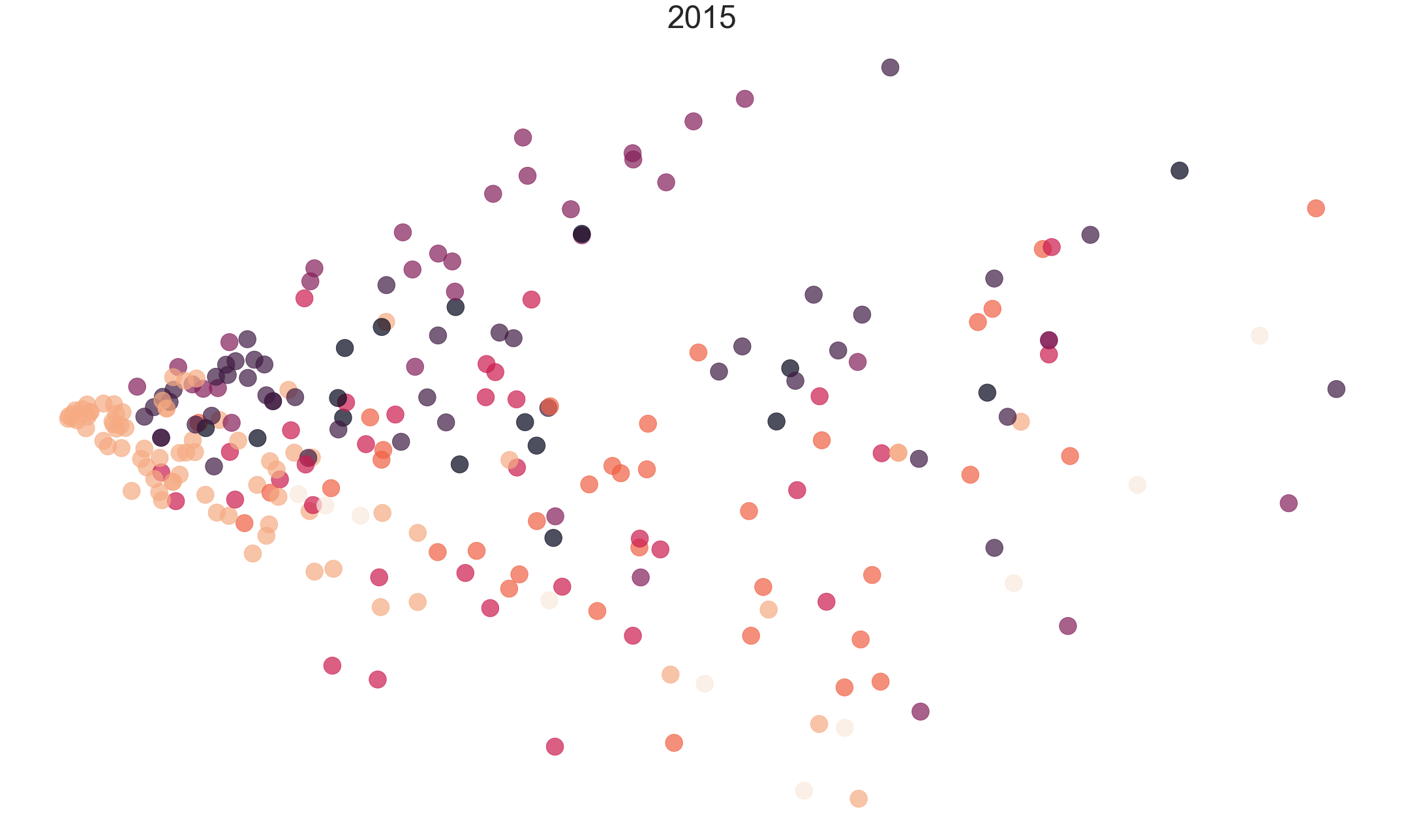}
    \caption{PCA for the SHIFT15M (2015). The color of each point corresponds to the item category.}
    \label{fig:pca_2d_2015}
\end{figure*}
\begin{figure*}
    \centering
    \includegraphics[width=0.8\linewidth]{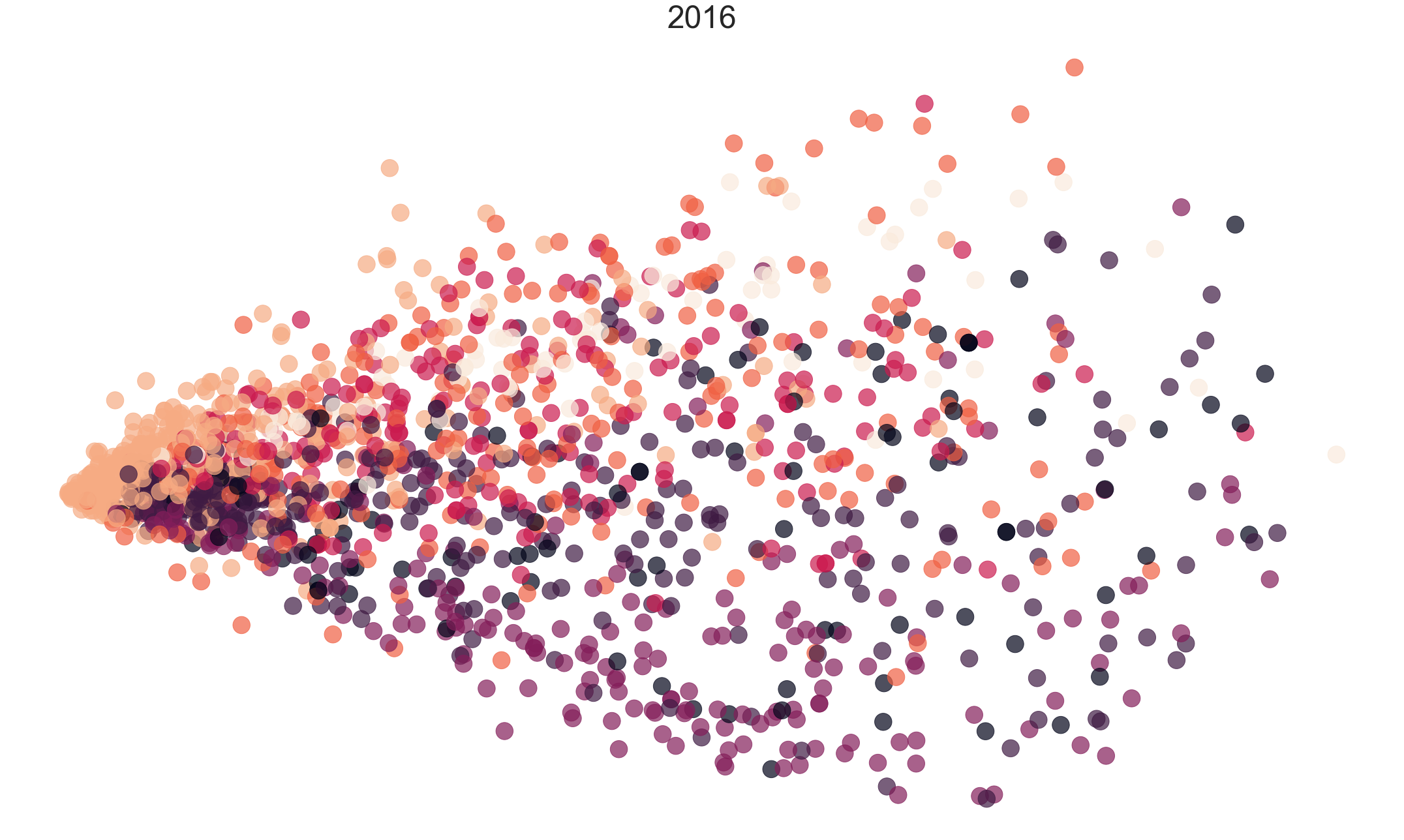}
    \caption{PCA for the SHIFT15M (2016). The color of each point corresponds to the item category.}
    \label{fig:pca_2d_2016}
\end{figure*}
\begin{figure*}
    \centering
    \includegraphics[width=0.8\linewidth]{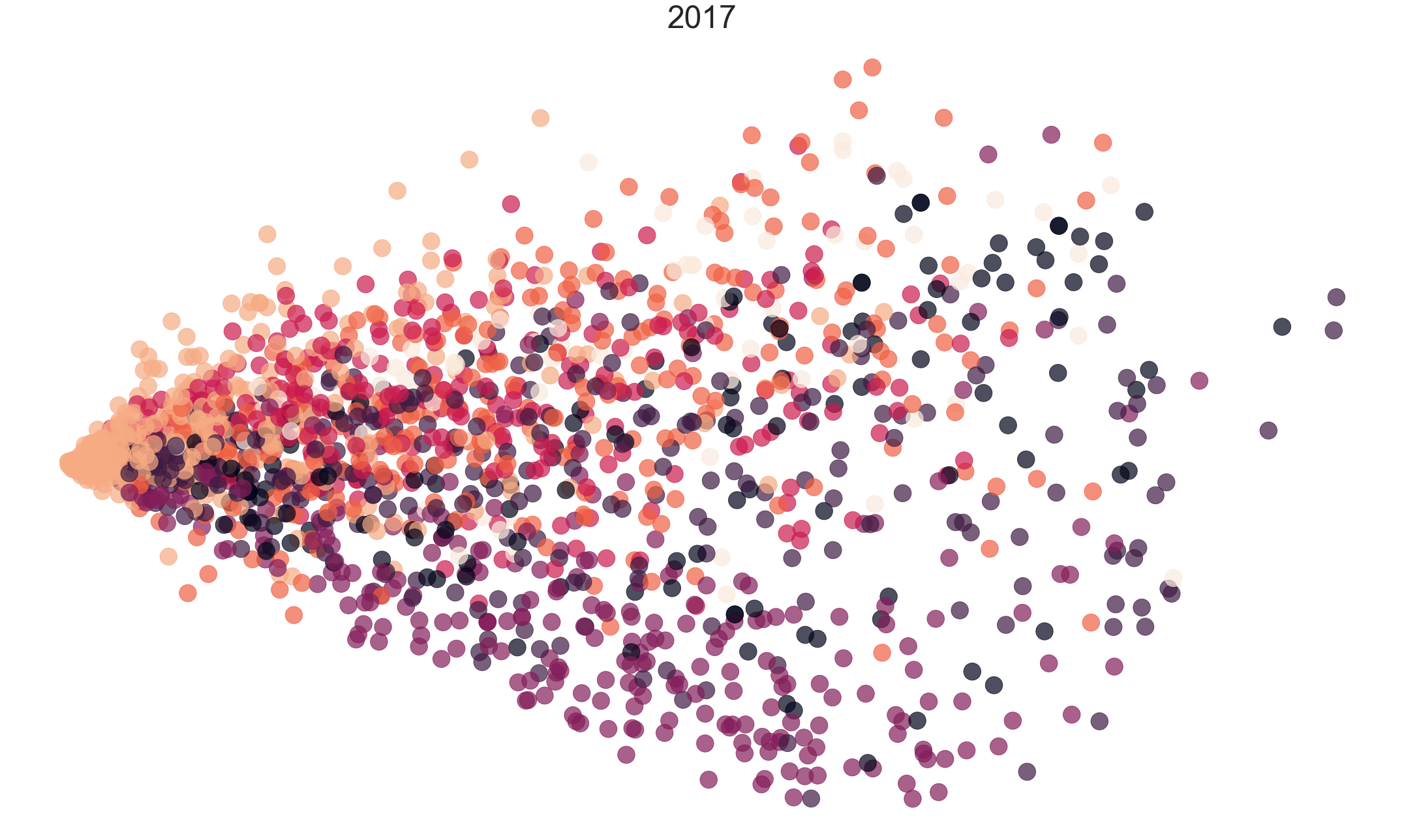}
    \caption{PCA for the SHIFT15M (2017). The color of each point corresponds to the item category.}
    \label{fig:pca_2d_2017}
\end{figure*}

\clearpage

\section{Datasheet for SHIFT15M}
\label{apd:datasheet}
In accordance with \cite{gebru2021datasheets}, we provide the datasheet for SHIFT15M. This datasheet offers an overview of crucial information about the dataset that can aid users in making informed decisions regarding its use. It encompasses a comprehensive range of details, including but not limited to the dataset's creation purpose, sources of data, and data processing methods.
Overall, the datasheet provides a vital resource for those interested in utilizing the SHIFT15M dataset, as it offers a transparent and comprehensive account of the dataset's characteristics and construction.

\paragraph{Motivation}
The purpose of this section is to emphasize the importance of transparency and clarity in the process of dataset creation, particularly with regards to the motivations behind the creation of the dataset and any potential conflicts of interest that may arise. Dataset creators are encouraged to clearly articulate their reasons for creating the dataset, including the research questions or goals that the dataset is intended to address.

\paragraph{Composition}
Dataset creators are advised to review these questions thoroughly prior to commencing any data collection, and provide responses after data collection has concluded. The primary purpose of the questions in this section is to equip dataset consumers with the information necessary to make informed decisions about utilizing the dataset for their specific needs. Additionally, some of the questions have been tailored to obtain information about adherence to the General Data Protection Regulation (GDPR) of the European Union or similar regulatory frameworks in other jurisdictions. Notably, questions specific to datasets involving individuals have been consolidated towards the end of the section. It is recommended a broad interpretation of what constitutes a dataset relating to people. For instance, any dataset containing text that was produced by individuals can be considered to relate to people.

\paragraph{Collection Process}
To ensure potential issues are identified, dataset creators are advised to review the questions in this section before initiating data collection and then furnish responses upon completion of collection, similar to the previous section. Apart from the objectives set out in the preceding section, the questions in this section are aimed at extracting information that could assist researchers and practitioners in producing alternative datasets with comparable attributes. Similarly, inquiries exclusive to datasets pertaining to individuals are categorized towards the end of this section.

\paragraph{Preprocessing/cleaning/labeling}
Before proceeding with any preprocessing, cleaning, or labeling, it is recommended that dataset creators review the questions presented in this section and subsequently provide responses upon completing these tasks. The purpose of the questions in this section is to equip dataset consumers with the requisite information to evaluate whether the "raw" data has been processed in a manner that aligns with their intended use.

\paragraph{Uses}
The questions in this section serve to prompt dataset creators to consider the appropriate and inappropriate uses of their dataset. By explicitly specifying such tasks, dataset creators can assist dataset consumers in making informed decisions, minimizing potential hazards or adverse outcomes.

\paragraph{Distribution}
Dataset creators should provide answers to these questions prior to distributing the dataset either internally within the entity on behalf of which the dataset was created or externally to third parties.

\paragraph{Maintenance}
As with the questions in the previous section, dataset creators should provide answers to these questions prior to distributing the dataset. The questions in this section are intended to encourage dataset creators to plan for dataset maintenance and communicate this plan to dataset consumers.

\clearpage

\dssectionheader{Motivation}

\dsquestionex{For what purpose was the dataset created?}{Was there a specific task in mind? Was there a specific gap that needed to be filled? Please provide a description.}

\dsanswer{
This paper addresses the problem of set-to-set matching, which involves matching two different sets of items based on some criteria, especially in the case of high-dimensional items like images. Although neural networks have been applied to solve this problem, most machine learning-based approaches assume that the training and test data follow the same distribution, which is not always true in real-world scenarios. To address this limitation, we introduce SHIFT15M, a dataset that can be used to evaluate set-to-set matching models when the distribution of data changes between training and testing. We conduct benchmark experiments that demonstrate the performance drop of naive methods due to distribution shift. Additionally, we provide software to handle the SHIFT15M dataset in a simple manner, with the URL for the software to be made available after publication of this manuscript. We believe proposed SHIFT15M dataset provide a valuable resource for evaluating set-to-set matching models under the distribution shift.
}

\dsquestion{Who created this dataset (e.g., which team, research group) and on behalf of which entity (e.g., company, institution, organization)?}

\dsanswer{
\textcolor{red}{Anonymized until after the paper is accepted.}
}

\dsquestionex{Who funded the creation of the dataset?}{If there is an associated grant, please provide the name of the grantor and the grant name and number.}

\dsanswer{
Not applicable.
}

\dsquestion{Any other comments?}

\bigskip
\dssectionheader{Composition}

\dsquestionex{What do the instances that comprise the dataset represent (e.g., documents, photos, people, countries)?}{ Are there multiple types of instances (e.g., movies, users, and ratings; people and interactions between them; nodes and edges)? Please provide a description.}

\dsanswer{
The SHIFT15M dataset is an extensive collection of outfits that were previously shared on a fashion-oriented social networking service. The service is no longer available, but the dataset continues to be a valuable resource for researchers studying set-to-set matching problems, particularly in the context of fashion. The dataset contains a vast array of information about each outfit, including details about the user who posted it and some meta-information. Each record in the dataset comprises five different fields, making it easy to organize and analyze the data. 

\begin{itemize}

\item set\_id: An ID that identifies the outfit that was posted.
\item items: Provides information about the items that comprise the posted outfit and consists of 4 subfields.
\begin{itemize}
\item{item\_id: An ID that identifies an item.}
\item{category\_id1: An ID indicating the item category (e.g., outerwear, tops, ...).}
\item{category\_id2: An ID indicating the item subcategory (e.g., T-shirts, blouses, ...).}
\item{price: Price of the item.
user: Provides information about the user who posted the outfit and consists of 2 subfields.
An ID that identifies the user who posted the outfit.
A list of brands that users have voted for as their favorites. The number is an ID that identifies the brand.}
\end{itemize}
\item{like\_num: the number of times this outfit has been favorited by other users.}
\item{publish\_date: The date the outfit was posted.}
\end{itemize}
}

\dsquestion{How many instances are there in total (of each type, if appropriate)?}

\dsanswer{
The dataset consists of 15,218,721 item images and 2,555,147 outfits which created by users of our fashion SNS.
}

\dsquestionex{Does the dataset contain all possible instances or is it a sample (not necessarily random) of instances from a larger set?}{ If the dataset is a sample, then what is the larger set? Is the sample representative of the larger set (e.g., geographic coverage)? If so, please describe how this representativeness was validated/verified. If it is not representative of the larger set, please describe why not (e.g., to cover a more diverse range of instances, because instances were withheld or unavailable).}

\dsanswer{
The fashion-oriented social networking service from which we collected outfits for the SHIFT15M dataset was a rich source of data that allowed us to obtain insights into the fashion trends and preferences of millions of users. With approximately 2 million users, the website was a bustling hub of activity where fashion enthusiasts could share their outfit ideas, provide feedback to others, and explore the latest styles and trends.

The vast majority of users on the website were women in their 20s and 30s, representing a demographic that is known for their fashion-forward mindset and interest in new trends. This demographic was particularly valuable for our research, as it allowed us to obtain a large amount of data on outfits that were representative of current fashion trends.

Our collection period spanned over a decade, starting on January 1st, 2010 and ending on April 6th, 2020. During this time, we meticulously gathered outfits consisting of multiple items, each of which was carefully categorized into a specific category. Our focus was on outfits that contained four or more items from the main categories, including outerwear, tops, bottoms, shoes, bags, hats, and accessories. This selection criterion was chosen with the aim of creating a dataset that would be useful for set-to-set matching tasks involving fashion items.
}

\dsquestionex{What data does each instance consist of? “Raw” data (e.g., unprocessed text or images) or features?}{In either case, please provide a description.}

\dsanswer{
Each item consists of 4096-dimensional features extracted via the VGG16 model trained using the ILSVRC2012 dataset.
}

\dsquestionex{Is there a label or target associated with each instance?}{If so, please provide a description.}

\dsanswer{
Indeed, each instance in the SHIFT15M dataset contains a wealth of information that can be leveraged for various tasks. Along with the outfit items, each instance also includes several numerical values such as the category ID and number of likes. These values provide additional context that can be used to train models for various set-to-set matching problems.

One of the strengths of the SHIFT15M dataset is its versatility. By choosing one of these numerical values as the target variable, researchers can easily switch between several tasks, each with its own unique set of challenges and opportunities. For example, if the focus is on predicting outfit popularity, the number of likes can be used as the target variable. On the other hand, if the goal is to perform set-to-set matching between outfits, the number of likes can be used as the target variable.
}

\dsquestionex{Is any information missing from individual instances?}{If so, please provide a description, explaining why this information is missing (e.g., because it was unavailable). This does not include intentionally removed information, but might include, e.g., redacted text.}

\dsanswer{
It is important to note that the SHIFT15M dataset only includes items that belong to the main categories, such as outerwear, tops, bottoms, shoes, bags, hats, and accessories. This means that items outside of these categories, such as underwear or background images for collage, are missing from the dataset.

While this selection criterion may seem limiting, it was chosen to ensure that the dataset is focused on items that are most commonly used in fashion-related set-to-set matching tasks. By excluding items outside of the main categories, we were able to curate a dataset that is more manageable and less noisy, while still providing a diverse range of fashion items for analysis.
}

\dsquestionex{Are relationships between individual instances made explicit (e.g., users’ movie ratings, social network links)?}{If so, please describe how these relationships are made explicit.}

\dsanswer{
Each instance is assigned the ID of the user who submitted the outfit.
}

\dsquestionex{Are there recommended data splits (e.g., training, development/validation, testing)?}{If so, please provide a description of these splits, explaining the rationale behind them.}

\dsanswer{
SHIFT15M is a valuable dataset that simulates various types of dataset shifts that are commonly observed in real-world applications. The collected data spans a decade, from 2010 to 2020, and encompasses various shifts that arise due to factors such as changes in user behavior, fashion trends, and cultural preferences. To make it easy for researchers to evaluate their models on the SHIFT15M dataset, we have developed software that allows them to experiment with different types and sizes of shifts. The software automates the train/val/test splitting process, making it easier for researchers to evaluate the performance of their models under various shift scenarios. With this software, researchers can simulate shifts that arise due to various factors and assess their models' robustness to such shifts. By doing so, they can gain insights into how their models perform in real-world settings where data distributions are constantly changing.
}

\dsquestionex{Are there any errors, sources of noise, or redundancies in the dataset?}{If so, please provide a description.}

\dsanswer{
No.
}

\dsquestionex{Is the dataset self-contained, or does it link to or otherwise rely on external resources (e.g., websites, tweets, other datasets)?}{If it links to or relies on external resources, a) are there guarantees that they will exist, and remain constant, over time; b) are there official archival versions of the complete dataset (i.e., including the external resources as they existed at the time the dataset was created); c) are there any restrictions (e.g., licenses, fees) associated with any of the external resources that might apply to a future user? Please provide descriptions of all external resources and any restrictions associated with them, as well as links or other access points, as appropriate.}

\dsanswer{
The dataset is self-contained.
}

\dsquestionex{Does the dataset contain data that might be considered confidential (e.g., data that is protected by legal privilege or by doctor-patient confidentiality, data that includes the content of individuals non-public communications)?}{If so, please provide a description.}

\dsanswer{
No.
}

\dsquestionex{Does the dataset contain data that, if viewed directly, might be offensive, insulting, threatening, or might otherwise cause anxiety?}{If so, please describe why.}

\dsanswer{
No.
}

\dsquestionex{Does the dataset relate to people?}{If not, you may skip the remaining questions in this section.}

\dsanswer{
Yes. Each instance is a combination of outfits created by an individual and preferred by that individual.
}

\dsquestionex{Does the dataset identify any subpopulations (e.g., by age, gender)?}{If so, please describe how these subpopulations are identified and provide a description of their respective distributions within the dataset.}

\dsanswer{
No.
}

\dsquestionex{Is it possible to identify individuals (i.e., one or more natural persons), either directly or indirectly (i.e., in combination with other data) from the dataset?}{If so, please describe how.}

\dsanswer{
It is impossible to identify individuals from the dataset.
}

\dsquestionex{Does the dataset contain data that might be considered sensitive in any way (e.g., data that reveals racial or ethnic origins, sexual orientations, religious beliefs, political opinions or union memberships, or locations; financial or health data; biometric or genetic data; forms of government identification, such as social security numbers; criminal history)?}{If so, please provide a description.}

\dsanswer{
No.
}

\dsquestion{Any other comments?}

\bigskip
\dssectionheader{Collection Process}

\dsquestionex{How was the data associated with each instance acquired?}{Was the data directly observable (e.g., raw text, movie ratings), reported by subjects (e.g., survey responses), or indirectly inferred/derived from other data (e.g., part-of-speech tags, model-based guesses for age or language)? If data was reported by subjects or indirectly inferred/derived from other data, was the data validated/verified? If so, please describe how.}

\dsanswer{
Except for the item attributes, the data was generated by users. Item attributes (category and price) were collected from e-commerce sites that sell the item. All data was viewable on the website.
}

\dsquestionex{What mechanisms or procedures were used to collect the data (e.g., hardware apparatus or sensor, manual human curation, software program, software API)?}{How were these mechanisms or procedures validated?}

\dsanswer{
The fashion-oriented social networking service that we collected the SHIFT15M dataset from provided its users with an easy-to-use outfit editor that allowed them to create and publish their outfits on the platform. This editor featured a wide range of clothing items, accessories, and other fashion-related items that users could choose from to create their outfits. Once a user had selected the items they wanted to include in their outfit, the editor registered this selection as a new outfit on the platform.

To ensure that this outfit creation function was tested appropriately, we followed general software development procedures. This involved conducting thorough testing to ensure that the editor functioned as intended, with no bugs or glitches that could affect the accuracy or reliability of the data collected. By following this rigorous testing process, we were able to gather a high-quality dataset that accurately reflects the fashion choices made by users on the social networking service during the ten-year collection period. 
}

\dsquestion{If the dataset is a sample from a larger set, what was the sampling strategy (e.g., deterministic, probabilistic with specific sampling probabilities)?}

\dsanswer{
We collected a complete dataset without sampling to create our dataset, except for data deleted by the user.
}

\dsquestion{Who was involved in the data collection process (e.g., students, crowdworkers, contractors) and how were they compensated (e.g., how much were crowdworkers paid)?}

\dsanswer{
\textcolor{red}{Anonymized until after the paper is accepted.}
}

\dsquestionex{Over what timeframe was the data collected? Does this timeframe match the creation timeframe of the data associated with the instances (e.g., recent crawl of old news articles)?}{If not, please describe the timeframe in which the data associated with the instances was created.}

\dsanswer{
The dataset was collected in the period of 2010~2020. Each outfit includes a timestamp that describes when the outfit created.
}

\dsquestionex{Were any ethical review processes conducted (e.g., by an institutional review board)?}{If so, please provide a description of these review processes, including the outcomes, as well as a link or other access point to any supporting documentation.}

\dsanswer{
No.
}

\dsquestionex{Does the dataset relate to people?}{If not, you may skip the remaining questions in this section.}

\dsanswer{
Yes. Each instance is a combination of outfits created by an individual and preferred by that individual.
}

\dsquestion{Did you collect the data from the individuals in question directly, or obtain it via third parties or other sources (e.g., websites)?}

\dsanswer{
Collected directly through the website.
}

\dsquestionex{Were the individuals in question notified about the data collection?}{If so, please describe (or show with screenshots or other information) how notice was provided, and provide a link or other access point to, or otherwise reproduce, the exact language of the notification itself.}

\dsanswer{
Notified in the Terms of Service.
}

\dsquestionex{Did the individuals in question consent to the collection and use of their data?}{If so, please describe (or show with screenshots or other information) how consent was requested and provided, and provide a link or other access point to, or otherwise reproduce, the exact language to which the individuals consented.}

\dsanswer{
The use of the service was deemed as consent.
}

\dsquestionex{If consent was obtained, were the consenting individuals provided with a mechanism to revoke their consent in the future or for certain uses?}{If so, please provide a description, as well as a link or other access point to the mechanism (if appropriate).}

\dsanswer{
It is possible to contact the company that provided the service.
}

\dsquestionex{Has an analysis of the potential impact of the dataset and its use on data subjects (e.g., a data protection impact analysis) been conducted?}{If so, please provide a description of this analysis, including the outcomes, as well as a link or other access point to any supporting documentation.}

\dsanswer{
No, there had been no potential impact analysis conducted.
}

\dsquestion{Any other comments?}

\dsanswer{
}

\bigskip
\dssectionheader{Preprocessing/cleaning/labeling}

\dsquestionex{Was any preprocessing/cleaning/labeling of the data done (e.g., discretization or bucketing, tokenization, part-of-speech tagging, SIFT feature extraction, removal of instances, processing of missing values)?}{If so, please provide a description. If not, you may skip the remainder of the questions in this section.}

\dsanswer{
We extracted the CNN features from images and treated them as input data in our image-based tasks. As a result, our dataset contains the features but does not include raw photos, making them anonymized. The CNN we used is an official pre-trained VGG16, and we adopted the outputs of the 'fc6' layer before applying ReLU as the feature. We exclude the outfits that contain less than four items. Other than that, we did not remove any instances in creating our dataset. However, we excluded some data in each independent task. In detail, please refer to each task description.
}

\dsquestionex{Was the “raw” data saved in addition to the preprocessed/cleaned/labeled data (e.g., to support unanticipated future uses)?}{If so, please provide a link or other access point to the “raw” data.}

\dsanswer{
No.
}

\dsquestionex{Is the software used to preprocess/clean/label the instances available?}{If so, please provide a link or other access point.}

\dsanswer{
All software are provided on the SHIFT15M repository.
}

\dsquestion{Any other comments?}

\dsanswer{
}

\bigskip
\dssectionheader{Uses}

\dsquestionex{Has the dataset been used for any tasks already?}{If so, please provide a description.}

\dsanswer{
Benchmarks using this dataset and the specified evaluation protocol are listed in GitHub page.
}

\dsquestionex{Is there a repository that links to any or all papers or systems that use the dataset?}{If so, please provide a link or other access point.}

\dsanswer{
All benchmarks that use this dataset will be available at GitHub page.
}

\dsquestion{What (other) tasks could the dataset be used for?}

\dsanswer{
Here, we list candidate tasks for which SHIFT15M can be applied as follows:
\begin{itemize}
    \item set-to-set matching;
    \item regression (e.g., number of likes or sum of prices);
    \item classification (e.g., category ids or publish years).
\end{itemize}
}

\dsquestionex{Is there anything about the composition of the dataset or the way it was collected and preprocessed/cleaned/labeled that might impact future uses?}{For example, is there anything that a future user might need to know to avoid uses that could result in unfair treatment of individuals or groups (e.g., stereotyping, quality of service issues) or other undesirable harms (e.g., financial harms, legal risks) If so, please provide a description. Is there anything a future user could do to mitigate these undesirable harms?}

\dsanswer{
No.
}

\dsquestionex{Are there tasks for which the dataset should not be used?}{If so, please provide a description.}

\dsanswer{
This dataset is distributed in a way that excluding raw images and anonymizing the users/brands. Therefore, it requires the dataset users not to reconstruct raw images from the image features or restore the anonymized parts in a future task.
}

\dsquestion{Any other comments?}

\bigskip
\dssectionheader{Distribution}

\dsquestionex{Will the dataset be distributed to third parties outside of the entity (e.g., company, institution, organization) on behalf of which the dataset was created?}{If so, please provide a description.}

\dsanswer{
Yes. The dataset will be distributed to third parties based on the licence.
}

\dsquestionex{How will the dataset will be distributed (e.g., tarball on website, API, GitHub)}{Does the dataset have a digital object identifier (DOI)?}

\dsanswer{
The dataset will be distributed via a website or the links indicated in our Github repository. 
We will add DOI for the SHIFT15M dataset.
}

\dsquestion{When will the dataset be distributed?}

\dsanswer{
The dataset will be first released in August 2021.
}

\dsquestionex{Will the dataset be distributed under a copyright or other intellectual property (IP) license, and/or under applicable terms of use (ToU)?}{If so, please describe this license and/or ToU, and provide a link or other access point to, or otherwise reproduce, any relevant licensing terms or ToU, as well as any fees associated with these restrictions.}

\dsanswer{
The SHIFT15M dataset will be made available for distribution under the Creative Commons Attribution-NonCommercial 4.0 International (CC BY-NC 4.0) license. This means that users are free to share and adapt the dataset, as long as they provide attribution and do not use it for commercial purposes. For more information about the license, please refer to the following link: \url{https://creativecommons.org/licenses/by-nc/4.0/}. This license ensures that the dataset can be used by the academic community for research purposes, and that any derivative works or publications based on the dataset will be properly attributed.
}

\dsquestionex{Have any third parties imposed IP-based or other restrictions on the data associated with the instances?}{If so, please describe these restrictions, and provide a link or other access point to, or otherwise reproduce, any relevant licensing terms, as well as any fees associated with these restrictions.}

\dsanswer{
There are no fees or restrictions.
}

\dsquestionex{Do any export controls or other regulatory restrictions apply to the dataset or to individual instances?}{If so, please describe these restrictions, and provide a link or other access point to, or otherwise reproduce, any supporting documentation.}

\dsanswer{
Unknown.
}

\dsquestion{Any other comments?}

\dsanswer{
}

\bigskip
\newpage
\dssectionheader{Maintenance}

\dsquestion{Who will be supporting/hosting/maintaining the dataset?}

\dsanswer{
\textcolor{red}{Anonymized until after the paper is accepted.}
}

\dsquestion{How can the owner/curator/manager of the dataset be contacted (e.g., email address)?}

\dsanswer{
All changes to the dataset will be announced through the GitHub Releases.
}

\dsquestionex{Is there an erratum?}{If so, please provide a link or other access point.}

\dsanswer{
To ensure the accuracy and transparency of the SHIFT15M dataset, any changes made to the dataset will be immediately announced through the GitHub Releases page. This will include updates to the dataset's documentation, modifications to the dataset's format or metadata, or any other changes that may impact the dataset's use. Additionally, any errors or issues found in the dataset will be listed in the "Errata" section of the SHIFT15M repository. This will allow users to stay informed of any updates or issues related to the dataset and ensure that they are working with the most accurate and up-to-date version of the data.
}

\dsquestionex{Will the dataset be updated (e.g., to correct labeling errors, add new instances, delete instances)?}{If so, please describe how often, by whom, and how updates will be communicated to users (e.g., mailing list, GitHub)?}

\dsanswer{
All changes to the dataset will be announced through the GitHub Releases.
}

\dsquestionex{If the dataset relates to people, are there applicable limits on the retention of the data associated with the instances (e.g., were individuals in question told that their data would be retained for a fixed period of time and then deleted)?}{If so, please describe these limits and explain how they will be enforced.}

\dsanswer{
No.
}

\dsquestionex{Will older versions of the dataset continue to be supported/hosted/maintained?}{If so, please describe how. If not, please describe how its obsolescence will be communicated to users.}

\dsanswer{
They will continue to be supported with all information on SHIFT15M repository. We also provide the contribution guides for software that supports the dataset.
}

\dsquestionex{If others want to extend/augment/build on/contribute to the dataset, is there a mechanism for them to do so?}{If so, please provide a description. Will these contributions be validated/verified? If so, please describe how. If not, why not? Is there a process for communicating/distributing these contributions to other users? If so, please provide a description.}

\dsanswer{
Others may do so and should contact the original authors about incorporating fixes/extensions.
}

\dsquestion{Any other comments?}

\dsanswer{
}


\begin{thebibliography}{999}\itemsep=-1pt

\bibitem{abadi2016tensorflow}
Mart{\'\i}n Abadi, Paul Barham, Jianmin Chen, Zhifeng Chen, Andy Davis, Jeffrey
  Dean, Matthieu Devin, Sanjay Ghemawat, Geoffrey Irving, Michael Isard, et~al.
\newblock Tensorflow: A system for large-scale machine learning.
\newblock In {\em 12th $\{$USENIX$\}$ Symposium on Operating Systems Design and
  Implementation ($\{$OSDI$\}$ 16)}, pages 265--283, 2016.

\bibitem{alippi2008just}
Cesare Alippi and Manuel Roveri.
\newblock Just-in-time adaptive classifiers—part i: Detecting nonstationary
  changes.
\newblock {\em IEEE Transactions on Neural Networks}, 19(7):1145--1153, 2008.

\bibitem{arandjelovic2014discriminative}
Ognjen Arandjelovi{\'c}.
\newblock Discriminative extended canonical correlation analysis for pattern
  set matching.
\newblock {\em Machine Learning}, 94:353--370, 2014.

\bibitem{arjovsky2020out}
Martin Arjovsky.
\newblock {\em Out of distribution generalization in machine learning}.
\newblock PhD thesis, New York University, 2020.

\bibitem{bach2008paired}
Stephen~H Bach and Marcus~A Maloof.
\newblock Paired learners for concept drift.
\newblock In {\em 2008 Eighth IEEE International Conference on Data Mining},
  pages 23--32. IEEE, 2008.

\bibitem{baena2006early}
Manuel Baena-Garc{\i}a, Jos{\'e} del Campo-{\'A}vila, Raul Fidalgo, Albert
  Bifet, Ricard Gavalda, and Rafael Morales-Bueno.
\newblock Early drift detection method.
\newblock In {\em Fourth international workshop on knowledge discovery from
  data streams}, volume~6, pages 77--86. Citeseer, 2006.

\bibitem{bai2018convolutional}
Yunsheng Bai, Hao Ding, Yizhou Sun, and Wei Wang.
\newblock Convolutional set matching for graph similarity.
\newblock {\em arXiv preprint arXiv:1810.10866}, 2018.

\bibitem{bandi2018detection}
Peter Bandi, Oscar Geessink, Quirine Manson, Marcory Van~Dijk, Maschenka
  Balkenhol, Meyke Hermsen, Babak~Ehteshami Bejnordi, Byungjae Lee, Kyunghyun
  Paeng, Aoxiao Zhong, et~al.
\newblock From detection of individual metastases to classification of lymph
  node status at the patient level: the camelyon17 challenge.
\newblock {\em IEEE Transactions on Medical Imaging}, 2018.

\bibitem{basseville1993detection}
Michele Basseville, Igor~V Nikiforov, et~al.
\newblock {\em Detection of abrupt changes: theory and application}, volume
  104.
\newblock prentice Hall Englewood Cliffs, 1993.

\bibitem{beery2020iwildcam}
Sara Beery, Elijah Cole, and Arvi Gjoka.
\newblock The iwildcam 2020 competition dataset.
\newblock {\em arXiv preprint arXiv:2004.10340}, 2020.

\bibitem{ben2010theory}
Shai Ben-David, John Blitzer, Koby Crammer, Alex Kulesza, Fernando Pereira, and
  Jennifer~Wortman Vaughan.
\newblock A theory of learning from different domains.
\newblock {\em Machine learning}, 79:151--175, 2010.

\bibitem{ben2006analysis}
Shai Ben-David, John Blitzer, Koby Crammer, and Fernando Pereira.
\newblock Analysis of representations for domain adaptation.
\newblock {\em Advances in neural information processing systems}, 19, 2006.

\bibitem{bengio2013representation}
Yoshua Bengio, Aaron Courville, and Pascal Vincent.
\newblock Representation learning: A review and new perspectives.
\newblock {\em IEEE transactions on pattern analysis and machine intelligence},
  35(8):1798--1828, 2013.

\bibitem{biggio2013evasion}
Battista Biggio, Igino Corona, Davide Maiorca, Blaine Nelson, Nedim
  {\v{S}}rndi{\'c}, Pavel Laskov, Giorgio Giacinto, and Fabio Roli.
\newblock Evasion attacks against machine learning at test time.
\newblock In {\em Machine Learning and Knowledge Discovery in Databases:
  European Conference, ECML PKDD 2013, Prague, Czech Republic, September 23-27,
  2013, Proceedings, Part III 13}, pages 387--402. Springer, 2013.

\bibitem{bolley2007quantitative}
Fran{\c{c}}ois Bolley, Arnaud Guillin, and C{\'e}dric Villani.
\newblock Quantitative concentration inequalities for empirical measures on
  non-compact spaces.
\newblock {\em Probability Theory and Related Fields}, 137:541--593, 2007.

\bibitem{borkan2019nuanced}
Daniel Borkan, Lucas Dixon, Jeffrey Sorensen, Nithum Thain, and Lucy Vasserman.
\newblock Nuanced metrics for measuring unintended bias with real data for text
  classification.
\newblock In {\em Companion Proceedings of The 2019 World Wide Web Conference},
  2019.

\bibitem{bousquet2003introduction}
Olivier Bousquet, St{\'e}phane Boucheron, and G{\'a}bor Lugosi.
\newblock Introduction to statistical learning theory.
\newblock In {\em Summer School on Machine Learning}, pages 169--207. Springer,
  2003.

\bibitem{carlini2019evaluating}
Nicholas Carlini, Anish Athalye, Nicolas Papernot, Wieland Brendel, Jonas
  Rauber, Dimitris Tsipras, Ian Goodfellow, Aleksander Madry, and Alexey
  Kurakin.
\newblock On evaluating adversarial robustness.
\newblock {\em arXiv preprint arXiv:1902.06705}, 2019.

\bibitem{chollet2015keras}
Francois Chollet et~al.
\newblock Keras, 2015.

\bibitem{christie2018functional}
Gordon Christie, Neil Fendley, James Wilson, and Ryan Mukherjee.
\newblock Functional map of the world.
\newblock In {\em Proceedings of the IEEE Conference on Computer Vision and
  Pattern Recognition}, 2018.

\bibitem{cucurull2019context}
Guillem Cucurull, Perouz Taslakian, and David Vazquez.
\newblock Context-aware visual compatibility prediction.
\newblock In {\em Proceedings of the IEEE/CVF Conference on Computer Vision and
  Pattern Recognition}, pages 12617--12626, 2019.

\bibitem{daum2007frustratingly}
Hal Daum{\'e}~III.
\newblock Frustratingly easy domain adaptation.
\newblock In {\em Proc. of the 45th Annual Meeting of the Association of
  Computational Linguistics, 2007}, pages 256--263, 2007.

\bibitem{daume2006domain}
Hal Daum{\'e}~III and Daniel Marcu.
\newblock Domain adaptation for statistical classifiers.
\newblock {\em Journal of artificial Intelligence research}, 26:101--126, 2006.

\bibitem{david2020global}
Etienne David, Simon Madec, Pouria Sadeghi-Tehran, Helge Aasen, Bangyou Zheng,
  Shouyang Liu, Norbert Kirchgessner, Goro Ishikawa, Koichi Nagasawa,
  Minhajul~A Badhon, Curtis Pozniak, Benoit de Solan, Andreas Hund, Scott~C.
  Chapman, Frederic Baret, Ian Stavness, and Wei Guo.
\newblock Global wheat head detection (gwhd) dataset: a large and diverse
  dataset of high-resolution rgb-labelled images to develop and benchmark wheat
  head detection methods.
\newblock {\em Plant Phenomics}, 2020, 2020.

\bibitem{delage2010distributionally}
Erick Delage and Yinyu Ye.
\newblock Distributionally robust optimization under moment uncertainty with
  application to data-driven problems.
\newblock {\em Operations research}, 58(3):595--612, 2010.

\bibitem{deng2009imagenet}
Jia Deng, Wei Dong, Richard Socher, Li-Jia Li, Kai Li, and Li Fei-Fei.
\newblock Imagenet: A large-scale hierarchical image database.
\newblock In {\em 2009 IEEE conference on computer vision and pattern
  recognition}, pages 248--255. Ieee, 2009.

\bibitem{devlin2018bert}
Jacob Devlin, Ming-Wei Chang, Kenton Lee, and Kristina Toutanova.
\newblock Bert: Pre-training of deep bidirectional transformers for language
  understanding.
\newblock {\em arXiv preprint arXiv:1810.04805}, 2018.

\bibitem{donahue2013semi}
Jeff Donahue, Judy Hoffman, Erik Rodner, Kate Saenko, and Trevor Darrell.
\newblock Semi-supervised domain adaptation with instance constraints.
\newblock In {\em Proceedings of the IEEE conference on computer vision and
  pattern recognition}, pages 668--675, 2013.

\bibitem{elwell2011incremental}
Ryan Elwell and Robi Polikar.
\newblock Incremental learning of concept drift in nonstationary environments.
\newblock {\em IEEE Transactions on Neural Networks}, 22(10):1517--1531, 2011.

\bibitem{engstrom2019exploring}
Logan Engstrom, Brandon Tran, Dimitris Tsipras, Ludwig Schmidt, and Aleksander
  Madry.
\newblock Exploring the landscape of spatial robustness.
\newblock In {\em International conference on machine learning}, pages
  1802--1811. PMLR, 2019.

\bibitem{fang2013unbiased}
Chen Fang, Ye Xu, and Daniel~N Rockmore.
\newblock Unbiased metric learning: On the utilization of multiple datasets and
  web images for softening bias.
\newblock In {\em Proceedings of the IEEE International Conference on Computer
  Vision}, pages 1657--1664, 2013.

\bibitem{fawzi2015manitest}
Alhussein Fawzi and Pascal Frossard.
\newblock Manitest: Are classifiers really invariant?
\newblock {\em arXiv preprint arXiv:1507.06535}, 2015.

\bibitem{fernando2013unsupervised}
Basura Fernando, Amaury Habrard, Marc Sebban, and Tinne Tuytelaars.
\newblock Unsupervised visual domain adaptation using subspace alignment.
\newblock In {\em Proceedings of the IEEE international conference on computer
  vision}, pages 2960--2967, 2013.

\bibitem{frias2014online}
Isvani Frias-Blanco, Jos{\'e} del Campo-{\'A}vila, Gonzalo Ramos-Jimenez,
  Rafael Morales-Bueno, Agustin Ortiz-Diaz, and Yail{\'e} Caballero-Mota.
\newblock Online and non-parametric drift detection methods based on
  hoeffding’s bounds.
\newblock {\em IEEE Transactions on Knowledge and Data Engineering},
  27(3):810--823, 2014.

\bibitem{doi:10.1080/14786440109462720}
Karl~Pearson F.R.S.
\newblock Liii. on lines and planes of closest fit to systems of points in
  space.
\newblock {\em The London, Edinburgh, and Dublin Philosophical Magazine and
  Journal of Science}, 2(11):559--572, 1901.

\bibitem{gama2006learning}
Joao Gama and Gladys Castillo.
\newblock Learning with local drift detection.
\newblock In {\em Advanced Data Mining and Applications: Second International
  Conference, ADMA 2006, Xi’an, China, August 14-16, 2006 Proceedings 2},
  pages 42--55. Springer, 2006.

\bibitem{gama2004learning}
Joao Gama, Pedro Medas, Gladys Castillo, and Pedro Rodrigues.
\newblock Learning with drift detection.
\newblock In {\em Advances in Artificial Intelligence--SBIA 2004: 17th
  Brazilian Symposium on Artificial Intelligence, Sao Luis, Maranhao, Brazil,
  September 29-Ocotber 1, 2004. Proceedings 17}, pages 286--295. Springer,
  2004.

\bibitem{gama2003accurate}
Joao Gama, Ricardo Rocha, and Pedro Medas.
\newblock Accurate decision trees for mining high-speed data streams.
\newblock In {\em Proceedings of the ninth ACM SIGKDD international conference
  on Knowledge discovery and data mining}, pages 523--528, 2003.

\bibitem{gama2014survey}
Jo{\~a}o Gama, Indr{\.e} {\v{Z}}liobait{\.e}, Albert Bifet, Mykola Pechenizkiy,
  and Abdelhamid Bouchachia.
\newblock A survey on concept drift adaptation.
\newblock {\em ACM computing surveys (CSUR)}, 46(4):1--37, 2014.

\bibitem{ganin2015unsupervised}
Yaroslav Ganin and Victor Lempitsky.
\newblock Unsupervised domain adaptation by backpropagation.
\newblock In {\em International conference on machine learning}, pages
  1180--1189. PMLR, 2015.

\bibitem{gebru2021datasheets}
Timnit Gebru, Jamie Morgenstern, Briana Vecchione, Jennifer~Wortman Vaughan,
  Hanna Wallach, Hal~Daum{\'e} Iii, and Kate Crawford.
\newblock Datasheets for datasets.
\newblock {\em Communications of the ACM}, 64(12):86--92, 2021.

\bibitem{geirhos2018generalisation}
Robert Geirhos, Carlos~RM Temme, Jonas Rauber, Heiko~H Sch{\"u}tt, Matthias
  Bethge, and Felix~A Wichmann.
\newblock Generalisation in humans and deep neural networks.
\newblock {\em Advances in neural information processing systems}, 31, 2018.

\bibitem{goh2010distributionally}
Joel Goh and Melvyn Sim.
\newblock Distributionally robust optimization and its tractable
  approximations.
\newblock {\em Operations research}, 58(4-part-1):902--917, 2010.

\bibitem{gomes2017adaptive}
Heitor~M Gomes, Albert Bifet, Jesse Read, Jean~Paul Barddal, Fabr{\'\i}cio
  Enembreck, Bernhard Pfharinger, Geoff Holmes, and Talel Abdessalem.
\newblock Adaptive random forests for evolving data stream classification.
\newblock {\em Machine Learning}, 106:1469--1495, 2017.

\bibitem{goodfellow2014explaining}
Ian~J Goodfellow, Jonathon Shlens, and Christian Szegedy.
\newblock Explaining and harnessing adversarial examples.
\newblock {\em arXiv preprint arXiv:1412.6572}, 2014.

\bibitem{gretton2009covariate}
Arthur Gretton, Alex Smola, Jiayuan Huang, Marcel Schmittfull, Karsten
  Borgwardt, and Bernhard Sch{\"o}lkopf.
\newblock Covariate shift by kernel mean matching.
\newblock {\em Dataset shift in machine learning}, 3(4):5, 2009.

\bibitem{heinze2018invariant}
Christina Heinze-Deml, Jonas Peters, and Nicolai Meinshausen.
\newblock Invariant causal prediction for nonlinear models.
\newblock {\em Journal of Causal Inference}, 6(2), 2018.

\bibitem{hendrycks2021many}
Dan Hendrycks, Steven Basart, Norman Mu, Saurav Kadavath, Frank Wang, Evan
  Dorundo, Rahul Desai, Tyler Zhu, Samyak Parajuli, Mike Guo, et~al.
\newblock The many faces of robustness: A critical analysis of
  out-of-distribution generalization.
\newblock In {\em Proceedings of the IEEE/CVF International Conference on
  Computer Vision}, pages 8340--8349, 2021.

\bibitem{hendrycks2019benchmarking}
Dan Hendrycks and Thomas Dietterich.
\newblock Benchmarking neural network robustness to common corruptions and
  perturbations.
\newblock {\em arXiv preprint arXiv:1903.12261}, 2019.

\bibitem{hendrycks2021natural}
Dan Hendrycks, Kevin Zhao, Steven Basart, Jacob Steinhardt, and Dawn Song.
\newblock Natural adversarial examples.
\newblock In {\em Proceedings of the IEEE/CVF Conference on Computer Vision and
  Pattern Recognition}, pages 15262--15271, 2021.

\bibitem{higgins2017beta}
Irina Higgins, Loic Matthey, Arka Pal, Christopher Burgess, Xavier Glorot,
  Matthew Botvinick, Shakir Mohamed, and Alexander Lerchner.
\newblock beta-vae: Learning basic visual concepts with a constrained
  variational framework.
\newblock In {\em International conference on learning representations}, 2017.

\bibitem{hu2020open}
Weihua Hu, Matthias Fey, Marinka Zitnik, Yuxiao Dong, Hongyu Ren, Bowen Liu,
  Michele Catasta, and Jure Leskovec.
\newblock Open graph benchmark: Datasets for machine learning on graphs.
\newblock In {\em Advances in Neural Information Processing Systems (NeurIPS)},
  2020.

\bibitem{huang2006correcting}
Jiayuan Huang, Arthur Gretton, Karsten Borgwardt, Bernhard Sch{\"o}lkopf, and
  Alex Smola.
\newblock Correcting sample selection bias by unlabeled data.
\newblock {\em Advances in neural information processing systems}, 19, 2006.

\bibitem{hulten2001mining}
Geoff Hulten, Laurie Spencer, and Pedro Domingos.
\newblock Mining time-changing data streams.
\newblock In {\em Proceedings of the seventh ACM SIGKDD international
  conference on Knowledge discovery and data mining}, pages 97--106, 2001.

\bibitem{jamal2020rethinking}
Muhammad~Abdullah Jamal, Matthew Brown, Ming-Hsuan Yang, Liqiang Wang, and
  Boqing Gong.
\newblock Rethinking class-balanced methods for long-tailed visual recognition
  from a domain adaptation perspective.
\newblock In {\em Proceedings of the IEEE/CVF Conference on Computer Vision and
  Pattern Recognition}, pages 7610--7619, 2020.

\bibitem{jia2020fashionpedia}
Menglin Jia, Mengyun Shi, Mikhail Sirotenko, Yin Cui, Claire Cardie, Bharath
  Hariharan, Hartwig Adam, and Serge Belongie.
\newblock Fashionpedia: Ontology, segmentation, and an attribute localization
  dataset.
\newblock In {\em Computer Vision--ECCV 2020: 16th European Conference,
  Glasgow, UK, August 23--28, 2020, Proceedings, Part I 16}, pages 316--332.
  Springer, 2020.

\bibitem{kanamori2009least}
Takafumi Kanamori, Shohei Hido, and Masashi Sugiyama.
\newblock A least-squares approach to direct importance estimation.
\newblock {\em The Journal of Machine Learning Research}, 10:1391--1445, 2009.

\bibitem{kang2019contrastive}
Guoliang Kang, Lu Jiang, Yi Yang, and Alexander~G Hauptmann.
\newblock Contrastive adaptation network for unsupervised domain adaptation.
\newblock In {\em Proceedings of the IEEE/CVF conference on computer vision and
  pattern recognition}, pages 4893--4902, 2019.

\bibitem{kemker2018measuring}
Ronald Kemker, Marc McClure, Angelina Abitino, Tyler Hayes, and Christopher
  Kanan.
\newblock Measuring catastrophic forgetting in neural networks.
\newblock In {\em Proceedings of the AAAI conference on artificial
  intelligence}, volume~32, 2018.

\bibitem{khan2022transformers}
Salman Khan, Muzammal Naseer, Munawar Hayat, Syed~Waqas Zamir, Fahad~Shahbaz
  Khan, and Mubarak Shah.
\newblock Transformers in vision: A survey.
\newblock {\em ACM computing surveys (CSUR)}, 54(10s):1--41, 2022.

\bibitem{kiefer1952stochastic}
Jack Kiefer and Jacob Wolfowitz.
\newblock Stochastic estimation of the maximum of a regression function.
\newblock {\em The Annals of Mathematical Statistics}, pages 462--466, 1952.

\bibitem{kim2018disentangling}
Hyunjik Kim and Andriy Mnih.
\newblock Disentangling by factorising.
\newblock In {\em International Conference on Machine Learning}, pages
  2649--2658. PMLR, 2018.

\bibitem{kim2021setvae}
Jinwoo Kim, Jaehoon Yoo, Juho Lee, and Seunghoon Hong.
\newblock Setvae: Learning hierarchical composition for generative modeling of
  set-structured data.
\newblock In {\em Proceedings of the IEEE/CVF Conference on Computer Vision and
  Pattern Recognition}, pages 15059--15068, 2021.

\bibitem{kimura2023generalization}
Masanari Kimura.
\newblock Generalization bounds for set-to-set matching with negative sampling.
\newblock {\em arXiv preprint arXiv:2302.12991}, 2023.

\bibitem{kimura2022information}
Masanari Kimura and Hideitsu Hino.
\newblock Information geometrically generalized covariate shift adaptation.
\newblock {\em Neural Computation}, 34(9):1944--1977, 2022.

\bibitem{kimura2020batch}
Masanari Kimura, Kei Wakabayashi, and Atsuyuki Morishima.
\newblock Batch prioritization of data labeling tasks for training classifiers.
\newblock In {\em Proceedings of the AAAI Conference on Human Computation and
  Crowdsourcing}, volume~8, pages 163--167, 2020.

\bibitem{kirkpatrick2017overcoming}
James Kirkpatrick, Razvan Pascanu, Neil Rabinowitz, Joel Veness, Guillaume
  Desjardins, Andrei~A Rusu, Kieran Milan, John Quan, Tiago Ramalho, Agnieszka
  Grabska-Barwinska, et~al.
\newblock Overcoming catastrophic forgetting in neural networks.
\newblock {\em Proceedings of the national academy of sciences},
  114(13):3521--3526, 2017.

\bibitem{koh2021wilds}
Pang~Wei Koh, Shiori Sagawa, Henrik Marklund, Sang~Michael Xie, Marvin Zhang,
  Akshay Balsubramani, Weihua Hu, Michihiro Yasunaga, Richard~Lanas Phillips,
  Irena Gao, et~al.
\newblock Wilds: A benchmark of in-the-wild distribution shifts.
\newblock In {\em International Conference on Machine Learning}, pages
  5637--5664. PMLR, 2021.

\bibitem{kolter2007dynamic}
J~Zico Kolter and Marcus~A Maloof.
\newblock Dynamic weighted majority: An ensemble method for drifting concepts.
\newblock {\em The Journal of Machine Learning Research}, 8:2755--2790, 2007.

\bibitem{kumar2022calibrated}
Ananya Kumar, Tengyu Ma, Percy Liang, and Aditi Raghunathan.
\newblock Calibrated ensembles can mitigate accuracy tradeoffs under
  distribution shift.
\newblock In {\em Uncertainty in Artificial Intelligence}, pages 1041--1051.
  PMLR, 2022.

\bibitem{kumar2010co}
Abhishek Kumar, Avishek Saha, and Hal Daume.
\newblock Co-regularization based semi-supervised domain adaptation.
\newblock {\em Advances in neural information processing systems}, 23, 2010.

\bibitem{kurakin2018adversarial}
Alexey Kurakin, Ian~J Goodfellow, and Samy Bengio.
\newblock Adversarial examples in the physical world.
\newblock In {\em Artificial intelligence safety and security}, pages 99--112.
  Chapman and Hall/CRC, 2018.

\bibitem{lee2019set}
Juho Lee, Yoonho Lee, Jungtaek Kim, Adam Kosiorek, Seungjin Choi, and Yee~Whye
  Teh.
\newblock Set transformer: A framework for attention-based
  permutation-invariant neural networks.
\newblock In {\em International Conference on Machine Learning}, pages
  3744--3753. PMLR, 2019.

\bibitem{li2017deeper}
Da Li, Yongxin Yang, Yi-Zhe Song, and Timothy~M Hospedales.
\newblock Deeper, broader and artier domain generalization.
\newblock In {\em Proceedings of the IEEE international conference on computer
  vision}, pages 5542--5550, 2017.

\bibitem{li2018domain}
Haoliang Li, Sinno~Jialin Pan, Shiqi Wang, and Alex~C Kot.
\newblock Domain generalization with adversarial feature learning.
\newblock In {\em Proceedings of the IEEE conference on computer vision and
  pattern recognition}, pages 5400--5409, 2018.

\bibitem{li2022out}
Haoyang Li, Xin Wang, Ziwei Zhang, and Wenwu Zhu.
\newblock Out-of-distribution generalization on graphs: A survey.
\newblock {\em arXiv preprint arXiv:2202.07987}, 2022.

\bibitem{li2020model}
Rui Li, Qianfen Jiao, Wenming Cao, Hau-San Wong, and Si Wu.
\newblock Model adaptation: Unsupervised domain adaptation without source data.
\newblock In {\em Proceedings of the IEEE/CVF conference on computer vision and
  pattern recognition}, pages 9641--9650, 2020.

\bibitem{liu2017regional}
Anjin Liu, Yiliao Song, Guangquan Zhang, and Jie Lu.
\newblock Regional concept drift detection and density synchronized drift
  adaptation.
\newblock In {\em IJCAI International Joint Conference on Artificial
  Intelligence}, 2017.

\bibitem{liu2017fuzzy}
Anjin Liu, Guangquan Zhang, and Jie Lu.
\newblock Fuzzy time windowing for gradual concept drift adaptation.
\newblock In {\em 2017 IEEE International Conference on Fuzzy Systems
  (FUZZ-IEEE)}, pages 1--6. IEEE, 2017.

\bibitem{liuLQWTcvpr16DeepFashion}
Ziwei Liu, Ping Luo, Shi Qiu, Xiaogang Wang, and Xiaoou Tang.
\newblock Deepfashion: Powering robust clothes recognition and retrieval with
  rich annotations.
\newblock In {\em Proceedings of IEEE Conference on Computer Vision and Pattern
  Recognition (CVPR)}, June 2016.

\bibitem{long2016unsupervised}
Mingsheng Long, Han Zhu, Jianmin Wang, and Michael~I Jordan.
\newblock Unsupervised domain adaptation with residual transfer networks.
\newblock {\em Advances in neural information processing systems}, 29, 2016.

\bibitem{lu2018learning}
Jie Lu, Anjin Liu, Fan Dong, Feng Gu, Joao Gama, and Guangquan Zhang.
\newblock Learning under concept drift: A review.
\newblock {\em IEEE transactions on knowledge and data engineering},
  31(12):2346--2363, 2018.

\bibitem{lu2016concept}
Ning Lu, Jie Lu, Guangquan Zhang, and Ramon~Lopez De~Mantaras.
\newblock A concept drift-tolerant case-base editing technique.
\newblock {\em Artificial Intelligence}, 230:108--133, 2016.

\bibitem{lu2014concept}
Ning Lu, Guangquan Zhang, and Jie Lu.
\newblock Concept drift detection via competence models.
\newblock {\em Artificial Intelligence}, 209:11--28, 2014.

\bibitem{lu2021codexglue}
Shuai Lu, Daya Guo, Shuo Ren, Junjie Huang, Alexey Svyatkovskiy, Ambrosio
  Blanco, Colin Clement, Dawn Drain, Daxin Jiang, Duyu Tang, et~al.
\newblock Codexglue: A machine learning benchmark dataset for code
  understanding and generation.
\newblock {\em arXiv preprint arXiv:2102.04664}, 2021.

\bibitem{manly2000cumulative}
Bryan~FJ Manly and Darryl Mackenzie.
\newblock A cumulative sum type of method for environmental monitoring.
\newblock {\em Environmetrics: The official journal of the International
  Environmetrics Society}, 11(2):151--166, 2000.

\bibitem{miller2021accuracy}
John~P Miller, Rohan Taori, Aditi Raghunathan, Shiori Sagawa, Pang~Wei Koh,
  Vaishaal Shankar, Percy Liang, Yair Carmon, and Ludwig Schmidt.
\newblock Accuracy on the line: on the strong correlation between
  out-of-distribution and in-distribution generalization.
\newblock In {\em International Conference on Machine Learning}, pages
  7721--7735. PMLR, 2021.

\bibitem{moreno2012unifying}
Jose~G Moreno-Torres, Troy Raeder, Roc{\'\i}o Alaiz-Rodr{\'\i}guez, Nitesh~V
  Chawla, and Francisco Herrera.
\newblock A unifying view on dataset shift in classification.
\newblock {\em Pattern recognition}, 45(1):521--530, 2012.

\bibitem{motiian2017unified}
Saeid Motiian, Marco Piccirilli, Donald~A Adjeroh, and Gianfranco Doretto.
\newblock Unified deep supervised domain adaptation and generalization.
\newblock In {\em Proceedings of the IEEE international conference on computer
  vision}, pages 5715--5725, 2017.

\bibitem{mu2019mnist}
Norman Mu and Justin Gilmer.
\newblock Mnist-c: A robustness benchmark for computer vision.
\newblock {\em arXiv preprint arXiv:1906.02337}, 2019.

\bibitem{newell2016stacked}
Alejandro Newell, Kaiyu Yang, and Jia Deng.
\newblock Stacked hourglass networks for human pose estimation.
\newblock In {\em European conference on computer vision}, pages 483--499.
  Springer, 2016.

\bibitem{nguyen2010estimating}
XuanLong Nguyen, Martin~J Wainwright, and Michael~I Jordan.
\newblock Estimating divergence functionals and the likelihood ratio by convex
  risk minimization.
\newblock {\em IEEE Transactions on Information Theory}, 56(11):5847--5861,
  2010.

\bibitem{ni2019justifying}
Jianmo Ni, Jiacheng Li, and Julian McAuley.
\newblock Justifying recommendations using distantly-labeled reviews and
  fine-grained aspects.
\newblock In {\em Proceedings of the 2019 Conference on Empirical Methods in
  Natural Language Processing and the 9th International Joint Conference on
  Natural Language Processing (EMNLP-IJCNLP)}, 2019.

\bibitem{nishida2007detecting}
Kyosuke Nishida and Koichiro Yamauchi.
\newblock Detecting concept drift using statistical testing.
\newblock In {\em Discovery science}, volume 4755, pages 264--269. Springer,
  2007.

\bibitem{ovadia2019can}
Yaniv Ovadia, Emily Fertig, Jie Ren, Zachary Nado, David Sculley, Sebastian
  Nowozin, Joshua Dillon, Balaji Lakshminarayanan, and Jasper Snoek.
\newblock Can you trust your model's uncertainty? evaluating predictive
  uncertainty under dataset shift.
\newblock {\em Advances in neural information processing systems}, 32, 2019.

\bibitem{parisi2019continual}
German~I Parisi, Ronald Kemker, Jose~L Part, Christopher Kanan, and Stefan
  Wermter.
\newblock Continual lifelong learning with neural networks: A review.
\newblock {\em Neural networks}, 113:54--71, 2019.

\bibitem{NEURIPS2019_9015}
Adam Paszke, Sam Gross, Francisco Massa, Adam Lerer, James Bradbury, Gregory
  Chanan, Trevor Killeen, Zeming Lin, Natalia Gimelshein, Luca Antiga, Alban
  Desmaison, Andreas Kopf, Edward Yang, Zachary DeVito, Martin Raison, Alykhan
  Tejani, Sasank Chilamkurthy, Benoit Steiner, Lu Fang, Junjie Bai, and Soumith
  Chintala.
\newblock Pytorch: An imperative style, high-performance deep learning library.
\newblock In {\em Advances in Neural Information Processing Systems 32}, pages
  8024--8035. Curran Associates, Inc., 2019.

\bibitem{patel2015visual}
Vishal~M Patel, Raghuraman Gopalan, Ruonan Li, and Rama Chellappa.
\newblock Visual domain adaptation: A survey of recent advances.
\newblock {\em IEEE signal processing magazine}, 32(3):53--69, 2015.

\bibitem{peng2019moment}
Xingchao Peng, Qinxun Bai, Xide Xia, Zijun Huang, Kate Saenko, and Bo Wang.
\newblock Moment matching for multi-source domain adaptation.
\newblock In {\em Proceedings of the IEEE/CVF international conference on
  computer vision}, pages 1406--1415, 2019.

\bibitem{pfister2019invariant}
Niklas Pfister, Peter B{\"u}hlmann, and Jonas Peters.
\newblock Invariant causal prediction for sequential data.
\newblock {\em Journal of the American Statistical Association},
  114(527):1264--1276, 2019.

\bibitem{prince2004does}
Michael Prince.
\newblock Does active learning work? a review of the research.
\newblock {\em Journal of engineering education}, 93(3):223--231, 2004.

\bibitem{quinonero2008dataset}
Joaquin Quinonero-Candela, Masashi Sugiyama, Anton Schwaighofer, and Neil~D
  Lawrence.
\newblock {\em Dataset shift in machine learning}.
\newblock Mit Press, 2008.

\bibitem{rahimian2019distributionally}
Hamed Rahimian and Sanjay Mehrotra.
\newblock Distributionally robust optimization: A review.
\newblock {\em arXiv preprint arXiv:1908.05659}, 2019.

\bibitem{redko2017theoretical}
Ievgen Redko, Amaury Habrard, and Marc Sebban.
\newblock Theoretical analysis of domain adaptation with optimal transport.
\newblock In {\em Machine Learning and Knowledge Discovery in Databases:
  European Conference, ECML PKDD 2017, Skopje, Macedonia, September 18--22,
  2017, Proceedings, Part II 10}, pages 737--753. Springer, 2017.

\bibitem{redko2019advances}
Ievgen Redko, Emilie Morvant, Amaury Habrard, Marc Sebban, and Younes Bennani.
\newblock {\em Advances in domain adaptation theory}.
\newblock Elsevier, 2019.

\bibitem{robbins1951stochastic}
Herbert Robbins and Sutton Monro.
\newblock A stochastic approximation method.
\newblock {\em The annals of mathematical statistics}, pages 400--407, 1951.

\bibitem{rothenhausler2018anchor}
Dominik Rothenh{\"a}usler, Nicolai Meinshausen, Peter B{\"u}hlmann, and Jonas
  Peters.
\newblock Anchor regression: heterogeneous data meets causality.
\newblock {\em arXiv preprint arXiv:1801.06229}, 2018.

\bibitem{rusak2020simple}
Evgenia Rusak, Lukas Schott, Roland~S Zimmermann, Julian Bitterwolf, Oliver
  Bringmann, Matthias Bethge, and Wieland Brendel.
\newblock A simple way to make neural networks robust against diverse image
  corruptions.
\newblock In {\em Computer Vision--ECCV 2020: 16th European Conference,
  Glasgow, UK, August 23--28, 2020, Proceedings, Part III 16}, pages 53--69.
  Springer, 2020.

\bibitem{saito2019semi}
Kuniaki Saito, Donghyun Kim, Stan Sclaroff, Trevor Darrell, and Kate Saenko.
\newblock Semi-supervised domain adaptation via minimax entropy.
\newblock In {\em Proceedings of the IEEE/CVF international conference on
  computer vision}, pages 8050--8058, 2019.

\bibitem{saito2020exchangeable}
Yuki Saito, Takuma Nakamura, Hirotaka Hachiya, and Kenji Fukumizu.
\newblock Exchangeable deep neural networks for set-to-set matching and
  learning.
\newblock In {\em Computer Vision--ECCV 2020: 16th European Conference,
  Glasgow, UK, August 23--28, 2020, Proceedings, Part XVII}, pages 626--646.
  Springer, 2020.

\bibitem{saunders2022domain}
Danielle Saunders.
\newblock Domain adaptation and multi-domain adaptation for neural machine
  translation: A survey.
\newblock {\em Journal of Artificial Intelligence Research}, 75:351--424, 2022.

\bibitem{scholkopf2021toward}
Bernhard Sch{\"o}lkopf, Francesco Locatello, Stefan Bauer, Nan~Rosemary Ke, Nal
  Kalchbrenner, Anirudh Goyal, and Yoshua Bengio.
\newblock Toward causal representation learning.
\newblock {\em Proceedings of the IEEE}, 109(5):612--634, 2021.

\bibitem{sener2016learning}
Ozan Sener, Hyun~Oh Song, Ashutosh Saxena, and Silvio Savarese.
\newblock Learning transferrable representations for unsupervised domain
  adaptation.
\newblock {\em Advances in neural information processing systems}, 29, 2016.

\bibitem{settles2009active}
Burr Settles.
\newblock Active learning literature survey.
\newblock 2009.

\bibitem{shao2014prototype}
Junming Shao, Zahra Ahmadi, and Stefan Kramer.
\newblock Prototype-based learning on concept-drifting data streams.
\newblock In {\em Proceedings of the 20th ACM SIGKDD international conference
  on Knowledge discovery and data mining}, pages 412--421, 2014.

\bibitem{shen2020stable}
Zheyan Shen, Peng Cui, Jiashuo Liu, Tong Zhang, Bo Li, and Zhitang Chen.
\newblock Stable learning via differentiated variable decorrelation.
\newblock In {\em Proceedings of the 26th acm sigkdd international conference
  on knowledge discovery \& data mining}, pages 2185--2193, 2020.

\bibitem{shen2021towards}
Zheyan Shen, Jiashuo Liu, Yue He, Xingxuan Zhang, Renzhe Xu, Han Yu, and Peng
  Cui.
\newblock Towards out-of-distribution generalization: A survey.
\newblock {\em arXiv preprint arXiv:2108.13624}, 2021.

\bibitem{shimodaira2000improving}
Hidetoshi Shimodaira.
\newblock Improving predictive inference under covariate shift by weighting the
  log-likelihood function.
\newblock {\em Journal of statistical planning and inference}, 90(2):227--244,
  2000.

\bibitem{soelch2019deep}
Maximilian Soelch, Adnan Akhundov, Patrick van~der Smagt, and Justin Bayer.
\newblock On deep set learning and the choice of aggregations.
\newblock In {\em Artificial Neural Networks and Machine Learning--ICANN 2019:
  Theoretical Neural Computation: 28th International Conference on Artificial
  Neural Networks, Munich, Germany, September 17--19, 2019, Proceedings, Part I
  28}, pages 444--457. Springer, 2019.

\bibitem{song2007statistical}
Xiuyao Song, Mingxi Wu, Christopher Jermaine, and Sanjay Ranka.
\newblock Statistical change detection for multi-dimensional data.
\newblock In {\em Proceedings of the 13th ACM SIGKDD international conference
  on Knowledge discovery and data mining}, pages 667--676, 2007.

\bibitem{subbaswamy2021evaluating}
Adarsh Subbaswamy, Roy Adams, and Suchi Saria.
\newblock Evaluating model robustness and stability to dataset shift.
\newblock In {\em International Conference on Artificial Intelligence and
  Statistics}, pages 2611--2619. PMLR, 2021.

\bibitem{sugiyama2007direct}
Masashi Sugiyama, Shinichi Nakajima, Hisashi Kashima, Paul Buenau, and Motoaki
  Kawanabe.
\newblock Direct importance estimation with model selection and its application
  to covariate shift adaptation.
\newblock {\em Advances in neural information processing systems}, 20, 2007.

\bibitem{szegedy2013intriguing}
Christian Szegedy, Wojciech Zaremba, Ilya Sutskever, Joan Bruna, Dumitru Erhan,
  Ian Goodfellow, and Rob Fergus.
\newblock Intriguing properties of neural networks.
\newblock {\em arXiv preprint arXiv:1312.6199}, 2013.

\bibitem{taori2020measuring}
Rohan Taori, Achal Dave, Vaishaal Shankar, Nicholas Carlini, Benjamin Recht,
  and Ludwig Schmidt.
\newblock Measuring robustness to natural distribution shifts in image
  classification.
\newblock {\em Advances in Neural Information Processing Systems},
  33:18583--18599, 2020.

\bibitem{tay2022efficient}
Yi Tay, Mostafa Dehghani, Dara Bahri, and Donald Metzler.
\newblock Efficient transformers: A survey.
\newblock {\em ACM Computing Surveys}, 55(6):1--28, 2022.

\bibitem{taylor2019rxrx1}
J. Taylor, B. Earnshaw, B. Mabey, M. Victors, and J. Yosinski.
\newblock Rxrx1: An image set for cellular morphological variation across many
  experimental batches.
\newblock In {\em International Conference on Learning Representations (ICLR)},
  2019.

\bibitem{torrey2010transfer}
Lisa Torrey and Jude Shavlik.
\newblock Transfer learning.
\newblock In {\em Handbook of research on machine learning applications and
  trends: algorithms, methods, and techniques}, pages 242--264. IGI global,
  2010.

\bibitem{tsymbal2004problem}
Alexey Tsymbal.
\newblock The problem of concept drift: definitions and related work.
\newblock {\em Computer Science Department, Trinity College Dublin}, 106(2):58,
  2004.

\bibitem{tzeng2015simultaneous}
Eric Tzeng, Judy Hoffman, Trevor Darrell, and Kate Saenko.
\newblock Simultaneous deep transfer across domains and tasks.
\newblock In {\em Proceedings of the IEEE international conference on computer
  vision}, pages 4068--4076, 2015.

\bibitem{van2008visualizing}
Laurens Van~der Maaten and Geoffrey Hinton.
\newblock Visualizing data using t-sne.
\newblock {\em Journal of machine learning research}, 9(11), 2008.

\bibitem{vapnik2013nature}
Vladimir Vapnik.
\newblock {\em The nature of statistical learning theory}.
\newblock Springer science \& business media, 2013.

\bibitem{vapnik1999overview}
Vladimir~N Vapnik.
\newblock An overview of statistical learning theory.
\newblock {\em IEEE transactions on neural networks}, 10(5):988--999, 1999.

\bibitem{vasileva2018learning}
Mariya~I Vasileva, Bryan~A Plummer, Krishna Dusad, Shreya Rajpal, Ranjitha
  Kumar, and David Forsyth.
\newblock Learning type-aware embeddings for fashion compatibility.
\newblock In {\em Proceedings of the European conference on computer vision
  (ECCV)}, pages 390--405, 2018.

\bibitem{vaswani2017attention}
Ashish Vaswani, Noam Shazeer, Niki Parmar, Jakob Uszkoreit, Llion Jones,
  Aidan~N Gomez, {\L}ukasz Kaiser, and Illia Polosukhin.
\newblock Attention is all you need.
\newblock In {\em Advances in neural information processing systems}, pages
  5998--6008, 2017.

\bibitem{venkateswara2017deep}
Hemanth Venkateswara, Jose Eusebio, Shayok Chakraborty, and Sethuraman
  Panchanathan.
\newblock Deep hashing network for unsupervised domain adaptation.
\newblock In {\em Proceedings of the IEEE conference on computer vision and
  pattern recognition}, pages 5018--5027, 2017.

\bibitem{wagstaff2019limitations}
Edward Wagstaff, Fabian Fuchs, Martin Engelcke, Ingmar Posner, and Michael~A
  Osborne.
\newblock On the limitations of representing functions on sets.
\newblock In {\em International Conference on Machine Learning}, pages
  6487--6494. PMLR, 2019.

\bibitem{wagstaff2022universal}
Edward Wagstaff, Fabian~B Fuchs, Martin Engelcke, Michael~A Osborne, and Ingmar
  Posner.
\newblock Universal approximation of functions on sets.
\newblock {\em Journal of Machine Learning Research}, 23(151):1--56, 2022.

\bibitem{wald2021calibration}
Yoav Wald, Amir Feder, Daniel Greenfeld, and Uri Shalit.
\newblock On calibration and out-of-domain generalization.
\newblock {\em Advances in neural information processing systems},
  34:2215--2227, 2021.

\bibitem{wang2015concept}
Heng Wang and Zubin Abraham.
\newblock Concept drift detection for streaming data.
\newblock In {\em 2015 international joint conference on neural networks
  (IJCNN)}, pages 1--9. IEEE, 2015.

\bibitem{wang2018deep}
Mei Wang and Weihong Deng.
\newblock Deep visual domain adaptation: A survey.
\newblock {\em Neurocomputing}, 312:135--153, 2018.

\bibitem{weiss2016survey}
Karl Weiss, Taghi~M Khoshgoftaar, and DingDing Wang.
\newblock A survey of transfer learning.
\newblock {\em Journal of Big data}, 3(1):1--40, 2016.

\bibitem{guo2019fashion}
Hui Wu, Yupeng Gao, Xiaoxiao Guo, Ziad Al-Halah, Steven Rennie, Kristen
  Grauman, and Rogerio Feris.
\newblock The fashion iq dataset: Retrieving images by combining side
  information and relative natural language feedback.
\newblock {\em CVPR}, 2021.

\bibitem{xiao2018generating}
Chaowei Xiao, Bo Li, Jun-Yan Zhu, Warren He, Mingyan Liu, and Dawn Song.
\newblock Generating adversarial examples with adversarial networks.
\newblock {\em arXiv preprint arXiv:1801.02610}, 2018.

\bibitem{xiao2017/online}
Han Xiao, Kashif Rasul, and Roland Vollgraf.
\newblock Fashion-mnist: a novel image dataset for benchmarking machine
  learning algorithms, 2017.

\bibitem{xu2020robust}
Zhenlin Xu, Deyi Liu, Junlin Yang, Colin Raffel, and Marc Niethammer.
\newblock Robust and generalizable visual representation learning via random
  convolutions.
\newblock {\em arXiv preprint arXiv:2007.13003}, 2020.

\bibitem{yamada2013relative}
Makoto Yamada, Taiji Suzuki, Takafumi Kanamori, Hirotaka Hachiya, and Masashi
  Sugiyama.
\newblock Relative density-ratio estimation for robust distribution comparison.
\newblock {\em Neural computation}, 25(5):1324--1370, 2013.

\bibitem{yang2012incrementally}
Hang Yang and Simon Fong.
\newblock Incrementally optimized decision tree for noisy big data.
\newblock In {\em Proceedings of the 1st International Workshop on Big Data,
  Streams and Heterogeneous Source Mining: Algorithms, Systems, Programming
  Models and Applications}, pages 36--44, 2012.

\bibitem{yang2021generalized}
Jingkang Yang, Kaiyang Zhou, Yixuan Li, and Ziwei Liu.
\newblock Generalized out-of-distribution detection: A survey.
\newblock {\em arXiv preprint arXiv:2110.11334}, 2021.

\bibitem{yang2020curriculum}
Luyu Yang, Yogesh Balaji, Ser-Nam Lim, and Abhinav Shrivastava.
\newblock Curriculum manager for source selection in multi-source domain
  adaptation.
\newblock In {\em Computer Vision--ECCV 2020: 16th European Conference,
  Glasgow, UK, August 23--28, 2020, Proceedings, Part XIV 16}, pages 608--624.
  Springer, 2020.

\bibitem{yang2021causalvae}
Mengyue Yang, Furui Liu, Zhitang Chen, Xinwei Shen, Jianye Hao, and Jun Wang.
\newblock Causalvae: Disentangled representation learning via neural structural
  causal models.
\newblock In {\em Proceedings of the IEEE/CVF conference on computer vision and
  pattern recognition}, pages 9593--9602, 2021.

\bibitem{yao2015semi}
Ting Yao, Yingwei Pan, Chong-Wah Ngo, Houqiang Li, and Tao Mei.
\newblock Semi-supervised domain adaptation with subspace learning for visual
  recognition.
\newblock In {\em Proceedings of the IEEE conference on Computer Vision and
  Pattern Recognition}, pages 2142--2150, 2015.

\bibitem{ye2021towards}
Haotian Ye, Chuanlong Xie, Tianle Cai, Ruichen Li, Zhenguo Li, and Liwei Wang.
\newblock Towards a theoretical framework of out-of-distribution
  generalization.
\newblock {\em Advances in Neural Information Processing Systems},
  34:23519--23531, 2021.

\bibitem{yeh2020using}
Christopher Yeh, Anthony Perez, Anne Driscoll, George Azzari, Zhongyi Tang,
  David Lobell, Stefano Ermon, and Marshall Burke.
\newblock Using publicly available satellite imagery and deep learning to
  understand economic well-being in africa.
\newblock {\em Nature Communications}, 2020.

\bibitem{yu2018hard}
Rui Yu, Zhiyong Dou, Song Bai, Zhaoxiang Zhang, Yongchao Xu, and Xiang Bai.
\newblock Hard-aware point-to-set deep metric for person re-identification.
\newblock In {\em Proceedings of the European conference on computer vision
  (ECCV)}, pages 188--204, 2018.

\bibitem{zaheer2017deep}
Manzil Zaheer, Satwik Kottur, Siamak Ravanbakhsh, Barnabas Poczos, Russ~R
  Salakhutdinov, and Alexander~J Smola.
\newblock Deep sets.
\newblock {\em Advances in neural information processing systems}, 30, 2017.

\bibitem{zhang2013domain}
Kun Zhang, Bernhard Sch{\"o}lkopf, Krikamol Muandet, and Zhikun Wang.
\newblock Domain adaptation under target and conditional shift.
\newblock In {\em International Conference on Machine Learning}, pages
  819--827. PMLR, 2013.

\bibitem{zhang2021deep}
Xingxuan Zhang, Peng Cui, Renzhe Xu, Linjun Zhou, Yue He, and Zheyan Shen.
\newblock Deep stable learning for out-of-distribution generalization.
\newblock In {\em Proceedings of the IEEE/CVF Conference on Computer Vision and
  Pattern Recognition}, pages 5372--5382, 2021.

\bibitem{zhang2017three}
Yuhong Zhang, Guang Chu, Peipei Li, Xuegang Hu, and Xindong Wu.
\newblock Three-layer concept drifting detection in text data streams.
\newblock {\em Neurocomputing}, 260:393--403, 2017.

\bibitem{zhang2019deep}
Yan Zhang, Jonathon Hare, and Adam Prugel-Bennett.
\newblock Deep set prediction networks.
\newblock {\em Advances in Neural Information Processing Systems}, 32, 2019.

\bibitem{zhao2019multi}
Sicheng Zhao, Bo Li, Xiangyu Yue, Yang Gu, Pengfei Xu, Runbo Hu, Hua Chai, and
  Kurt Keutzer.
\newblock Multi-source domain adaptation for semantic segmentation.
\newblock {\em Advances in Neural Information Processing Systems}, 32, 2019.

\bibitem{zhao2020multi}
Sicheng Zhao, Guangzhi Wang, Shanghang Zhang, Yang Gu, Yaxian Li, Zhichao Song,
  Pengfei Xu, Runbo Hu, Hua Chai, and Kurt Keutzer.
\newblock Multi-source distilling domain adaptation.
\newblock In {\em Proceedings of the AAAI Conference on Artificial
  Intelligence}, volume~34, pages 12975--12983, 2020.

\bibitem{zhou2022domain}
Kaiyang Zhou, Ziwei Liu, Yu Qiao, Tao Xiang, and Chen~Change Loy.
\newblock Domain generalization: A survey.
\newblock {\em IEEE Transactions on Pattern Analysis and Machine Intelligence},
  2022.

\bibitem{zhou2021domain}
Kaiyang Zhou, Yongxin Yang, Yu Qiao, and Tao Xiang.
\newblock Domain generalization with mixstyle.
\newblock {\em arXiv preprint arXiv:2104.02008}, 2021.

\bibitem{zhuang2020comprehensive}
Fuzhen Zhuang, Zhiyuan Qi, Keyu Duan, Dongbo Xi, Yongchun Zhu, Hengshu Zhu, Hui
  Xiong, and Qing He.
\newblock A comprehensive survey on transfer learning.
\newblock {\em Proceedings of the IEEE}, 109(1):43--76, 2020.

\end{thebibliography}
\end{document}